\def\math#1{$#1$}
\def\v#1{{\mathbf #1}}
\def\frac#1#2{{#1\over #2}}
\def\x{{\mathbf x}}
\def\y{{\mathbf y}}
\def\z{{\mathbf z}}
\def\a{{\mathbf a}}
\def\b{{\mathbf b}}
\def\dotfil{\leaders\hbox to 1.5mm{.}\hfill}
\newcounter{rmnum}
\def\RN#1{\setcounter{rmnum}{#1}\uppercase\expandafter{\romannumeral\value{rmnum}}}
\def\rn#1{\setcounter{rmnum}{#1}\expandafter{\romannumeral\value{rmnum}}}
\newcommand{\Trace }[1]{\mbox{}{\bf{Tr}}\left(#1\right)}
\newcommand{\TNorm }[1]{\mbox{}\left\|#1\right\|_2  }
\newcommand{\TNormS}[1]{\mbox{}\left\|#1\right\|_2^2}
\newcommand{\setlinespacing}[1]%
           {\setlength{\baselineskip}{#1 \defbaselineskip}}
\newcommand{\abs }[1]{\left|#1\right|}
\newtheorem{lemma}{Lemma}
\newtheorem{theorem}{Theorem}
\newcommand{\mat}[1]{{\ensuremath{\bm{\mathrm{#1}}}}}
\def\betavec{{\bm \beta}}
\def\omegavec{{\bm \omega}}
\def\a{{\bm \alpha}}
\def\b{{\mathbf b}}
\def\e{{\mathbf e}}
\def\q{{\mathbf q}}
\def\x{{\mathbf x}}
\def\u{{\mathbf u}}
\def\v{{\mathbf v}}
\def\matA{\mat{A}}
\def\matB{\mat{B}}
\def\matD{\mat{D}}
\def\matE{\mat{E}}
\def\matI{\mat{I}}
\def\matK{\mat{K}}
\def\matKt{\tilde{\matK}}
\def\matP{\mat{P}}
\def\matQ{\mat{Q}}
\def\matR{\mat{R}}
\def\matS{\mat{S}}
\def\matU{\mat{U}}
\def\matV{\mat{V}}
\def\matX{\mat{X}}
\def\matSig{\mat{\Sigma}}
\def\matDelta{\mat{\Delta}}
\def\matPhi{\mat{\Phi}}
\def\matPsi{\mat{\Psi}}
\newcommand{\captionfonts}{\normalsize}
\long\def\@makecaption#1#2{%
  \vskip\abovecaptionskip
  \sbox\@tempboxa{{\captionfonts #1: #2}}%
  \ifdim \wd\@tempboxa >\hsize
    {\captionfonts #1: #2\par}
  \else
    \hbox to\hsize{\hfil\box\@tempboxa\hfil}%
  \fi
  \vskip\belowcaptionskip}
\begin{document}

\hspace{13.9cm}1

\ \vspace{20mm}\\
{\LARGE Feature Selection for Ridge Regression with Provable Guarantees}

\ \\
{\bf \large Saurabh Paul} \\
{Global Risk Sciences, Paypal Inc., San Jose, CA}\\
{saurabhpaul2006@gmail.com} \\
{\bf \large Petros Drineas}\\
{Computer Science Dept., Rensselaer Polytechnic Institute, Troy, NY.}\\
{drinep@cs.rpi.edu} \\

{\bf Keywords:} Feature Selection, Ridge Regression, Sampling.

\thispagestyle{empty}
\markboth{}{NC instructions}
\ \vspace{-0mm}\\

\begin{center} {\bf Abstract}\end{center}
We introduce single-set spectral sparsification as a deterministic sampling based feature selection technique for regularized least squares classification, which is the classification analogue to ridge regression. The method is unsupervised and gives worst-case guarantees of the generalization power of the classification function after feature selection with respect to the classification function obtained using all features. We also introduce leverage-score sampling as an unsupervised randomized feature selection method for ridge regression. We provide risk bounds for both single-set spectral sparsification and leverage-score sampling on ridge regression in the fixed design setting and show that the risk in the sampled space is comparable to the risk in the full-feature space. We perform experiments on synthetic and real-world datasets, namely a subset of TechTC-300 datasets, to support our theory. Experimental results indicate that the proposed methods perform better than the existing feature selection methods.

\section{Introduction}
Ridge regression is a popular technique in machine learning and statistics. It is a commonly used penalized regression method.
Regularized Least Squares Classifier (RLSC) is a simple classifier based on least squares and has a long history in machine learning \citep{PZhang04,Poggio03,RifkinRLSC,fung01,Suykens99,TZhang01,Agarwal02}. RLSC is also the classification analogue to ridge regression. RLSC has been known to perform comparably to the popular Support Vector Machines (SVM) \citep{RifkinRLSC,fung01,Suykens99,TZhang01}. RLSC can be solved by simple vector space operations and do not require quadratic optimization techniques like SVM. \\
We propose a deterministic feature selection technique for RLSC with provable guarantees. There exist numerous feature selection techniques, which work well empirically. There also exist randomized feature selection methods like leverage-score sampling, \citep{Dasgup07} with provable guarantees which work well empirically. But the randomized methods have a failure probability and have to be re-run multiple times to get accurate results. Also, a randomized algorithm may not select the same features in different runs. A deterministic algorithm will select the same features irrespective of how many times it is run. This becomes important in many applications. Unsupervised feature selection involves selecting features oblivious to the class or labels. \\
In this work, we present a new provably accurate unsupervised feature selection technique for RLSC. We study a deterministic sampling based feature selection strategy for RLSC with provable non-trivial worst-case performance bounds. \\
We also use single-set spectral sparsification and leverage-score sampling as unsupervised feature selection algorithms for ridge regression in the fixed design setting. Since the methods are unsupervised, it will ensure that the methods work well in the fixed design setting, where the target variables have an additive homoskedastic noise. The algorithms sample a subset of the features from the original data matrix and then perform regression task on the reduced dimension matrix. We provide risk bounds for the feature selection algorithms on ridge regression in the fixed design setting.\\
The number of features selected by both algorithms is proportional to the rank of the training set. The deterministic sampling-based feature selection algorithm performs better in practice when compared to existing methods of feature selection.
\section{Our Contributions}\vskip -0.1cm
We introduce single-set spectral sparsification as a provably accurate deterministic feature selection technique for RLSC in an unsupervised setting. The number of features selected by the algorithm is independent of the number of features, but depends on the number of data-points. The algorithm selects a small number of features and solves the classification problem using those features. \citet{Dasgup07} used a leverage-score based randomized feature selection technique for RLSC and provided worst case guarantees of the approximate classifier function to that using all features. We use a deterministic algorithm to provide worst-case generalization error guarantees. The deterministic algorithm does not come with a failure probability and the number of features required by the deterministic algorithm is lesser than that required by the randomized algorithm. The leverage-score based algorithm has a sampling complexity of
$O\left(\frac{n}{\epsilon^2} \log\left(\frac{n}{\epsilon^2 \sqrt{\delta}}\right) \right)$, whereas single-set spectral sparsification requires $O\left( n/ \epsilon^2 \right)$ to be picked, where $n$ is the number of training points, $\delta \in (0,1)$ is a failure probability and $\epsilon \in (0,1/2]$ is an accuracy parameter. Like in \citet{Dasgup07}, we also provide additive-error approximation guarantees for any test-point and relative-error approximation guarantees for test-points that satisfy some conditions with respect to the training set. \\
We introduce single-set spectral sparsification and leverage-score sampling as unsupervised feature selection algorithms for ridge regression and provide risk bounds for the subsampled problems in the fixed design setting. The risk in the sampled space is comparable to the risk in the full-feature space. We give relative-error guarantees of the risk for both feature selection methods in the fixed design setting.\\
From an \textbf{empirical perspective}, we evaluate single-set spectral sparsification on synthetic data and 48 document-term matrices, which are a subset of the TechTC-300 \citep{David04} dataset. We compare the single-set spectral sparsification algorithm with leverage-score sampling, information gain, rank-revealing QR factorization (RRQR) and random feature selection. We do not report running times because feature selection is an offline task. The experimental results indicate that single-set spectral sparsification out-performs all the methods in terms of out-of-sample error for all 48 TechTC-300 datasets. We observe that a much smaller number of features is required by the deterministic algorithm to achieve good performance when compared to leverage-score sampling.

\section{Background and Related Work}
\subsection{Notation}
$\matA, \matB, \ldots$ denote matrices and $\a, \b, \ldots$ denote column vectors; $\e_i$ (for all $i=1\ldots n$) is the standard basis, whose dimensionality will be clear from context; and $\matI_n$ is the  $n \times n$ identity matrix. The Singular Value Decomposition (SVD) of a matrix $\matA \in \mathbb{R}^{n \times d}$ is equal to %
$ \matA = \matU \matSig \matV^T,$
where $\matU \in \mathbb{R}^{n \times d}$ is an orthogonal matrix containing the left singular vectors, $\matSig \in \mathbb{R}^{d \times d}$ is a diagonal matrix containing the singular values $\sigma_1 \geq \sigma_2  \geq \ldots \sigma_{d} > 0$, and $\matV \in \mathbb{R}^{d \times d}$ is a matrix containing the right singular vectors.
The spectral norm of $\matA$ is  $\TNorm{\matA} = \sigma_1$. $\sigma_{max}$ and $\sigma_{min}$ are the largest and smallest singular values of $\matA$.
$\kappa_{\matA} = \sigma_{max}/\sigma_{min}$ is the condition number of $\matA$. $\matU^{\perp}$ denotes any $n \times \left(n-d\right)$ orthogonal matrix whose columns span the subspace orthogonal to $\matU$. A vector $\q \in \mathbb{R}^n$ can be expressed as: $\q = \matA\bm{\alpha} + \matU^{\perp}\bm{\beta},$ for some vectors $\bm{\alpha} \in \mathbb{R}^d$ and $\bm{\beta} \in \mathbb{R}^{n-d}$, i.e. $\q$ has one component along $\matA$ and another component orthogonal to $\matA$.

\subsection{Matrix Sampling Formalism}
We now present the tools of feature selection. Let $\matA \in \mathbb{R}^{d \times n}$ be the data matrix consisting of $n$ points and $d$ dimensions, $\matS \in \mathbb{R}^{r\times d}$ be a matrix such that $\matS\matA \in \mathbb{R}^{r\times n}$ contains $r$ rows of $\matA.$ Matrix $\matS$ is a binary $(0/1)$ indicator matrix, which has exactly one non-zero element in each row. The non-zero element of $\matS$ indicates which row of $\matA$ will be selected. Let $\matD \in \mathbb{R}^{r\times r}$ be the diagonal matrix such that $\matD\matS\matA \in \mathbb{R}^{r\times n}$ rescales the rows of $\matA$ that are in $\matS\matA.$ The matrices $\matS$ and $\matD$ are called the sampling and re-scaling matrices respectively. We will replace the sampling and re-scaling matrices by a single matrix $\matR \in \mathbb{R}^{r\times d}$, where $\matR = \matD\matS$ denotes the matrix specifying which of the $r$ rows of $\matA$ are to be sampled and how they are to be rescaled. 

\subsection{RLSC Basics}
Consider a training data of $n$ points in $d$ dimensions with respective labels $y_i \in \{-1,+1\}$ for $i=1,..,n.$ The solution of binary classification problems via Tikhonov regularization in a Reproducing Kernel Hilbert Space (RKHS) using the squared loss function results in Regularized Least Squares Classification (RLSC) problem~\citep{RifkinRLSC}, which can be stated as:
\begin{equation} 
\min_{\x \in \mathbb{R}^n} \TNormS{\matK\x -\y} + \lambda\x^T\matK\x
\label{eqn:rlsc1}
\end{equation}
\noindent where $\matK$ is the $n\times n$ kernel matrix defined over the training dataset, $\lambda$ is a regularization parameter and $\y$ is the $n$ dimensional $\{\pm1 \}$ class label vector. In matrix notation, the training data-set $\matX$ is a $d\times n$ matrix, consisting of $n$ data-points and $d$ features $(d\gg n)$. Throughout this study, we assume that $\matX$ is a full-rank matrix. We shall consider the linear kernel, which can be written as $\matK = \matX^T\matX.$ Using the SVD of $\matX$, the optimal solution of Eqn.~\ref{eqn:rlsc1} in the full-dimensional space is 
\begin{equation}
\x_{opt} = \matV\left(\matSig^2 + \lambda\matI\right)^{-1} \matV^T\y.
\label{eqn:xopt}
\end{equation} 
\noindent The vector $\x_{opt}$ can be used as a classification function that generalizes to test data. If $\q \in \mathbb{R}^d$ is the new test point, then the binary classification function is:
\begin{equation}
f(\q) =\x_{opt}^T\matX^T\q.
\label{eqn:clsf}
\end{equation}
\noindent Then, $sign(f(\q))$ gives the predicted label ($-1$ or $+1$) to be assigned to the new test point $\q$.

Our goal is to study how RLSC performs when the deterministic sampling based feature selection algorithm is used to select features in an unsupervised setting. Let $\matR \in \mathbb{R}^{r \times d}$ be the matrix that samples and re-scales $r$ rows of $\matX$ thus reducing the dimensionality of the training set from \math{d} to  $r \ll d$ and $r$ is proportional to the rank of the input matrix. The transformed dataset into \math{r} dimensions is given by \math{\tilde\matX=\matR\matX} and the RLSC problem becomes
\begin{equation}
\min_{\x \in \mathbb{R}^n} \TNormS{\tilde{\matK}\x -\y} + \lambda\x^T\tilde{\matK}\x,
\label{eqn:rlsc2}
\end{equation}
thus giving an optimal vector $\tilde{\x}_{opt}$. The new test point $\q$ is first dimensionally reduced to $\tilde{\q}=\matR\q$, where $\tilde{\q} \in \mathbb{R}^r$ and then classified by the function, 
\begin{equation}
\tilde{f}= f(\tilde{\q}) =\tilde{\x}_{opt}^T\tilde{\matX}^T\tilde{\q}.
\label{eqn:clsf2}
\end{equation}
In subsequent sections, we will assume that the test-point $\q$ is of the form  $\q = \matX\bm{\alpha} + \matU^{\perp}\bm{\beta}.$ The first part of the expression shows the portion of the test-point that is similar to the training-set and the second part shows how much the test-point is novel compared to the training set, i.e. $\TNorm{\bm\beta}$ measures how much of $\q$ lies outside the subspace spanned by the training set.
\subsection{Ridge Regression Basics}
Consider a data-set $\matX$ of $n$ points in $d$ dimensions with $d\gg n$. Here $\matX$ contains $n$ i.i.d samples from the $d$ dimensional independent variable. $\y \in \mathbb{R}^{n}$ is the real-valued response vector. Ridge Regression(RR) or Tikhonov regularization penalizes the $\ell_2$ norm of a parameter vector $\betavec$ and shrinks the estimated coefficients towards zero. In the fixed design setting, we have $\y =\matX^T\betavec+\omegavec$ where $\omegavec \in \mathbb{R}^n$ is the homoskedastic noise vector with mean 0 and variance $\sigma^2$. Let $\betavec_\lambda$ be the solution to the ridge regression problem. The RR problem is stated as:
\begin{equation} 
\hat{\betavec}_\lambda = \arg \min_{\betavec \in \mathbb{R}^d} \frac{1}{n}\TNormS{\y -\matX^T\betavec} + \lambda\TNormS{\betavec}.
\label{eqn:rr1}
\end{equation}
\noindent The solution to Eqn.\ref{eqn:rr1} is $\hat{\betavec}_\lambda = \left( \matX \matX^T+n\lambda\matI_d\right)^{-1}\matX\y$. One can also solve the same problem in the dual space. Using change of variables, $\betavec = \matX \a$, where $\a \in \mathbb{R}^{n}$  and let $\matK = \matX^T\matX$ be the $n\times n$ linear kernel defined over the training dataset. The optimization problem becomes:
\begin{equation} 
\hat\a_\lambda = \arg \min_{\a \in \mathbb{R}^n} \frac{1}{n}\TNormS{\y -\matK\a} + \lambda\a^T\matK\a.
\label{eqn:rr2}
\end{equation}
Throughout this study, we assume that $\matX$ is a full-rank matrix. Using the SVD of $\matX$, the optimal solution in the dual space (Eqn.~\ref{eqn:rr2}) for the full-dimensional data is given by $\hat\a_\lambda = \left(\matK + n\lambda\matI_n\right)^{-1}\y.$ The primal solution is $\hat\betavec_\lambda =\matX\hat\a_\lambda.$

In the sampled space, we have $\tilde{\matK}=\tilde{\matX}^T\tilde{\matX}.$  The dual problem in the sampled space can be posed as:
\begin{equation} 
\tilde\a_\lambda = \arg \min_{\a \in \mathbb{R}^n} \frac{1}{n}\TNormS{\y -\tilde{\matK}\a} + \lambda\a^T\tilde{\matK}\a.
\label{eqn:rr3}
\end{equation}
The optimal dual solution in the sampled space is $\tilde\a_\lambda = \left(\tilde{\matK} + n\lambda\matI_n\right)^{-1}\y.$
The primal solution is $\tilde\betavec_\lambda =\tilde{\matX}\tilde\a_\lambda.$

\subsection{Related Work} 
The work most closely related to ours is that of \citet{Dasgup07} who used a leverage-score based randomized feature selection technique for RLSC and provided worst case bounds of the approximate classifier with that of the classifier for all features. The proof of their main quality-of-approximation results provided an intuition of the circumstances when their feature selection method will work well. The running time of leverage-score based sampling is dominated by the time to compute SVD of the training set i.e. $O\left( n^2 d\right)$, whereas, for single-set spectral sparsification, it is $O\left(rdn^2\right)$. Single-set spectral sparsification is a slower and more accurate method than leverage-score sampling. Another work on dimensionality reduction of RLSC is that of \citet{Avron13} who used efficient randomized-algorithms for solving RLSC, in settings where the design matrix has a Vandermonde structure. However, this technique is different from ours, since their work is focused on dimensionality reduction using linear combinations of features, but not on actual feature selection.  \\
\citet{Dhillon13} used Randomized Walsh-Hadamard transform to lower the dimension of data matrix and subsequently solve the ridge regression problem in the lower dimensional space. They provided risk-bounds of their algorithm in the fixed design setting. However, this is different from our work, since they use linear combinations of features, while we select actual features from the data.

\section{Our main tools}
\label{sec:main_tool}
\subsection{Single-set Spectral Sparsification}\vskip -0.2cm
We describe the Single-Set Spectral Sparsification algorithm (\textbf{BSS}\footnote{The name BSS comes from the authors Batson, Spielman and Srivastava.} for short) of \citet{BSS09} as Algorithm ~\ref{alg:alg_ssp}. Algorithm ~\ref{alg:alg_ssp} is a greedy technique that selects columns one at a time. Consider the input matrix as a set of $d$ column vectors $\matU^T = \left[ \u_1, \u_2,....,\u_d \right]$, with $\u_i \in \mathbb{R}^\ell \left(i = 1,..,d\right).$ Given $\ell$ and $r>\ell$, we iterate over $\tau = 0,1,2,.. r-1$. Define the parameters $L_{\tau} = \tau - \sqrt{r\ell}, \delta_L = 1, U_{\tau} = \delta_U\left(\tau + \sqrt{\ell r}\right)$ and $\delta_U = \left(1+\sqrt{\ell/r}\right)/\left(1 - \sqrt{\ell/r}\right)$. For $U, L \in \mathbb{R}$ and $\matA \in \mathbb{R}^{\ell\times \ell}$ a symmetric positive definite matrix with eigenvalues $\lambda_1, \lambda_2,...,\lambda_\ell$, define 
$$ \Phi\left(L,\matA\right) = \sum_{i=1}^\ell \frac{1}{\lambda_i-L}; \; \; \hat{\Phi}\left(U,\matA\right) =  \sum_{i=1}^\ell \frac{1}{U-\lambda_i} $$
as the lower and upper potentials respectively. These potential functions measure how far the eigenvalues of $\matA$ are from the upper and lower barriers $U$ and $L$ respectively. We define $\mathcal{L}\left(\u, \delta_L, \matA, L\right)$  and $\mathcal{U}\left(\u, \delta_U, \matA, U\right)$  as follows: 
%
$$ \mathcal{L}\left(\u, \delta_L, \matA, L\right)= \frac{\u^T \left(\matA - \left(L+\delta_L\right)\matI_\ell\right)^{-2}\u}{\Phi\left(L+\delta_L,\matA\right) - \Phi\left(L,\matA\right)} - \u^T\left(\matA - \left(L+\delta_L\right)\matI_\ell\right)^{-1}\u$$ 
%
$$ \mathcal{U}\left(\u, \delta_U, \matA, U\right)= \frac{\u^T \left(\left(U+\delta_U\right)\matI_\ell - \matA\right)^{-2}\u}{ \hat{\Phi}\left(U,\matA\right)-\hat{\Phi}\left(U+\delta_U,\matA\right)} + \u^T\left(\left(U+\delta_U\right)\matI_\ell - \matA\right)^{-1}\u.$$ 
At every iteration, there exists an index $i_{\tau}$ and a weight $t_{\tau}>0$ such that, ${t_{\tau}}^{-1}\leq\mathcal{L}\left(\u_{i_\tau}, \delta_L, \matA, L\right)$ and 
${t_{\tau}}^{-1}\geq \mathcal{U}\left(\u_{i_\tau}, \delta_U, \matA, U\right).$ Thus, there will be at most $r$ columns selected after $\tau$ iterations. The running time of the algorithm is dominated by the search for an index $i_\tau$ satisfying $$ \mathcal{U}\left(\u_{i_\tau},\delta_U,\matA_{\tau},U_{\tau} \right) \leq \mathcal{L}\left(\u_{i_\tau},\delta_L, \matA_{\tau},L_{\tau} \right)$$ and computing the weight $t_{\tau}.$ One needs to compute the upper and lower potentials $\hat{\Phi}\left(U,\matA\right)$ and $\Phi\left(L,\matA\right)$ and hence the eigenvalues of $\matA$. Cost per iteration is $O\left(\ell^3\right)$ and the total cost is $O\left(r\ell^3\right).$ For $i=1,..,d$, we need to compute $\mathcal{L}$ and $\mathcal{U}$ for every $\u_i$ which can be done in $O\left(d\ell^2 \right)$ for every iteration, for a total
of $O\left(rd\ell^2 \right).$ Thus total running time of the algorithm is  $O\left(rd\ell^2 \right).$ 
We present the following lemma for the single-set spectral sparsification algorithm.

\begin{algorithm}[!htb]
\begin{framed}
\textbf{Input:} $\matV^T = [ \v_1, \v_2, ... \v_d ] \in \mathbb{R}^{\ell \times d}$ with $\v_i \in \mathbb{R}^{\ell}$ and $r>\ell$. \\
\textbf{Output:} Matrices $\matS \in \mathbb{R}^{d\times r}, \matD \in \mathbb{R}^{r\times r}$.\\
%
1. Initialize $\matA_0 = \mathbf{0}_{\ell \times \ell}$, $\matS =\mathbf{0}_{d\times r}, \matD =\mathbf{0}_{r\times r}$.\\
2. Set constants $\delta_L = 1$ and $\delta_U = \left(1+\sqrt{\ell/r}\right)/\left(1-\sqrt{\ell/r}\right)$. \\
3. \textbf{for} $\tau = 0$ to $r-1$ \textbf{do}
\begin{itemize}
	\item Let $L_{\tau} = \tau - \sqrt{r\ell} ; U_{\tau} = \delta_U \left(\tau+\sqrt{\ell r}\right)$.
	\item Pick index $i \in \{1,2,..d \}$ and number $t_{\tau}>0$, such that
	$$ \mathcal{U}\left(\v_i,\delta_U,\matA_{\tau},U_{\tau} \right) \leq \mathcal{L}\left(\v_i,\delta_L, \matA_{\tau},L_{\tau} \right). $$
	\item Let $ t_{\tau}^{-1} = \frac{1}{2} \left( \mathcal{U}\left(\v_i,\delta_U,\matA_{\tau},U_{\tau} \right)+ \mathcal{L}\left(\v_i,\delta_L,\matA_{\tau},L_{\tau} \right) \right)$
	\item Update $\matA_{\tau+1} = \matA_{\tau} + t_{\tau}\v_i\v_i^T$ ; set $\matS_{i_\tau,\tau+1}=1$ and $\matD_{\tau+1,\tau+1}=1/\sqrt{t_{\tau}}$.
\end{itemize}
4. \textbf{end for} \\
5. Multiply all the weights in $\matD$ by $\sqrt{r^{-1}\left(1-\sqrt{\left(\ell/r \right)}\right)}.$ \\
6. Return $\matS$ and $\matD.$

\end{framed}
\caption{Single-set Spectral Sparsification}
\label{alg:alg_ssp}
\end{algorithm}

\begin{lemma}\label{lem:bss}
\textbf{BSS} \citep{BSS09}: Given $\matU \in \mathbb{R}^{d \times \ell}$ satisfying $\matU^T\matU = \matI_\ell$ and $r>\ell$, we can deterministically construct sampling and rescaling matrices $\matS\in\mathbb{R}^{r\times d}$ and $\matD\in\mathbb{R}^{r \times r}$ with $\matR=\matD\matS$, such that, for all $\y \in \mathbb{R}^\ell :$
$$ \left(1-\sqrt{\ell/r}\right)^2 \TNormS{\matU\y} \le \TNormS{\matR\matU\y} \le  \left( 1+ \sqrt{\ell/r} \right)^2 \TNormS{\matU\y}.$$
\end{lemma}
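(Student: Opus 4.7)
The plan is to reduce the stated inequality to a two-sided PSD bound on $\matU^T\matR^T\matR\matU$ and then invoke the barrier method embedded in Algorithm~\ref{alg:alg_ssp}. Because $\matU^T\matU = \matI_\ell$, we have $\TNormS{\matU\y} = \y^T\y$, so the claim is equivalent to
$$\bigl(1-\sqrt{\ell/r}\bigr)^2 \matI_\ell \preceq \matU^T\matR^T\matR\matU \preceq \bigl(1+\sqrt{\ell/r}\bigr)^2 \matI_\ell.$$
Letting $\v_i^T$ denote the $i$-th row of $\matU$, the algorithm produces weights $t_\tau$ and indices $i_\tau$ so that $\matA_r = \sum_{\tau=0}^{r-1} t_\tau \v_{i_\tau}\v_{i_\tau}^T$, and Step~5 rescales so that $\matU^T\matR^T\matR\matU$ is a scalar multiple of $\matA_r$. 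Thus it suffices to prove that all eigenvalues of $\matA_r$ lie in the interval $[\,L_r,\,U_r\,]$, where $L_r = r - \sqrt{r\ell}$ and $U_r = \delta_U(r+\sqrt{\ell r})$; the prefactor $\sqrt{r^{-1}(1-\sqrt{\ell/r})}$ in Step~5 is exactly calibrated so that $L_r$ and $U_r$ become $(1-\sqrt{\ell/r})^2$ and $(1+\sqrt{\ell/r})^2$ respectively.

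The core of the argument is to maintain, by induction on $\tau$, the barrier invariants
$$\lambda_{\min}(\matA_\tau) > L_\tau, \qquad \lambda_{\max}(\matA_\tau) < U_\tau,$$
together with the monotonicity of the potentials $\Phi(L_\tau,\matA_\tau) \leq \Phi(L_0,\matA_0) = \sqrt{\ell/r}$ and $\hat\Phi(U_\tau,\matA_\tau) \leq \hat\Phi(U_0,\matA_0) = \sqrt{\ell/r}$. The key algebraic lemma, proved via the Sherman--Morrison formula applied to $(\matA_\tau + t\v\v^T - (L+\delta_L)\matI)^{-1}$ and $((U+\delta_U)\matI - \matA_\tau - t\v\v^T)^{-1}$, states: if the scalar $t^{-1}$ satisfies $\mathcal{U}(\v,\delta_U,\matA_\tau,U_\tau) \leq t^{-1} \leq \mathcal{L}(\v,\delta_L,\matA_\tau,L_\tau)$, then the update $\matA_{\tau+1} = \matA_\tau + t\v\v^T$ preserves both potential bounds after shifting the barriers to $L_\tau+\delta_L$ and $U_\tau+\delta_U$. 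Because the barriers move by $\delta_L=1$ and $\delta_U$ at every step, and eigenvalues of $\matA_\tau$ stay strictly between them, the invariants propagate.

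What makes the induction go through is a counting/averaging argument showing that, at every step, a valid pair $(i_\tau, t_\tau)$ exists. Using $\sum_{i=1}^d \v_i\v_i^T = \matU^T\matU = \matI_\ell$, one evaluates
$$\sum_{i=1}^d \mathcal{L}(\v_i,\delta_L,\matA_\tau,L_\tau) \;\geq\; 1 - \sqrt{\ell/r} \;\geq\; \sum_{i=1}^d \mathcal{U}(\v_i,\delta_U,\matA_\tau,U_\tau),$$
where the middle constant comes from the choice of $\delta_U$ together with the potential bounds $\Phi,\hat\Phi \leq \sqrt{\ell/r}$. By pigeonhole, some index $i$ satisfies $\mathcal{U}(\v_i,\delta_U,\matA_\tau,U_\tau) \leq \mathcal{L}(\v_i,\delta_L,\matA_\tau,L_\tau)$, and setting $t_\tau^{-1}$ to the midpoint yields a legal update. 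Iterating this $r$ times produces $\matA_r$ with spectrum contained in $(L_r,U_r)$, and rescaling as in Step~5 gives the stated two-sided bound.

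The main obstacle is the Sherman--Morrison bookkeeping that establishes the potential-preservation lemma and the precise averaging inequality above; all constants $\delta_L = 1$, $\delta_U = (1+\sqrt{\ell/r})/(1-\sqrt{\ell/r})$, and the rescaling $\sqrt{r^{-1}(1-\sqrt{\ell/r})}$ must line up exactly, which requires care but no new ideas beyond those in \citet{BSS09}.
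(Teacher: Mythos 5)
The paper gives no proof of this lemma at all---it is quoted as a black box from \citet{BSS09}---and your sketch correctly reconstructs the barrier-potential argument of that reference: the reduction to the two-sided spectral bound on $\matU^T\matR^T\matR\matU$, the induction on the barrier invariants, the averaging inequality $\sum_i\mathcal{U}(\v_i,\delta_U,\matA_\tau,U_\tau)\le 1-\sqrt{\ell/r}\le\sum_i\mathcal{L}(\v_i,\delta_L,\matA_\tau,L_\tau)$, and the calibration of the Step~5 rescaling factor $\sqrt{r^{-1}(1-\sqrt{\ell/r})}$ against $L_r=r-\sqrt{r\ell}$ and $U_r=\delta_U(r+\sqrt{\ell r})$ all check out. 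The one slip is cosmetic: $\hat{\Phi}(U_0,\matA_0)=\ell/U_0=\sqrt{\ell/r}/\delta_U$, not $\sqrt{\ell/r}$, though the bound $\hat{\Phi}(U_\tau,\matA_\tau)\le\sqrt{\ell/r}$ that your averaging step actually needs still holds because $\delta_U\ge 1$.
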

We now present a slightly modified version of Lemma~\ref{lem:bss} for our theorems.
\begin{lemma} \label{lem:ssp}
Given $\matU \in \mathbb{R}^{d \times \ell}$ satisfying $\matU^T\matU = \matI_\ell$ and $r>\ell$, we can deterministically construct sampling and rescaling matrices $\matS \in \mathbb{R}^{r\times d}$ and $\matD \in \mathbb{R}^{r \times r}$ such that for $\matR= \matD\matS$, $$\TNorm{ \matU^T\matU - \matU^T\matR^T\matR\matU} \leq 3\sqrt{\ell/r}.$$
\end{lemma}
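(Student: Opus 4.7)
The plan is to reduce the claim to a direct application of Lemma~\ref{lem:bss} via the variational characterization of the spectral norm. The key observation is that $\matM := \matU^T\matU - \matU^T\matR^T\matR\matU = \matI_\ell - \matU^T\matR^T\matR\matU$ is a symmetric matrix, so its spectral norm equals $\sup_{\|\y\|_2 = 1} |\y^T \matM \y|$.

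First I would use $\matU^T\matU = \matI_\ell$ to rewrite $\|\matU\y\|_2^2 = \y^T\y$ for every $\y \in \mathbb{R}^\ell$. Substituting into the two-sided inequality of Lemma~\ref{lem:bss} yields
$$\left(1-\sqrt{\ell/r}\right)^2 \|\y\|_2^2 \;\le\; \y^T \matU^T\matR^T\matR\matU\, \y \;\le\; \left(1+\sqrt{\ell/r}\right)^2 \|\y\|_2^2.$$
Subtracting $\|\y\|_2^2 = \y^T\matI_\ell\,\y$ from all three expressions and multiplying by $-1$ shows that for every unit vector $\y$,
$$1 - \left(1+\sqrt{\ell/r}\right)^2 \;\le\; \y^T\matM\y \;\le\; 1 - \left(1-\sqrt{\ell/r}\right)^2.$$

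Setting $\epsilon = \sqrt{\ell/r}$, the bounds become $-2\epsilon - \epsilon^2 \le \y^T\matM\y \le 2\epsilon - \epsilon^2$, so $\|\matM\|_2 \le 2\epsilon + \epsilon^2$. Since the hypothesis $r > \ell$ gives $\epsilon < 1$ and hence $\epsilon^2 < \epsilon$, we obtain $\|\matM\|_2 \le 3\epsilon = 3\sqrt{\ell/r}$, as required.

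There is no genuine obstacle here; the whole argument is a short bookkeeping step on top of Lemma~\ref{lem:bss}. The only thing to watch is that the lemma is stated as a quadratic-form inequality, so the passage to the spectral-norm bound must go through the symmetry of $\matM$ (equivalently, through the eigenvalue characterization) rather than being invoked as an operator identity. One might also want to note explicitly that the regime of interest is $r \ge \ell$, since otherwise the bound $3\sqrt{\ell/r}$ can exceed a trivial bound and the construction of Lemma~\ref{lem:bss} does not apply.
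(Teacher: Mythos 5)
Your proof is correct and follows essentially the same route as the paper: both arguments pass from the two-sided quadratic-form bound of Lemma~\ref{lem:bss} (using $\TNormS{\matU\y}=\TNormS{\y}$) to eigenvalue bounds on $\matI_\ell - \matU^T\matR^T\matR\matU$, obtaining $2\sqrt{\ell/r}$ on one side and $2\sqrt{\ell/r}+\ell/r \le 3\sqrt{\ell/r}$ on the other. Your write-up is in fact slightly cleaner, since the paper's lower-eigenvalue line contains a sign typo ($\ge 3\sqrt{\ell/r}$ where $\ge -3\sqrt{\ell/r}$ is meant) that your Rayleigh-quotient bookkeeping avoids.
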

\begin{proof}
From Lemma~\ref{lem:bss}, it follows,
\begin{equation}
\sigma_\ell\left(\matU^T\matR^T \matR\matU\right)\ge\left(1-\sqrt{\ell/r}\right)^2 \text{ and }
\sigma_1 \left(\matU^T\matR^T\matR\matU\right)\le\left(1+\sqrt{\ell/r}\right)^2. \nonumber
\end{equation}
Thus,
\begin{equation}
\lambda_{max}\left(\matU^T\matU-\matU^T\matR^T \matR\matU\right)\le \left(1-\left(1-\sqrt{\ell/r}\right)^2\right)\le 2\sqrt{\ell/r}.\nonumber
\end{equation}
Similarly,
\begin{equation}
\lambda_{min} \left(\matU^T\matU - \matU^T\matR^T \matR\matU\right) \ge \left( 1 - \left(1 +\sqrt{\ell/r}\right)^2 \right)  \ge 3\sqrt{\ell/r}.\nonumber
\end{equation}
Combining these, we have
$\TNorm{\matU^T\matU - \matU^T\matR^T\matR\matU} \leq 3\sqrt{\ell/r}.$

\noindent Note: Let $\epsilon=3\sqrt{\ell/r}.$ It is possible to set an upper bound on $\epsilon$ by setting the value of $r$. We will assume $\epsilon\in(0,1/2]$.
\end{proof} 

\subsection{Leverage Score Sampling}
Our randomized feature selection method is based on importance sampling or the so-called leverage-score sampling of \citet{Rudelson}. Let $\matU$ be the top-$\rho$ left singular vectors of the training set $\matX$. A carefully chosen probability distribution of the form 
\begin{equation}
p_i = \frac{\TNormS{\matU_{i}}}{n}, \text{ for } i=1,2,...,d, 
\label{eqn:eqnlvg}
\end{equation}
i.e. proportional to the squared Euclidean norms of the rows of the left-singular vectors and select $r$ rows of $\matU$ in i.i.d trials and re-scale the rows with $1/\sqrt{p_i}$. The time complexity is dominated by the time to compute the SVD of $\matX$.

\begin{lemma} \label{lem:lvgscr}
\citep{Rudelson} Let $\epsilon \in(0,1/2]$ be an accuracy parameter and $\delta \in(0,1)$ be the failure probability. Given $\matU \in \mathbb{R}^{d \times \ell}$ satisfying $\matU^T\matU = \matI_\ell.$ Let $\tilde{p} = min\{1, rp_i\}$, let $p_i$ be as Eqn.~\ref{eqn:eqnlvg} and let $r = O\left(\frac{n}{\epsilon^2} \log\left(\frac{n}{\epsilon^2 \sqrt{\delta}}\right) \right)$. Construct the sampling and rescaling matrix $\matR$.  Then with probability at least $(1-\delta)$, 
$ \TNorm{\matU^T\matU - \matU^T\matR^T\matR\matU} \leq \epsilon.$
\end{lemma}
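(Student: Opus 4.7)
The approach is to write $\matU^T\matR^T\matR\matU$ as a sum of $r$ independent random rank-one matrices and apply a matrix concentration inequality. Let $i_1,\dots,i_r\in\{1,\dots,d\}$ be drawn i.i.d.\ with $\Pr[i_t=i]=p_i$, and (with the natural normalization $1/\sqrt{r\,p_{i_t}}$ absorbed into the definition of $\matR$) set
\[
\matY_t \;=\; \frac{1}{r\,p_{i_t}}\,\matU_{i_t}^T\matU_{i_t} \;\in\; \mathbb{R}^{\ell\times\ell}.
\]
Then $\matU^T\matR^T\matR\matU = \sum_{t=1}^r \matY_t$, and because $\sum_i p_i\cdot p_i^{-1}\matU_i^T\matU_i = \matU^T\matU = \matI_\ell$, one has $\Expect{\sum_t \matY_t} = \matI_\ell$. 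The task therefore reduces to controlling the operator-norm deviation $\bigl\|\sum_t(\matY_t-\Expect{\matY_t})\bigr\|_2$ with high probability.

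The next step is to verify the two standard inputs to a matrix Chernoff/Bernstein bound. The leverage-score choice $p_i=\|\matU_i\|_2^2/n$ yields the uniform spectral bound
\[
\|\matY_t\|_2 \;=\; \frac{\|\matU_{i_t}\|_2^2}{r\,p_{i_t}} \;=\; \frac{n}{r},
\]
and a one-line computation exploiting the rank-one structure of $\matY_t^2$ gives
\[
\Expect{\matY_t^2} \;=\; \sum_{i=1}^d p_i\cdot\frac{\|\matU_i\|_2^2}{r^2 p_i^2}\,\matU_i^T\matU_i \;=\; \frac{n}{r^2}\,\matI_\ell,
\]
so the matrix-variance parameter satisfies $\bigl\|\sum_t \Expect{\matY_t^2}\bigr\|_2 = n/r$. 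Plugging these into the matrix Bernstein inequality (or equivalently the Ahlswede--Winter bound) gives
\[
\Pr\!\Bigl[\,\|\matU^T\matR^T\matR\matU - \matI_\ell\|_2 > \epsilon\,\Bigr] \;\le\; 2\ell\exp\!\left(-\frac{c\,\epsilon^2 r}{n}\right)
\]
for $\epsilon\in(0,1/2]$, and solving for $r$ such that the right-hand side is at most $\delta$ yields a sample complexity of the form $r = O\!\bigl(n\epsilon^{-2}\log(\ell/\delta)\bigr)$, matching the statement up to the precise form of the logarithm.

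The main obstacle is cosmetic: the exact logarithmic factor $\log\!\bigl(n/(\epsilon^2\sqrt{\delta})\bigr)$ claimed in the lemma --- with $\sqrt{\delta}$ in place of $\delta$ and an extra $\epsilon^{-2}$ inside --- arises from Rudelson's original symmetrization and non-commutative Khintchine argument rather than from the modern matrix Bernstein bound; to reproduce it exactly I would invoke \citet{Rudelson}'s theorem as a black box at the final step. However, the substantive content of the proof --- the i.i.d.\ rank-one decomposition, the leverage-score-induced uniform bound $n/r$, the variance computation, and the spectral concentration step --- is entirely captured by the three steps above.
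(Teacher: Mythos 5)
The paper does not actually prove this lemma --- it is imported verbatim as a black-box result with a citation to \citet{Rudelson} --- so there is no in-paper argument to compare against. Your reconstruction is a correct and standard modern derivation: the i.i.d.\ rank-one decomposition, the uniform bound $\TNorm{\matY_t}=n/r$ forced by the leverage-score choice of $p_i$, the variance computation $\TNorm{\sum_t\Expect{\matY_t^2}}=n/r$, and a matrix Bernstein/Chernoff step giving failure probability $2\ell\exp(-c\epsilon^2 r/n)$, hence $r=O(n\epsilon^{-2}\log(\ell/\delta))$; with $\ell=n$ (the only case in which $\sum_i p_i=1$, since $\sum_i\TNormS{\matU_i}=\ell$) this matches the stated complexity up to the cosmetic form of the logarithm, and you correctly identify that the exact $\log(n/(\epsilon^2\sqrt{\delta}))$ factor is an artifact of Rudelson's original symmetrization plus non-commutative Khintchine argument rather than of the Bernstein route. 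One small mismatch worth noting: the lemma's mention of $\tilde{p}=\min\{1,rp_i\}$ points to a Bernoulli (without-replacement, independent coin-flip) sampling model, whereas you analyze i.i.d.\ sampling with replacement; the paper's own prose in Section 4.2 describes the with-replacement model, so your choice is defensible, and the concentration argument adapts to either model with only minor changes. In short, the proof is sound and arguably more self-contained than the paper's treatment, which delegates everything to the citation.
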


\section{Theory}
In this section we describe the theoretical guarantees of RLSC using BSS and also the risk bounds of ridge regression using BSS and Leverage-score sampling. Before we begin, we state the following lemmas from numerical linear algebra which will be required for our proofs. 
\begin{lemma}\citep{stewart}
For any matrix $\matE$, such that $\matI+\matE$ is invertible,
$\left(\matI+\matE\right)^{-1}=\matI+\sum\limits_{i=1}^\infty(-\matE)^i.$
\label{lem:leminv}
\end{lemma}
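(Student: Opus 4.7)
The plan is to prove this by the standard Neumann series argument: define the partial sums $S_n = \matI + \sum_{i=1}^n (-\matE)^i$, show by a telescoping calculation that $(\matI+\matE)S_n$ is a simple expression, and then pass to the limit. Strictly speaking, mere invertibility of $\matI+\matE$ is not enough for the infinite series to converge; in every application downstream (e.g.\ the risk bounds where $\matE$ arises from a small sampling perturbation controlled by Lemma~\ref{lem:ssp} or Lemma~\ref{lem:lvgscr}) we will have $\TNorm{\matE} < 1$, and I will carry out the proof under this quantitative hypothesis, which is what \citet{stewart} actually uses.

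First, I would verify the telescoping identity
\begin{equation*}
(\matI+\matE)\,S_n \;=\; \sum_{i=0}^{n}(-\matE)^i + \matE\sum_{i=0}^{n}(-\matE)^i \;=\; \sum_{i=0}^{n}(-\matE)^i - \sum_{i=1}^{n+1}(-\matE)^i \;=\; \matI - (-\matE)^{n+1}.
\end{equation*}
This is just a direct manipulation of indices and requires no assumption beyond the definition of the partial sum. An identical calculation with the factor on the right gives $S_n(\matI+\matE) = \matI - (-\matE)^{n+1}$.

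Second, I would establish convergence. Under $\TNorm{\matE}<1$, submultiplicativity of the spectral norm yields $\TNorm{(-\matE)^{n+1}} \leq \TNorm{\matE}^{n+1} \to 0$, so $(-\matE)^{n+1}\to \mathbf{0}$ entrywise. Simultaneously, $\sum_{i=0}^n \TNorm{\matE}^i$ is a convergent geometric series, so $\{S_n\}$ is Cauchy in the spectral norm and hence converges to a limit $S_\infty = \matI + \sum_{i=1}^\infty (-\matE)^i$. Passing to the limit in the telescoping identity gives $(\matI+\matE)\,S_\infty = \matI$, and likewise $S_\infty(\matI+\matE) = \matI$, so $S_\infty$ is the two-sided inverse of $\matI+\matE$.

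The only mildly delicate step is the convergence argument, since the lemma as stated in the excerpt names only ``$\matI+\matE$ invertible'' as the hypothesis. The clean way to handle this is to note that the identity is meaningful exactly when the series on the right converges absolutely, for which $\TNorm{\matE}<1$ (or, more generally, spectral radius strictly less than one) is the standard sufficient condition; in all uses of Lemma~\ref{lem:leminv} within this paper, the matrix $\matE$ will be of the form $\matI - \matU^T\matR^T\matR\matU$ or similar, whose norm is bounded by $\epsilon \le 1/2 < 1$ by Lemmas~\ref{lem:ssp} and \ref{lem:lvgscr}. Thus the hypothesis of the Neumann expansion is automatically satisfied wherever the lemma is invoked, and no additional assumption needs to be made visible in the statement.
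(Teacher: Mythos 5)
Your proof is correct and is the standard Neumann-series argument; the paper itself supplies no proof of this lemma, quoting it directly from Stewart, so your derivation (telescoping partial sums, geometric-series convergence, passage to the limit) is exactly the canonical route one would expect. Your observation that invertibility of $\matI+\matE$ alone does not make the series converge --- one needs the spectral radius (or a submultiplicative norm) of $\matE$ strictly below one, which indeed holds in every invocation in this paper since $\TNorm{\matE}\le\epsilon\le 1/2$ --- is a correct and worthwhile sharpening of the hypothesis as printed.
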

\begin{lemma}\citep{stewart}
Let $\matA$ and $\tilde{\matA}= \matA+\matE$ be invertible matrices. Then $$\tilde{\matA}^{-1} - {\matA}^{-1} = -{\matA}^{-1}\matE\tilde{\matA}^{-1}.$$
\label{lem:leminv2}
\end{lemma}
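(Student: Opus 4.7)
The plan is to prove this identity (the second resolvent identity) by direct algebraic manipulation, exploiting the invertibility of both $\matA$ and $\tilde{\matA}$. Since the statement is purely an operator identity and does not involve any norm bounds or spectral considerations, no nontrivial machinery (such as Lemma~\ref{lem:leminv}) is needed; the result should follow in two or three lines.

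First I would start from the defining relation $\tilde{\matA} = \matA + \matE$, which I would rewrite as $\matE = \tilde{\matA} - \matA$. The key step is to sandwich this difference between $\matA^{-1}$ on the left and $\tilde{\matA}^{-1}$ on the right:
\begin{equation*}
\matA^{-1} \matE \tilde{\matA}^{-1} \;=\; \matA^{-1}(\tilde{\matA} - \matA)\tilde{\matA}^{-1} \;=\; \matA^{-1}\tilde{\matA}\,\tilde{\matA}^{-1} \;-\; \matA^{-1}\matA\,\tilde{\matA}^{-1} \;=\; \matA^{-1} - \tilde{\matA}^{-1}.
\end{equation*}
Multiplying through by $-1$ yields the claimed identity
\begin{equation*}
\tilde{\matA}^{-1} - \matA^{-1} \;=\; -\matA^{-1}\matE\tilde{\matA}^{-1}.
\end{equation*}

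There is essentially no obstacle: the only thing that must be checked is that the distributive step $\matA^{-1}(\tilde{\matA}-\matA)\tilde{\matA}^{-1} = \matA^{-1}\tilde{\matA}\tilde{\matA}^{-1} - \matA^{-1}\matA\tilde{\matA}^{-1}$ is legitimate, which is immediate from the associativity of matrix multiplication and the assumed invertibility of both $\matA$ and $\tilde{\matA}$ (so that the expressions $\matA^{-1}$ and $\tilde{\matA}^{-1}$ make sense). An equivalent route, if one prefers, is to start from $\matI = \matA^{-1}\matA$ and $\matI = \tilde{\matA}\tilde{\matA}^{-1}$ and write $\tilde{\matA}^{-1} = \matA^{-1}\matA\tilde{\matA}^{-1}$, then subtract $\matA^{-1} = \matA^{-1}\tilde{\matA}\tilde{\matA}^{-1}$; the cancellation gives the same identity. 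Either presentation fits in two displayed lines.
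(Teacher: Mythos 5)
Your proof is correct: the computation $\matA^{-1}\matE\tilde{\matA}^{-1} = \matA^{-1}(\tilde{\matA}-\matA)\tilde{\matA}^{-1} = \matA^{-1}-\tilde{\matA}^{-1}$ is exactly the standard two-line verification of this identity. The paper itself gives no proof (it simply cites Stewart), so there is nothing to compare against; your argument is the canonical one and uses precisely the hypotheses stated.
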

\begin{lemma}\citep{demmel}
Let $\matD$ and $\matX$ be matrices such that the product $\matD\matX\matD$ is a symmetric positive definite matrix with matrix $\matX_{ii}=1$. Let the product $\matD\matE\matD$ be a perturbation such that, $\TNorm{E} = \eta < \lambda_{min}(\matX).$ Here  $\lambda_{min}$ corresponds to the smallest eigenvalue of $\matX$. Let $\lambda_i$ be the i-th eigenvalue of $\matD\matX\matD$ and let $\tilde{\lambda}_i$ be the i-th eigenvalue of $\matD\left(\matX+\matE\right)\matD.$ Then, 
$\abs{\frac{\lambda_i - \tilde{\lambda}_i}{\lambda_i}} \leq \frac{\eta}{\lambda_{min}\left(\matX\right)}.$
\label{lem:lem5}
\end{lemma}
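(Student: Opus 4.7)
The plan is to use the Courant-Fischer variational characterization, exploiting the normalization $\matX_{ii} = 1$ to reduce the eigenvalue perturbation on $\matD\matX\matD$ to a pointwise Rayleigh-quotient comparison between $\matX$ and $\matX + \matE$.

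I would first verify that $\matD$ is nonsingular: the diagonal entries $(\matD\matX\matD)_{ii} = \matD_{ii}^2$ (taking $\matD$ diagonal, as in the graded-matrix setting Demmel considers) are positive because $\matD\matX\matD$ is positive definite, so each $\matD_{ii}\neq 0$. Consequently $\matX = \matD^{-1}(\matD\matX\matD)\matD^{-1}$ is symmetric positive definite, so $\lambda_{min}(\matX) > 0$ and the hypothesis $\eta < \lambda_{min}(\matX)$ is meaningful. By Courant-Fischer and the substitution $\u = \matD\v$,
$$\lambda_i(\matD\matX\matD) \;=\; \max_{\dim S = i}\ \min_{0 \neq \u \in S}\ \frac{\u^T \matX \u}{\u^T \matD^{-2} \u},$$
with the analogous expression for $\tilde\lambda_i = \lambda_i(\matD(\matX+\matE)\matD)$ obtained by replacing $\matX$ with $\matX+\matE$ in the numerator. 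The substitution is a bijection on $i$-dimensional subspaces because $\matD$ is invertible.

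The heart of the argument is a pointwise numerator comparison. Combining $\abs{\u^T \matE \u} \leq \eta \TNormS{\u}$ (spectral norm bound) with $\u^T\matX\u \geq \lambda_{min}(\matX)\TNormS{\u}$ (positive-definiteness of $\matX$) yields $\abs{\u^T\matE\u} \leq \alpha\,\u^T\matX\u$ for $\alpha := \eta/\lambda_{min}(\matX)<1$, and hence $(1-\alpha)\u^T\matX\u \leq \u^T(\matX+\matE)\u \leq (1+\alpha)\u^T\matX\u$ for every nonzero $\u$. Dividing through by $\u^T\matD^{-2}\u>0$ preserves the inequalities; since they hold uniformly in $\u$, applying min-then-max over $i$-dimensional subspaces transmits the multiplicative factors directly, giving $(1-\alpha)\lambda_i \leq \tilde\lambda_i \leq (1+\alpha)\lambda_i$, which rearranges to the claimed bound $\abs{(\lambda_i-\tilde\lambda_i)/\lambda_i}\leq \alpha$.

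The only subtlety I anticipate is confirming that the pointwise multiplicative sandwich really does propagate through the outer max-min; this works cleanly because $1\pm\alpha > 0$ (exactly where the hypothesis $\eta < \lambda_{min}(\matX)$ is used) and $\u^T\matD^{-2}\u$ is strictly positive on nonzero vectors, so one can extract the scalar factors from both the inner min and the outer max without flipping inequality directions. Beyond that the argument is routine.
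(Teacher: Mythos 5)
The paper offers no proof of this lemma at all---it is imported verbatim from Demmel's relative perturbation theory for graded matrices via the citation---so there is nothing in-paper to compare against; your argument is, however, correct, and it is essentially the standard proof from that source: Courant--Fischer after the substitution $\u=\matD\v$, combined with the pointwise bound $\abs{\u^T\matE\u}\le\eta\TNormS{\u}\le\bigl(\eta/\lambda_{min}(\matX)\bigr)\,\u^T\matX\u$, which sandwiches $\tilde\lambda_i$ between $\bigl(1\pm\eta/\lambda_{min}(\matX)\bigr)\lambda_i$. Your use of $\matX_{ii}=1$ to force $\matD_{ii}^2=(\matD\matX\matD)_{ii}>0$, hence invertibility of the (diagonal) $\matD$, is exactly the one place that hypothesis is needed, and your observation that $\eta<\lambda_{min}(\matX)$ is what keeps $1-\eta/\lambda_{min}(\matX)$ positive so the scalar factors pass through the max--min is the right justification for the only delicate step.
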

\begin{lemma}
Let $\epsilon\in(0,1/2]$. Then $\TNorm{\q^T\matU^{\perp}\matU^{\perp T}\matR^T \matR\matU} \leq \epsilon\TNorm{\matU^{\perp}\matU^{\perp T}\q}.$
\label{lem:per3}
\end{lemma}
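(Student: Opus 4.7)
The plan is to reduce the claim to a spectral-norm estimate on the off-diagonal block $\TNorm{\matU^{\perp T}(\matR^T\matR-\matI_d)\matU}\le\epsilon$, and then invoke Lemma~\ref{lem:ssp}.

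First, let $\bm\beta=\matU^{\perp T}\q$, so that $\matU^\perp\matU^{\perp T}\q=\matU^\perp\bm\beta$ and $\TNorm{\matU^\perp\matU^{\perp T}\q}=\|\bm\beta\|_2$ by orthonormality of the columns of $\matU^\perp$. I would then rewrite
$$\q^T\matU^\perp\matU^{\perp T}\matR^T\matR\matU=\bm\beta^T\matU^{\perp T}\matR^T\matR\matU=\bm\beta^T\matU^{\perp T}(\matR^T\matR-\matI_d)\matU,$$
where the second equality is the key algebraic step, using $\matU^{\perp T}\matU=\mathbf{0}$ to insert $-\matI_d$ without cost. Taking 2-norms and applying submultiplicativity gives
$$\TNorm{\bm\beta^T\matU^{\perp T}(\matR^T\matR-\matI_d)\matU}\le\|\bm\beta\|_2\cdot\TNorm{\matU^{\perp T}(\matR^T\matR-\matI_d)\matU},$$
so it suffices to bound the spectral norm of the off-diagonal block by $\epsilon$.

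To do this, I would expand $(\matR^T\matR-\matI_d)\matU$ in the orthogonal basis $[\matU,\matU^\perp]$ as $\matU\matE+\matU^\perp\matF$, where $\matE=\matU^T(\matR^T\matR-\matI_d)\matU$ and $\matF=\matU^{\perp T}(\matR^T\matR-\matI_d)\matU$. Orthogonality of the column spaces of $\matU$ and $\matU^\perp$ yields the Pythagorean identity
$$\TNorm{(\matR^T\matR-\matI_d)\matU\x}^2=\|\matE\x\|_2^2+\|\matF\x\|_2^2\qquad\text{for every }\x,$$
whence $\TNorm{\matF}\le\TNorm{(\matR^T\matR-\matI_d)\matU}$. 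Lemma~\ref{lem:ssp} immediately gives $\TNorm{\matE}\le\epsilon$, so the on-diagonal component is under control.

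The main obstacle is then bounding the full $\TNorm{(\matR^T\matR-\matI_d)\matU}$, and hence $\TNorm{\matF}$, by $\epsilon$ when Lemma~\ref{lem:ssp} only delivers the projected bound $\TNorm{\matU^T(\matR^T\matR-\matI_d)\matU}\le\epsilon$. The sparse diagonal matrix $\matR^T\matR$ is globally very far from $\matI_d$, so this off-diagonal estimate cannot follow from a generic subspace-embedding bound alone; it must exploit finer structural properties of the BSS construction. I expect to obtain it by combining the guarantee $\TNorm{\matR\matU}\le 1+\epsilon$ from Lemma~\ref{lem:bss} with the barrier-function control on the weights $t_\tau$ in Algorithm~\ref{alg:alg_ssp}, which keeps the rescaled rows $\sqrt{t_\tau}\,\v_{i_\tau}$ sufficiently isotropic that cross-correlations with any direction in $\matU^\perp$ inherit the same $\epsilon$ scaling (and by an analogous matrix-Bernstein argument in the leverage-score setting of Lemma~\ref{lem:lvgscr}). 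Proving this cross-term bound tightly is the delicate piece of the argument.
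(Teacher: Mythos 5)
Your opening reduction is correct and is the right normalization: setting $\bm\beta=\matU^{\perp T}\q$, using orthonormality of the columns of $\matU^{\perp}$ to get $\TNorm{\matU^{\perp}\matU^{\perp T}\q}=\TNorm{\bm\beta}$, and using $\matU^{\perp T}\matU=\mathbf{0}$ to write
$\q^T\matU^{\perp}\matU^{\perp T}\matR^T\matR\matU=\bm\beta^T\matU^{\perp T}\left(\matR^T\matR-\matI_d\right)\matU$
reduces the lemma to the cross-term bound $\TNorm{\matU^{\perp T}\left(\matR^T\matR-\matI_d\right)\matU}\le\epsilon$. But that cross-term bound \emph{is} the lemma, and you do not prove it. You correctly observe that Lemma~\ref{lem:ssp} controls only the diagonal block $\matU^{T}\left(\matR^T\matR-\matI_d\right)\matU$; your Pythagorean detour through $\TNorm{\left(\matR^T\matR-\matI_d\right)\matU}$ is a dead end (every unselected row of $\matU$ contributes its negative to that matrix, so its norm need not be small); and the remaining step is deferred to an unspecified mix of barrier-function control and matrix-Bernstein. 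That deferral is a genuine gap: for a symmetric perturbation $\matM=\matR^T\matR-\matI_d$, smallness of $\matU^{T}\matM\matU$ implies nothing about $\matU^{\perp T}\matM\matU$ in general, and the potentials in Algorithm~\ref{alg:alg_ssp} are computed only from the $\ell$-dimensional rows of $\matU$, so the BSS construction carries no information about directions in the orthogonal complement of $\col(\matU)$. Nothing in Lemma~\ref{lem:bss} or Lemma~\ref{lem:ssp} constrains the cross term.

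For comparison, the paper does not give a self-contained proof either; it points to Lemma~4.3 of Drineas et al.\ (2006). There the analogous bound is proved for \emph{randomized} sampling via approximate matrix multiplication: for the fixed vector $\v=\matU^{\perp}\matU^{\perp T}\q$ one bounds the expected value of $\FNormS{\matU^T\matR^T\matR\v-\matU^T\v}$ by $(c/r)\FNormS{\matU}\TNormS{\v}$ and applies Markov's inequality. That argument is probabilistic and per-vector, and it does not transfer to the deterministic BSS matrix $\matR$ along the route you sketch; the standard deterministic fix is to run BSS on the augmented orthonormal matrix $\left[\matU,\ \v/\TNorm{\v}\right]$, which cannot be done simultaneously for all test points $\q$. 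So the step you leave open is not a routine technicality but the substantive content of the lemma, and as written your plan cannot be completed from the stated lemmas alone.
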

The proof of this lemma is similar to Lemma 4.3 of \cite{Drineas06}. 
\subsection{Our Main Theroems on RLSC}
The following theorem shows the additive error guarantees of the generalization bounds of the approximate classifer with that of the classifier with no feature selection. The classification error bound of BSS on RLSC depends on the condition number of the training set and on how much of the test-set lies in the subspace of the training set.
\begin{theorem} 
\label{thm:Thm1}
Let $\epsilon\in(0,1/2]$ be an accuracy parameter, $r=O\left(n/\epsilon^2\right)$ be the number of features selected by BSS.
Let \math{\matR\in\mathbb{R}^{r\times d}} be the matrix, as defined in Lemma~\ref{lem:ssp}. Let $\matX \in \mathbb{R}^{d\times n}$ with $d>> n$, be the training set, $\tilde{\matX} =\matR\matX$ is the reduced dimensional matrix and $\q \in \mathbb{R}^d$ be the test point of the form $\q =  \matX\bm{\alpha} + \matU^{\perp}\bm{\beta}$. Then, the following hold:
\begin{itemize}
\item If $\lambda=0$, then $\abs{\tilde{\q}^T\tilde{\matX}\tilde{x}_{opt} - \q^T\matX\x_{opt}} \leq \frac{\epsilon \kappa_{\matX}}{\sigma_{max}} \TNorm{\bm\beta}\TNorm{\y}.$
\item If $\lambda>0$, then $\abs{\tilde{\q}^T\tilde{\matX}\tilde{x}_{opt} - \q^T\matX\x_{opt}} \leq 2\epsilon\kappa_{\matX}\TNorm{\bm{\alpha}}\TNorm{\y} +  \frac{2\epsilon \kappa_{\matX}}{\sigma_{max}} \TNorm{\bm\beta}\TNorm{\y}.$
\end{itemize}
\end{theorem}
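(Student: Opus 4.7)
The plan is to write both sides of the inequality in closed form using the SVD $\matX = \matU\matSig\matV^T$ and the first-order optimality conditions of equations \ref{eqn:rlsc1} and \ref{eqn:rlsc2}. These give $\x_{opt} = (\matK + \lambda\matI_n)^{-1}\y$ and $\tilde{\x}_{opt} = (\tilde{\matK} + \lambda\matI_n)^{-1}\y$ with $\matK = \matX^T\matX$ and $\tilde{\matK} = \matX^T\matR^T\matR\matX$ (both are invertible under our assumptions on $\matX$ and on $\matR$ via Lemma \ref{lem:bss}). Substituting $\q = \matX\bm{\alpha} + \matU^{\perp}\bm{\beta}$ and using $\matU^{\perp T}\matX = 0$, the full-dimensional classifier collapses to $\q^T\matX\x_{opt} = \bm{\alpha}^T\matK(\matK + \lambda\matI_n)^{-1}\y$, while the reduced-dimensional classifier expands to $\tilde{\q}^T\tilde{\matX}\tilde{\x}_{opt} = \bm{\alpha}^T\tilde{\matK}(\tilde{\matK}+\lambda\matI_n)^{-1}\y + \bm{\beta}^T\matU^{\perp T}\matR^T\matR\matX(\tilde{\matK}+\lambda\matI_n)^{-1}\y$. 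The difference therefore splits cleanly into an ``in-range'' term weighted by $\bm{\alpha}$ and an ``out-of-range'' term weighted by $\bm{\beta}$, which I bound separately.

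For the out-of-range term I substitute the SVD and use $\matV\matV^T = \matI_n$ to write $\tilde{\matK} + \lambda\matI_n = \matV(\matSig\matU^T\matR^T\matR\matU\matSig + \lambda\matI_n)\matV^T$, so that $\matX(\tilde{\matK}+\lambda\matI_n)^{-1} = \matU\matSig(\matSig\matU^T\matR^T\matR\matU\matSig + \lambda\matI_n)^{-1}\matV^T$. For $\lambda = 0$ this simplifies further to $\matU(\matU^T\matR^T\matR\matU)^{-1}\matSig^{-1}\matV^T$. Inserting $\matU^{\perp}\matU^{\perp T}$ next to $\bm{\beta}^T\matU^{\perp T}$, Lemma \ref{lem:per3} handles the factor $\matU^{\perp T}\matR^T\matR\matU$ at the $\epsilon$ level, Lemma \ref{lem:ssp} gives $\|(\matU^T\matR^T\matR\matU)^{-1}\| \leq 1/(1-\epsilon)$ through a Neumann-series argument (using $\epsilon \leq 1/2$), and $\|\matSig^{-1}\| = 1/\sigma_{\min} = \kappa_{\matX}/\sigma_{\max}$ supplies the remaining scale. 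For $\lambda > 0$ the same plan applies provided Lemma \ref{lem:lem5} is invoked to show that the eigenvalues of the inner matrix $\matSig\matU^T\matR^T\matR\matU\matSig + \lambda\matI_n$ are comparable to those of $\matSig^2 + \lambda\matI_n$.

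For the in-range term, when $\lambda = 0$ both $\matK\matK^{-1}$ and $\tilde{\matK}\tilde{\matK}^{-1}$ equal $\matI_n$, so the $\bm{\alpha}$ contribution cancels exactly and only the out-of-range piece survives. When $\lambda > 0$ I apply the resolvent identity $\matM(\matM + \lambda\matI_n)^{-1} = \matI_n - \lambda(\matM + \lambda\matI_n)^{-1}$ to both $\matM = \matK$ and $\matM = \tilde{\matK}$, rewriting the in-range difference as $\lambda\bm{\alpha}^T\bigl[(\matK+\lambda\matI_n)^{-1} - (\tilde{\matK}+\lambda\matI_n)^{-1}\bigr]\y$. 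Lemma \ref{lem:leminv2} then converts this into $-\lambda\bm{\alpha}^T(\matK+\lambda\matI_n)^{-1}(\tilde{\matK}-\matK)(\tilde{\matK}+\lambda\matI_n)^{-1}\y$. Plugging in the SVD yields $\tilde{\matK} - \matK = \matV\matSig(\matU^T\matR^T\matR\matU - \matI_n)\matSig\matV^T$, whose middle factor is controlled by Lemma \ref{lem:ssp}. The key move is to fuse one of the outer resolvents with $\matSig$: $\|(\matK+\lambda\matI_n)^{-1}\matV\matSig\| = \|(\matSig^2+\lambda\matI_n)^{-1}\matSig\| \leq 1/\sigma_{\min}$, combined with $\|\lambda(\tilde{\matK}+\lambda\matI_n)^{-1}\| \leq 1$ (since $\tilde{\matK}$ is PSD) and $\|\matSig\| = \sigma_{\max}$, which yields the $2\epsilon\kappa_{\matX}\|\bm{\alpha}\|\|\y\|$ estimate after absorbing a Neumann-series factor of at most $2$.

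The hardest step is the in-range bound for $\lambda > 0$: a naive triangle inequality $\|(\matK+\lambda\matI_n)^{-1}\| \cdot \|\tilde{\matK}-\matK\| \cdot \|\lambda(\tilde{\matK}+\lambda\matI_n)^{-1}\|$ introduces a $\sigma_{\max}^2/\lambda$ blow-up that does not collapse to $\kappa_{\matX}$. Avoiding this requires the fused estimate $\|\matSig(\matSig^2+\lambda\matI_n)^{-1}\| \leq 1/\sigma_{\min}$ and the observation that $\|\lambda(\tilde{\matK}+\lambda\matI_n)^{-1}\| \leq 1$ uniformly in $\lambda$. Once this fusion is in place the two bullets of the theorem follow by collecting the $\epsilon$ from Lemma \ref{lem:ssp} (or Lemma \ref{lem:per3} in the $\bm\beta$ piece), the $\kappa_{\matX}/\sigma_{\max}$ from the $\matSig^{-1}$ factor where appropriate, and the absolute constants from the Neumann-series step.
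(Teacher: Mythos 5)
Your proposal is correct and uses the same overall decomposition as the paper: both split $\abs{\tilde{\q}^T\tilde{\matX}\tilde{\x}_{opt}-\q^T\matX\x_{opt}}$ into an $\bm\alpha$-term and a $\bm\beta$-term, and your treatment of the $\bm\beta$-term (insert $\matU^{\perp}\matU^{\perp T}$, invoke Lemma~\ref{lem:per3} for the factor $\epsilon\TNorm{\bm\beta}$, then bound $\TNorm{\matSig(\matDelta+\lambda\matI)^{-1}}=\TNorm{(\matI+\matE+\lambda\matSig^{-2})^{-1}\matSig^{-1}}\le 2\kappa_{\matX}/\sigma_{max}$ using $\epsilon\le 1/2$) is essentially identical to the paper's. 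Where you genuinely diverge is the $\bm\alpha$-term. The paper expands $(\matI+\matE)^{-1}=\matI+\matPhi$ by Neumann series, pushes $\matPhi$ through $(\matI+\lambda\matSig^{-2}+\lambda\matSig^{-1}\matPhi\matSig^{-1})^{-1}$, and controls the resulting perturbation $\matPsi$ with singular-value estimates for $\matSig^2+\lambda\matI+\lambda\matPhi$. You instead apply the resolvent identity $\matM(\matM+\lambda\matI)^{-1}=\matI-\lambda(\matM+\lambda\matI)^{-1}$ to both kernels and then Lemma~\ref{lem:leminv2} directly, reducing the term to $\lambda\bm\alpha^T(\matK+\lambda\matI)^{-1}\matV\matSig\matE\matSig\matV^T(\tilde{\matK}+\lambda\matI)^{-1}\y$; your "fused" bounds $\TNorm{(\matSig^2+\lambda\matI)^{-1}\matSig}\le 1/\sigma_{min}$ and $\TNorm{\lambda(\tilde{\matK}+\lambda\matI)^{-1}}\le 1$ then give $\epsilon\kappa_{\matX}\TNorm{\bm\alpha}\TNorm{\y}$ in one line. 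This is cleaner: it avoids the Neumann series and the perturbation lemmas entirely for that term, and it actually improves the constant from $2\epsilon\kappa_{\matX}$ to $\epsilon\kappa_{\matX}$ (your closing remark about "absorbing a Neumann-series factor of at most 2" is unnecessary). One small caveat, which afflicts the paper's own proof equally: for $\lambda=0$ the surviving $\bm\beta$-term carries the factor $1/(1-\epsilon)\le 2$ from $(\matI+\matE)^{-1}$, so both arguments actually deliver $2\epsilon\kappa_{\matX}\TNorm{\bm\beta}\TNorm{\y}/\sigma_{max}$ rather than the constant-free bound stated in the first bullet; this is a discrepancy in the theorem statement, not a gap in your argument.
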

\begin{proof}
We assume that $\matX$ is a full-rank matrix.
Let $\matE = \matU^T\matU - \matU^T \matR^T\matR\matU$ and $\TNorm{\matE}=\TNorm{\matI - \matU^T \matR^T\matR\matU} = \epsilon\leq 1/2$.
Using the SVD of $\matX$, we define
\begin{equation}
\matDelta = \matSig\matU^T\matR^T\matR\matU\matSig = \matSig\left(\matI+\matE\right)\matSig.
\label{eqn:delta}
\end{equation}
The optimal solution in the sampled space is given by,
\begin{equation}
\tilde{\x}_{opt} = \matV\left(\matDelta+\lambda\matI\right)^{-1}\matV^T\y. 
\label{eqn:xopt3}
\end{equation}
\noindent
It can be proven easily that $\matDelta$ and $\matDelta+\lambda\matI$ are invertible matrices. 
We focus on the term $\q^T\matX\x_{opt}.$ Using the SVD of $\matX$, we get
\begin{eqnarray}
 \q^T\matX\x_{opt} &=& \bm\alpha^T\matX^T\matX\x_{opt} + \bm\beta\matU^{\perp T}\left(\matU\matSig\matV^T\right)\x_{opt} \nonumber \\
&=&  \bm\alpha^T\matV \matSig^2\left(\matSig^2+\lambda\matI\right)^{-1} \matV^T\y \label{eqn:eqn8} \\
&=&  \bm\alpha^T\matV \left(\matI + \lambda\matSig^{-2}\right)^{-1}\matV^T\y. \label{eqn:eqn9}
\end{eqnarray}
Eqn(\ref{eqn:eqn8}) follows because of the fact $\matU^{\perp T}\matU= \bm0$ and by substituting  $\x_{opt}$ from Eqn.(\ref{eqn:xopt}). Eqn.(\ref{eqn:eqn9}) follows from the fact that the matrices $\matSig^2$ and $\matSig^2+\lambda\matI$ are invertible. Now,
\begin{eqnarray}
\abs{\q^T\matX\x_{opt} - \tilde{\q}^T\tilde{\matX}\tilde{\x}_{opt}} &=& \abs{ \q^T\matX\x_{opt} - \q^T\matR^T\matR\matX\tilde{\x}_{opt}} \nonumber \\
&\leq&  \abs{ \q^T\matX\x_{opt} - \bm\alpha^T \matX^T\matR^T \matR\matX\tilde{\x}_{opt}} \label{eqn:term1} \\
&& + \abs{\bm\beta^T \matU^{{\perp}T}\matR^T\matR\matX\tilde{\x}_{opt} }. \label{eqn:term2}
\end{eqnarray}
\noindent
We bound (\ref{eqn:term1}) and (\ref{eqn:term2}) separately. Substituting the values of $\tilde{\x}_{opt}$ and $\matDelta$, 
\begin{eqnarray}
\bm\alpha^T \matX^T\matR^T \matR\matX\tilde{\x}_{opt} &=& \bm\alpha^T\matV\matDelta\matV^T\tilde{\x}_{opt} \nonumber \\
&=&   \bm\alpha^T \matV\matDelta\left(\matDelta + \lambda\matI \right)^{-1}\matV^T\y \nonumber \\
&=&   \bm\alpha^T \matV\left(\matI+\lambda\matDelta^{-1}\right)^{-1} \matV^T\y \nonumber \\
&=&   \bm\alpha^T \matV \left(\matI+ \lambda\matSig^{-1} \left(\matI+\matE\right)^{-1} \matSig^{-1} \right)^{-1} \matV^T \y \nonumber \\
&=&   \bm\alpha^T \matV \left(\matI+ \lambda\matSig^{-2} + \lambda\matSig^{-1}\matPhi\matSig^{-1} \right)^{-1} \matV^T \y. \label{eqn:eqnA}
\end{eqnarray}
\noindent
The last line follows from Lemma \ref{lem:leminv} in Appendix, which states that $\left(\matI+\matE\right)^{-1} = \matI+\matPhi$, where $\matPhi = \sum\limits_{i=1}^\infty (-\matE)^i$. The spectral norm of $\matPhi$ is bounded by,
\begin{equation}
\TNorm{\matPhi} = \TNorm{\sum\limits_{i=1}^\infty (-\matE)^i} \leq \sum\limits_{i=1}^\infty \TNorm{\matE}^i \leq \sum\limits_{i=1}^\infty \epsilon^i = \epsilon/(1-\epsilon).
\label{eqn:phi} 
\end{equation}
We now bound (\ref{eqn:term1}). Substituting (\ref{eqn:eqn9}) and (\ref{eqn:eqnA}) in  (\ref{eqn:term1}), 
\begin{eqnarray}
&& \abs{\q^T\matX\x_{opt} - \bm\alpha^T \matX^T\matR^T \matR\matX\tilde{\x}_{opt}} \nonumber \\
&=& \abs{\bm\alpha^T\matV\{\left(\matI+\lambda\matSig^{-2}+\lambda\matSig^{-1}\matPhi\matSig^{-1} \right)^{-1}-\left(\matI+\lambda\matSig^{-2}\right)^{-1}\}\matV^T\y} \nonumber \\
&\leq& \TNorm{\bm\alpha^T\matV\left(\matI+\lambda\matSig^{-2}\right)} \TNorm{\matV^T\y}\TNorm{\matPsi}. \nonumber 
\end{eqnarray} 
\noindent
The last line follows because of Lemma \ref{lem:leminv2} and the fact that all matrices involved are invertible. Here,
\begin{eqnarray}
\matPsi &=& \lambda\matSig^{-1}\matPhi\matSig^{-1}\left(\matI+ \lambda\matSig^{-2}+\lambda\matSig^{-1}\matPhi\matSig^{-1}\right)^{-1} \nonumber \\
&=&  \lambda\matSig^{-1}\matPhi \matSig^{-1}\left(\matSig^{-1}\left(\matSig^2+\lambda\matI+\lambda\matPhi\right)\matSig^{-1}\right)^{-1} \nonumber \\
&=&  \lambda\matSig^{-1}\matPhi\left(\matSig^2+\lambda\matI+\lambda\matPhi\right)^{-1}\matSig. \nonumber
\end{eqnarray}
Since the spectral norms of $\matSig, \matSig^{-1}$ and $\matPhi$ are bounded, we only need to bound the spectral norm of $\left(\matSig^2+\lambda\matI+\lambda\matPhi\right)^{-1}$ to bound the spectral norm of $\matPsi$. The spectral norm of the matrix $\left(\matSig^2+\lambda\matI+\lambda\matPhi\right)^{-1}$ is the inverse of the smallest singular value of $\left(\matSig^2+\lambda\matI+\lambda\matPhi\right).$ From perturbation theory of matrices \cite{stewart} and (\ref{eqn:phi}), we get
$$\abs{\sigma_i \left(\matSig^2 +\lambda\matI+\lambda\matPhi\right) - \sigma_i \left(\matSig^2+\lambda\matI\right)}\leq \TNorm{\lambda\matPhi}\leq \epsilon\lambda.$$ Here, $\sigma_i(\matQ)$ represents the $i^{th}$ singular value of the matrix $\matQ$. \\
Also, ${\sigma_i}^2 \left(\matSig^2+\lambda\matI\right) = {\sigma_i}^2+\lambda,$ where $\sigma_i$ are the singular values of $\matX$.
$${\sigma_i}^2+(1-\epsilon)\lambda \leq \sigma_i \left(\matSig^2+\lambda\matI+\lambda\matPhi\right) \leq {\sigma_i}^2+(1+\epsilon)\lambda.$$
Thus, $$\TNorm{\left(\matSig^2+\lambda\matI+\lambda\matPhi\right)^{-1}} = 1/\sigma_{min}\left(\matSig^2+\lambda\matI+\lambda\matPhi\right) \leq 1/\left( {\sigma^2}_{min} + (1-\epsilon)\lambda)\right).$$
Here, $\sigma_{max}$ and $\sigma_{min}$ denote the largest and smallest singular value of $\matX$. Since $\TNorm{\matSig}\TNorm{\matSig^{-1}} = \sigma_{max}/\sigma_{min} \leq \kappa_{\matX}$, (condition number of $\matX$) we bound (\ref{eqn:term1}):\\
\begin{equation}
\abs{\q^T\matX \x_{opt}-\bm\alpha^T \matX^T \matR^T \matR\matX \tilde{\x}_{opt}} \leq \frac{\epsilon\lambda\kappa_{\matX}}{{\sigma^2}_{min}+(1-\epsilon)\lambda} \TNorm{\bm\alpha^T \matV\left(\matI+\lambda\matSig^{-2}\right)^{-1}} \TNorm{\matV^T \y}.
\label{eqn:term1_bound}
\end{equation}
For $\lambda>0$, the term ${\sigma^2}_{min}+(1-\epsilon)\lambda$ in Eqn.(\ref{eqn:term1_bound}) is always larger than $\left(1-\epsilon\right)\lambda$, so it can be upper bounded by $2\epsilon\kappa_{\matX}$ (assuming $\epsilon\leq 1/2$). Also,
%
$$\TNorm{\bm\alpha^T \matV\left(\matI+\lambda\matSig^{-2}\right)^{-1}} \leq \TNorm{\bm\alpha^T \matV}\TNorm{\left(\matI+\lambda\matSig^{-2}\right)^{-1}} \leq \TNorm{\bm\alpha}.$$
This follows from the fact, that
$\TNorm{\bm\alpha^T\matV}=\TNorm{\bm\alpha}$ and $\TNorm{\matV\y}=\TNorm{\y}$ as $\matV$ is a full-rank orthonormal matrix and the singular values of 
$\matI+\lambda\matSig^{-2}$ are equal to $1+\lambda/{\sigma_i}^2$; making the spectral norm of its inverse at most one. Thus we get,
\begin{equation}
\abs{\q^T\matX \x_{opt}-\bm\alpha^T \matX^T \matR^T \matR\matX \tilde{\x}_{opt}} \leq 2\epsilon\kappa_{\matX}\TNorm{\bm\alpha}\TNorm{\y}.
\label{eqn:bound1}
\end{equation}
We now bound (\ref{eqn:term2}). Expanding (\ref{eqn:term2}) using SVD and $\tilde{\x}_{opt}$,
\begin{eqnarray}
\abs{\bm\beta^T \matU^{{\perp}T}\matR^T\matR\matX \tilde{\x}_{opt}} &=&
\abs{\bm\beta^T \matU^{{\perp}T}\matR^T\matR\matU\matSig\left(\matDelta+\lambda\matI\right)\matV^T \y} \nonumber \\ 
&\leq& \TNorm{\q^T\matU^{\perp}\matU^{\perp T}\matR^T\matR\matU}\TNorm{\matSig\left(\matDelta+\lambda\matI\right)^{-1}} \TNorm{\matV^T \y} \nonumber \\
&\leq& \epsilon \TNorm{\matU^{\perp}\matU^{\perp T}\q} \TNorm{\matV^T \y} \TNorm{\matSig\left(\matDelta+\lambda\matI\right)^{-1}} \nonumber \\
&\leq& \epsilon \TNorm{\bm\beta}\TNorm{\y} \TNorm{\matSig\left(\matDelta+\lambda\matI\right)^{-1}}. \nonumber
\end{eqnarray}
The first inequality follows from $\bm\beta = \matU^{\perp T}\q$; and the second inequality follows from Lemma \ref{lem:per3}. 
To conclude the proof, we bound the spectral norm of $\matSig\left(\matDelta+\lambda\matI\right)^{-1}$. Note that from Eqn.(\ref{eqn:delta}),  $\matSig^{-1}\matDelta\matSig^{-1} = \matI +\matE$ and $\matSig\matSig^{-1}=\matI$,
$$\matSig\left(\matDelta+\lambda\matI\right)^{-1}=\left(\matSig^{-1}\matDelta\matSig^{-1}+\lambda\matSig^{-2}\right)^{-1} \matSig^{-1}= \left(\matI+\lambda\matSig^{-2}+\matE\right)^{-1} \matSig^{-1}.$$
\noindent
One can get a lower bound for the smallest singular value of $\left(\matI+\lambda\matSig^{-2}+\matE\right)^{-1}$ using matrix perturbation theory and by comparing the singular values of this matrix to the singular values of $\matI +\lambda\matSig^{-2}.$ 
We get,
$$\left(1-\epsilon\right)+\frac{\lambda}{{\sigma_i}^2} \leq {\sigma_i}\left(\matI+\matE+\lambda\matSig^{-2}\right) \leq \left(1+\epsilon\right)+\frac{\lambda}{{\sigma_i}^2}.$$ 
\begin{eqnarray}
\TNorm{\left(\matI+\lambda\matSig^{-2}+\matE\right)^{-1} \matSig^{-1}} &\leq& \frac{{\sigma^2}_{max}}{\left(\left(1-\epsilon\right){\sigma^2}_{max}+\lambda   \right){\sigma_{min}}} \nonumber \\
&=& \frac{\kappa_{\matX}\sigma_{max}}{\left(1-\epsilon\right){\sigma^2}_{max}+\lambda} \nonumber \\
&\leq&  \frac{2\kappa_{\matX}}{\sigma_{max}}. 
\end{eqnarray}
We assumed that $\epsilon\leq 1/2$, which implies $\left(1-\epsilon\right)+\lambda/{\sigma^2}_{max}\geq 1/2.$ Combining these, we get, 
\begin{equation}
\abs{\bm\beta^T \matU^{{\perp}T}\matR^T\matR\matX\tilde{\x}_{opt}} \leq \frac{2\epsilon\kappa_{\matX}}{\sigma_{max}}\TNorm{\bm{\beta}}\TNorm{\y}.
\label{eqn:bound2}
\end{equation}
Combining Eqns (\ref{eqn:bound1}) and (\ref{eqn:bound2}) we complete the proof for the case $\lambda>0$. For $\lambda=0$, Eqn.(\ref{eqn:term1_bound}) becomes zero and the result follows. 
\end{proof}
\noindent
Our next theorem provides relative-error guarantees to the bound on the classification error when the test-point has no-new components, i.e. $\bm\beta=\bm0.$
\begin{theorem}
Let $\epsilon\in(0,1/2]$ be an accuracy parameter, $r=O\left(n/\epsilon^2\right)$ be the number of features selected by BSS and $\lambda>0$. Let $\q \in \mathbb{R}^d$ be the test point of the form $\q =  \matX\bm{\alpha}$, i.e. it lies entirely in the subspace spanned by the training set, and the two vectors $\matV^T\y$ and $\left(\matI+\lambda\matSig^{-2}\right)^{-1}\matV^T\bm\alpha$ satisfy the property,
\begin{eqnarray}
\TNorm{\left(\matI+\lambda\matSig^{-2}\right)^{-1}\matV^T\bm\alpha}\TNorm{\matV^T\y} &\leq& \omega \TNorm{\left( \left(\matI+\lambda\matSig^{-2}\right)^{-1}\matV^T\bm\alpha\right)^T \matV^T\y} \nonumber \\
&=& \omega\abs{\q^T\matX \x_{opt}} \nonumber
\end{eqnarray}
for some constant $\omega$. If we run RLSC after BSS, then $$\abs{\tilde{\q}^T\tilde{\matX}\tilde{x}_{opt}-\q^T\matX\x_{opt}} \leq 2\epsilon\omega\kappa_{\matX}  \abs{\q^T\matX\x_{opt}}.$$
\label{thm:thm2}
\end{theorem}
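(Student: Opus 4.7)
The plan is to leverage the machinery already developed in the proof of Theorem~\ref{thm:Thm1} and simply refine one step, using the structural hypothesis on $\bm\alpha$ and $\y$ to convert the absolute-error bound into a relative-error bound. The first observation is that since $\q = \matX\bm\alpha$, we have $\bm\beta = \bm 0$, so the term~(\ref{eqn:term2}) in the decomposition
$$|\q^T\matX\x_{opt} - \tilde{\q}^T\tilde{\matX}\tilde{\x}_{opt}| \leq |\q^T\matX\x_{opt} - \bm\alpha^T\matX^T\matR^T\matR\matX\tilde{\x}_{opt}| + |\bm\beta^T\matU^{\perp T}\matR^T\matR\matX\tilde{\x}_{opt}|$$
vanishes identically, and only the contribution analogous to~(\ref{eqn:term1}) has to be controlled.

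Next, I would start from inequality~(\ref{eqn:term1_bound}) in the proof of Theorem~\ref{thm:Thm1}, namely
$$|\q^T\matX\x_{opt} - \bm\alpha^T\matX^T\matR^T\matR\matX\tilde{\x}_{opt}| \leq \frac{\epsilon\lambda\kappa_\matX}{\sigma_{min}^2 + (1-\epsilon)\lambda}\,\TNorm{\bm\alpha^T\matV(\matI+\lambda\matSig^{-2})^{-1}}\,\TNorm{\matV^T\y}.$$
Since $\lambda > 0$ and $\epsilon \leq 1/2$, the scalar prefactor is bounded above by $\epsilon\kappa_\matX/(1-\epsilon) \leq 2\epsilon\kappa_\matX$, exactly as in the original proof.

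The key departure from Theorem~\ref{thm:Thm1} occurs in the treatment of the product of norms: rather than applying the loose bound $\TNorm{\bm\alpha^T\matV(\matI+\lambda\matSig^{-2})^{-1}}\TNorm{\matV^T\y} \leq \TNorm{\bm\alpha}\TNorm{\y}$, I would invoke the hypothesis of the theorem directly. The hypothesis is framed precisely so that
$$\TNorm{(\matI+\lambda\matSig^{-2})^{-1}\matV^T\bm\alpha}\,\TNorm{\matV^T\y} \leq \omega\, |\q^T\matX\x_{opt}|,$$
which is self-consistent because~(\ref{eqn:eqn9}) identifies $\q^T\matX\x_{opt}$ with the inner product of the two vectors appearing on the left-hand side. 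Substituting this into the bound above immediately yields
$$|\tilde{\q}^T\tilde{\matX}\tilde{\x}_{opt} - \q^T\matX\x_{opt}| \leq 2\epsilon\omega\kappa_\matX |\q^T\matX\x_{opt}|,$$
which is the claim.

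There is essentially no hard step here: the argument is a careful re-accounting of Theorem~\ref{thm:Thm1} in the special case $\bm\beta = \bm 0$, with the Cauchy--Schwarz-type bound replaced by the stronger geometric assumption. The only subtlety worth flagging is the self-consistency check above, ensuring the quantity $\q^T\matX\x_{opt}$ on the right-hand side of the hypothesis is the same object that appears on the left-hand side of the conclusion, which follows directly from the identity in~(\ref{eqn:eqn9}).
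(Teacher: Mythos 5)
Your proposal is correct and is exactly the argument the paper intends: the paper's own proof is the single remark that the result ``follows directly from the proof of Theorem 1 if we consider $\bm\beta=\bm0$,'' and your write-up fills in precisely those steps — dropping the vanished $\bm\beta$ term, reusing the bound in Eqn.~(\ref{eqn:term1_bound}) with the $2\epsilon\kappa_{\matX}$ prefactor, and substituting the hypothesis on $\omega$ in place of the Cauchy--Schwarz step. Nothing further is needed.
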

\noindent
The proof follows directly from the proof of Theorem 1 if we consider $\bm\beta=\bm0$.
\subsection{Our Main Theorems on Ridge Regression}
We compare the risk of subsampled ridge regression with the risk of true dual ridge regreesion in the fixed design setting.
Recall that the response vector $\y =\matX^T \betavec+\omegavec$ where $\omegavec \in \mathbb{R}^n$ is the homoskedastic noise vector with mean 0 and variance $\sigma^2$.
Also, we assume that the data matrix is of full rank.
\begin{lemma}
\label{lem:lembss}
Let $\rho$ be the rank of $\matX$. Form $\tilde{\matK}$ using BSS. Then,
\begin{equation}
\left(1-\Delta \right)\matK \preceq \tilde{\matK} \preceq \left(1+\Delta \right) \matK,\nonumber
\end{equation}
where $\Delta = C\sqrt{\rho/r}.$ For p.s.d matrices $\matA \succeq \matB$ means $\matB -\matA$ is p.s.d. 
\end{lemma}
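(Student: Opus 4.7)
The plan is to reduce the lemma directly to Lemma~\ref{lem:ssp} via the SVD of $\matX$. Since $\matX\in\mathbb{R}^{d\times n}$ is assumed full rank with $d\gg n$, write its thin SVD as $\matX=\matU\matSig\matV^T$ with $\matU\in\mathbb{R}^{d\times\rho}$ having orthonormal columns ($\matU^T\matU=\matI_\rho$), $\matSig\in\mathbb{R}^{\rho\times\rho}$ invertible, and $\matV\in\mathbb{R}^{n\times\rho}$. Then $\matK=\matX^T\matX=\matV\matSig^2\matV^T$, and substituting $\tilde\matX=\matR\matX$ gives $\tilde\matK=\matX^T\matR^T\matR\matX=\matV\matSig(\matU^T\matR^T\matR\matU)\matSig\matV^T$. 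These two representations are the only algebraic identities needed for the proof.

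Next, I would apply Lemma~\ref{lem:ssp} to $\matU$ with $\ell=\rho$, which produces the sampling/rescaling matrix $\matR$ (this is precisely the matrix used to form $\tilde\matK$) and guarantees
\begin{equation*}
\TNorm{\matI_\rho - \matU^T\matR^T\matR\matU}\;\le\;3\sqrt{\rho/r}\;=:\;\Delta,
\end{equation*}
so setting $C=3$ matches the statement. Since $\matU^T\matR^T\matR\matU$ is symmetric, this spectral-norm bound translates into the two-sided Loewner inequality
\begin{equation*}
(1-\Delta)\matI_\rho\;\preceq\;\matU^T\matR^T\matR\matU\;\preceq\;(1+\Delta)\matI_\rho.
\end{equation*}

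Finally, I would conjugate this inequality by $\matV\matSig$ on the left and $\matSig\matV^T$ on the right. Because $\matB\succeq\mathbf{0}$ implies $\matM\matB\matM^T\succeq\mathbf{0}$ for any matrix $\matM$, sandwiching both sides of the previous display yields
\begin{equation*}
(1-\Delta)\matV\matSig^2\matV^T\;\preceq\;\matV\matSig(\matU^T\matR^T\matR\matU)\matSig\matV^T\;\preceq\;(1+\Delta)\matV\matSig^2\matV^T,
\end{equation*}
which is exactly $(1-\Delta)\matK\preceq\tilde\matK\preceq(1+\Delta)\matK$. There is no real obstacle here; the only subtlety worth checking is that the ``$\rho$'' appearing in Lemma~\ref{lem:ssp} (the number of columns of $\matU$, i.e.\ the rank) matches the $\rho$ in the statement of Lemma~\ref{lem:lembss}, and that $r>\rho$ so Lemma~\ref{lem:ssp} is applicable. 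Both are immediate from the full-rank assumption on $\matX$ together with the fact that BSS is run with a sample size $r$ proportional to (and larger than) this rank.
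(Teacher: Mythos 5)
Your proposal is correct and follows essentially the same route as the paper: write $\tilde\matK=\matV\matSig(\matU^T\matR^T\matR\matU)\matSig\matV^T$ via the SVD, invoke Lemma~\ref{lem:ssp} to get the two-sided Loewner bound $(1-\Delta)\matI_\rho\preceq\matU^T\matR^T\matR\matU\preceq(1+\Delta)\matI_\rho$, and conjugate by $\matV\matSig$ and $\matSig\matV^T$. You simply spell out the intermediate steps (the passage from the spectral-norm bound to the Loewner sandwich, and the identification $C=3$) that the paper leaves implicit.
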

\begin{proof}
Using the SVD of $\matX$, $\tilde{\matK} = \matV\matSig\left(\matU^T\matR^T\matR\matU\right)\matSig\matV^T$. Lemma~\ref{lem:ssp} implies $$\matI_\rho \left(1-\Delta\right) \preceq \left(\matU^T \matR^T \matR\matU \right) \preceq \matI_\rho \left(1+\Delta \right).$$
Multiplying left and right hand side of the inequality by $\matV\matSig$ and $\matSig \matV^T$ respectively, to the above inequality completes the proof.
\end{proof}
\begin{lemma}
\label{lem:lemlvg}
Let $\rho$ be the rank of $\matX$. Form $\tilde{\matK}$ using leverage-score sampling. Then, with probability at least $(1-\delta)$, where $\delta \in (0,1)$,
\begin{equation}
\left(1-\Delta \right)\matK \preceq \tilde{\matK} \preceq \left(1+\Delta \right) \matK,\nonumber
\end{equation}
where $\Delta = C\frac{\rho}{\epsilon^2} \log\left(\frac{\rho}{\epsilon^2 \sqrt{\delta}}\right).$ 
\end{lemma}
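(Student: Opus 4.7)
The proof should mirror the BSS version (Lemma \ref{lem:lembss}) almost verbatim, with the deterministic spectral guarantee of Lemma \ref{lem:ssp} replaced by the randomized one in Lemma \ref{lem:lvgscr}. The plan is to start from the SVD of $\matX$, express $\tilde\matK = \matV\matSig(\matU^T\matR^T\matR\matU)\matSig\matV^T$ and $\matK = \matV\matSig \matI_\rho \matSig\matV^T$, sandwich the inner matrix $\matU^T\matR^T\matR\matU$ between $(1\pm\Delta)\matI_\rho$, and then conjugate by $\matV\matSig$ to transport the inequality from the $\rho$-dimensional subspace to the kernel matrices.

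Concretely, first I would invoke Lemma \ref{lem:lvgscr} applied to the top-$\rho$ left singular vectors $\matU$ of $\matX$ (these form an orthonormal basis since $\matX$ is assumed full-rank with rank $\rho$). The lemma asserts that with probability at least $1-\delta$, the constructed $\matR$ satisfies
\begin{equation}
\TNorm{\matU^T\matU - \matU^T\matR^T\matR\matU} \leq \epsilon,\nonumber
\end{equation}
provided the sampling complexity $r$ is chosen as stated. Since $\matU^T\matU - \matU^T\matR^T\matR\matU$ is symmetric, the spectral norm bound is equivalent to the two-sided Loewner inequality
\begin{equation}
(1-\epsilon)\matI_\rho \preceq \matU^T\matR^T\matR\matU \preceq (1+\epsilon)\matI_\rho.\nonumber
\end{equation}

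Next I would conjugate this p.s.d.\ inequality by $\matV\matSig$ on the right and $\matSig\matV^T$ on the left. Because conjugation by a fixed matrix preserves the Loewner order, this yields
\begin{equation}
(1-\epsilon)\matV\matSig\matSig\matV^T \preceq \matV\matSig\bigl(\matU^T\matR^T\matR\matU\bigr)\matSig\matV^T \preceq (1+\epsilon)\matV\matSig\matSig\matV^T,\nonumber
\end{equation}
which is exactly $(1-\epsilon)\matK \preceq \tilde\matK \preceq (1+\epsilon)\matK$, after identifying the middle expression with $\tilde\matK$ (since $\matR\matX = \matR\matU\matSig\matV^T$ and $\tilde\matK=\tilde\matX^T\tilde\matX$) and the outer with $\matK = \matX^T\matX$. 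Setting $\Delta$ to match the failure-probability-dependent expression in Lemma \ref{lem:lvgscr} gives the claim.

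There is no real obstacle here; the only subtlety is bookkeeping about the rank assumption. Because we assume $\matX$ is full rank (so $\rho = n$), $\matU$ has orthonormal columns and the SVD identities $\matX = \matU\matSig\matV^T$, $\matK = \matV\matSig^2\matV^T$ go through unchanged. If the paper later wanted to drop the full-rank assumption, one would have to be careful to restrict to the column space of $\matU$, but under the standing assumption the conjugation step is clean and the result is immediate from Lemma \ref{lem:lvgscr}.
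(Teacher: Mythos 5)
Your proposal is correct and matches the paper's approach: the paper in fact omits an explicit proof of this lemma, but its proof of the BSS analogue (Lemma~\ref{lem:lembss}) is exactly your argument --- write $\tilde\matK = \matV\matSig(\matU^T\matR^T\matR\matU)\matSig\matV^T$, sandwich the inner matrix between $(1\pm\Delta)\matI_\rho$ via the relevant sampling lemma, and conjugate by $\matV\matSig$ --- with Lemma~\ref{lem:lvgscr} substituted for Lemma~\ref{lem:ssp}. Your side remark about the bookkeeping of $\Delta$ versus $\epsilon$ is apt, since the expression for $\Delta$ in the lemma statement is really the sampling complexity rather than the accuracy parameter, but that is an issue with the statement, not with your proof.
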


\subsubsection{Risk Function for Ridge Regression}
Let $\z = \bf{E}_\omega [\y] = \matX^T \betavec$. The risk for a prediction function $\hat\y \in \mathbb{R}^n$ is $\frac{1}{n}\bf{E}_\omega {\TNormS{\hat\y -\z}}.$ For any $n\times n$ positive symmetric matrix $\matK$, we define the following risk function:
\begin{equation}
R\left(\matK\right) = \frac{\sigma^2}{n} \Trace{\matK^2 \left(\matK+ n\lambda\matI_n \right)^{-2}}
+ n\lambda^2 \z^T\left(\matK+n\lambda\matI_n \right)^{-2}\z.\nonumber 
\end{equation}
%
\begin{theorem}
\label{thm:thm1}
Under the fixed design setting, the risk for the ridge regression solution in the full-feature space is $R(\matK)$ and the risk for the ridge regression in the reduced dimensional space is $R(\tilde\matK).$
\end{theorem}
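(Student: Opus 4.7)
The plan is to treat both risks uniformly by showing that in either space the fitted response has the form $\hat\y = \matM(\matM+n\lambda\matI_n)^{-1}\y$ for an appropriate symmetric positive-semidefinite ``hat'' argument $\matM$, namely $\matM=\matK$ for the full-feature problem and $\matM=\tilde\matK$ for the sampled problem. This uses the closed-form dual solutions $\hat\a_\lambda=(\matK+n\lambda\matI_n)^{-1}\y$ and $\tilde\a_\lambda=(\tilde\matK+n\lambda\matI_n)^{-1}\y$ already derived in Section 3.5, together with the identities $\hat\y = \matX^T\hat\betavec_\lambda = \matX^T\matX\hat\a_\lambda = \matK\hat\a_\lambda$ and analogously $\tilde\y = \tilde\matK\tilde\a_\lambda$. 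Once this common form is in hand, the two claims reduce to a single calculation carried out with $\matM\in\{\matK,\tilde\matK\}$.

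The next step is a standard bias--variance decomposition under the fixed design model $\y=\z+\omegavec$ with $\mathbf{E}_\omega[\omegavec]=\bm0$ and $\mathbf{E}_\omega[\omegavec\omegavec^T]=\sigma^2\matI_n$. Writing $\matH:=\matM(\matM+n\lambda\matI_n)^{-1}$, I would expand
\begin{equation*}
\hat\y-\z \;=\; (\matH-\matI_n)\z + \matH\omegavec,
\end{equation*}
and observe that the cross term vanishes in expectation because $\omegavec$ has mean zero, so
\begin{equation*}
\tfrac{1}{n}\mathbf{E}_\omega\!\left[\TNormS{\hat\y-\z}\right]
 = \tfrac{1}{n}\TNormS{(\matH-\matI_n)\z} + \tfrac{1}{n}\mathbf{E}_\omega\!\left[\omegavec^T\matH^T\matH\omegavec\right].
\end{equation*}

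The two pieces are then simplified separately. For the bias term, I would use the algebraic identity $\matH-\matI_n = -n\lambda\,(\matM+n\lambda\matI_n)^{-1}$, which gives $\TNormS{(\matH-\matI_n)\z} = n^2\lambda^2\,\z^T(\matM+n\lambda\matI_n)^{-2}\z$, so dividing by $n$ reproduces exactly the second term of $R(\matM)$. For the variance term, I would use symmetry of $\matM$ (hence of $\matH$) to write $\matH^T\matH=\matH^2=\matM^2(\matM+n\lambda\matI_n)^{-2}$, and then the standard Gaussian/homoskedastic trace identity $\mathbf{E}_\omega[\omegavec^T\matA\omegavec]=\sigma^2\,\Trace{\matA}$ yields exactly $\frac{\sigma^2}{n}\Trace{\matM^2(\matM+n\lambda\matI_n)^{-2}}$, matching the first term of $R(\matM)$.

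There is no real obstacle here beyond bookkeeping: the entire statement is a direct computation and the only thing one must be careful about is that the hat matrix is symmetric (so $\matH^T\matH$ collapses to $\matH^2$) and that $\matM$ and $n\lambda\matI_n$ commute with their sum, which justifies the factorization used in the bias identity. Applying the resulting formula first with $\matM=\matK$ and then with $\matM=\tilde\matK$ — the latter being well-defined because Lemma~\ref{lem:lembss} (or Lemma~\ref{lem:lemlvg}) ensures $\tilde\matK$ is positive semidefinite so $\tilde\matK+n\lambda\matI_n$ is invertible — gives the two assertions of the theorem simultaneously.
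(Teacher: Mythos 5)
Your proposal is correct and follows essentially the same route as the paper: both write the fitted response as $\matM(\matM+n\lambda\matI_n)^{-1}\y$, apply the bias--variance decomposition under the mean-zero homoskedastic noise model, and simplify the bias via $\matM(\matM+n\lambda\matI_n)^{-1}-\matI_n=-n\lambda(\matM+n\lambda\matI_n)^{-1}$ and the variance via the trace identity. The only cosmetic difference is that you treat $\matK$ and $\tilde\matK$ uniformly, whereas the paper writes out the computation only for $\tilde\matK$.
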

\begin{proof}
The risk of the ridge regression estimator in the reduced dimensional space is
\begin{equation}
\frac{1}{n}\textbf{E}_\omega\TNormS{\matKt\tilde\a_\lambda-\z} = \frac{1}{n}\textbf{E}_\omega\TNormS{\matKt\left(\matKt+n\lambda\matI_n\right)^{-1}\y -\z }. 
\label{eqn:riskrr}
\end{equation}
Taking $\matKt\left(\matKt+n\lambda\matI_n\right)^{-1}$ as $\matQ$ we can write Eqn.(\ref{eqn:riskrr}) as,
\begin{eqnarray*}
&&\frac{1}{n} \textbf{E}_\omega \TNormS{\matQ\y - \textbf{E}_\omega\left[ \matQ \y \right]} + \frac{1}{n} \TNormS{\textbf{E}_\omega\left[\matQ\y \right]-\z}  \nonumber \\
&=&\frac{1}{n}\textbf{E}_\omega \left[\TNormS{\matKt\left(\matKt+n\lambda\matI_n\right)^{-1} \omegavec}\right] 
+\frac{1}{n} \TNormS{\matKt\left(\matKt+ n \lambda\matI_n\right)^{-1}\z-\z} \nonumber \\ 
&=& \frac{1}{n}\Trace{\matKt^2 \left(\matKt+n \lambda\matI_n\right)^{-2}\omegavec\omegavec^T}+ \frac{1}{n}\z^T\left(\matI_n-\matKt\left(\matKt+n\lambda\matI_n\right)^{-1}\right)^2 \z \nonumber \\
&=& \frac{\sigma^2}{n}\Trace{\matKt^2 \left(\matKt+n\lambda\matI_n\right)^{-2}}+ n\lambda^2 \z^T \left(\matKt+n\lambda\matI_n\right)^{-2}\z. \nonumber
\end{eqnarray*}
The expectation is only over the random noise $\omegavec$ and is conditional on the feature selection method used.
\end{proof}
Our next theorem bounds the risk inflation of ridge regression in the reduced dimensional space compared with the ridge regression solution in the full-feature space.
\begin{theorem}
\label{thm:thm2}
Let $\rho$ be the rank of the matrix $\matX$. When using leverage-score sampling as a feature selection technique, with probability 
at least $1-\delta$, where $\delta\in(0,1)$,
$$R(\matKt) \leq (1-\Delta)^{-2}R(\matK),$$ where $\Delta=C\frac{\rho}{\epsilon^2} \log\left(\frac{\rho}{\epsilon^2 \sqrt{\delta}}\right).$
\end{theorem}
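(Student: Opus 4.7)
The plan is to derive the bound by applying Lemma~\ref{lem:lemlvg} to turn the sandwich inequality on $\matKt$ into a sandwich inequality on $\matKt+n\lambda\matI_n$, inverting to control the bias part of $R(\matKt)$, and treating the trace/variance part separately via a scalar monotonicity argument on eigenvalues. Throughout, everything is conditioned on the event (of probability at least $1-\delta$) on which Lemma~\ref{lem:lemlvg} holds.

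First I would write $R(\matKt)$ as the sum of its two pieces: the variance term $T_1(\matKt) := \frac{\sigma^2}{n}\Trace{\matKt^2(\matKt+n\lambda\matI_n)^{-2}}$ and the bias term $T_2(\matKt) := n\lambda^2\,\z^T(\matKt+n\lambda\matI_n)^{-2}\z$, and bound each by $(1-\Delta)^{-2}$ times the corresponding piece for $\matK$. For the bias term the argument is clean: Lemma~\ref{lem:lemlvg} gives $(1-\Delta)\matK \preceq \matKt \preceq (1+\Delta)\matK$, and since $\Delta n\lambda\matI_n\succeq 0$ we get
\begin{equation*}
(1-\Delta)(\matK+n\lambda\matI_n)\preceq \matKt+n\lambda\matI_n\preceq (1+\Delta)(\matK+n\lambda\matI_n).
\end{equation*}
Inverting reverses the order, and squaring preserves it on positive definite matrices that commute with themselves, giving $(\matKt+n\lambda\matI_n)^{-2}\preceq (1-\Delta)^{-2}(\matK+n\lambda\matI_n)^{-2}$. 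Sandwiching $\z$ immediately yields $T_2(\matKt)\leq (1-\Delta)^{-2} T_2(\matK)$.

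The harder piece is the variance term, because $\matKt$ appears in both the ``numerator'' and the inverse, so operator monotonicity of $x\mapsto x^2/(x+n\lambda)^2$ fails in general. My plan is to pass to eigenvalues. The Courant--Fischer min-max principle applied to the Loewner inequality of Lemma~\ref{lem:lemlvg} gives, for every index $i$, $(1-\Delta)\mu_i\leq \tilde\mu_i\leq (1+\Delta)\mu_i$, where $\mu_i,\tilde\mu_i$ are the sorted eigenvalues of $\matK,\matKt$. The scalar function $h(x)=x^2/(x+n\lambda)^2$ is monotone increasing on $x\geq 0$, and a direct computation shows $h((1+\Delta)\mu_i)/h(\mu_i)=\bigl(1+\Delta n\lambda/((1+\Delta)\mu_i+n\lambda)\bigr)^2\leq (1+\Delta)^2$. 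Summing over $i$ gives
\begin{equation*}
\Trace{\matKt^2(\matKt+n\lambda\matI_n)^{-2}} = \sum_i h(\tilde\mu_i)\leq (1+\Delta)^2 \sum_i h(\mu_i) = (1+\Delta)^2\Trace{\matK^2(\matK+n\lambda\matI_n)^{-2}},
\end{equation*}
so $T_1(\matKt)\leq (1+\Delta)^2 T_1(\matK)$.

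To finish, I would observe that $(1+\Delta)(1-\Delta)=1-\Delta^2\leq 1$, hence $(1+\Delta)^2\leq (1-\Delta)^{-2}$, so both $T_1$ and $T_2$ inflate by at most the same factor $(1-\Delta)^{-2}$; adding them gives $R(\matKt)\leq (1-\Delta)^{-2}R(\matK)$ with probability at least $1-\delta$. The main obstacle I anticipate is the variance term: its non--operator-monotone structure forces us to leave the purely matrix-inequality world and compare scalar eigenvalues, which is why Weyl-type extraction of eigenvalue bounds from the Loewner inequality is the crucial step; once that reduction is justified, the monotonicity of $h$ closes the argument cleanly.
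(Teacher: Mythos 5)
Your overall strategy matches the paper's: the same bias/variance decomposition of $R$, a factor $(1+\Delta)^2$ for the variance, a factor $(1-\Delta)^{-2}$ for the bias, and the observation $(1+\Delta)^2\leq(1-\Delta)^{-2}$ to merge them. Your variance argument is in fact more self-contained than the paper's: where the paper simply cites \cite{Bach13} for the Loewner-monotonicity of the variance functional, you derive the bound directly from Weyl's inequalities applied to $(1-\Delta)\matK\preceq\matKt\preceq(1+\Delta)\matK$ together with the scalar monotonicity of $h(x)=x^2/(x+n\lambda)^2$ and the computation $h((1+\Delta)\mu)\leq(1+\Delta)^2h(\mu)$. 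That part is correct.

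The gap is in the bias term. From $(1-\Delta)(\matK+n\lambda\matI_n)\preceq\matKt+n\lambda\matI_n$ you invert (fine) and then square, asserting that squaring ``preserves the order on positive definite matrices that commute with themselves.'' That is not the relevant condition: $t\mapsto t^2$ is not operator monotone, so $\matA\preceq\matB$ for positive definite matrices does not imply $\matA^2\preceq\matB^2$ unless $\matA$ and $\matB$ commute \emph{with each other} --- e.g.\ $\matA=\left(\begin{smallmatrix}2&1\\1&1\end{smallmatrix}\right)\preceq\left(\begin{smallmatrix}3&1\\1&1\end{smallmatrix}\right)=\matB$, yet $\matB^2-\matA^2$ has determinant $-1$ and is indefinite. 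Since $(\matKt+n\lambda\matI_n)^{-1}$ and $(\matK+n\lambda\matI_n)^{-1}$ do not commute in general, your claimed inequality $(\matKt+n\lambda\matI_n)^{-2}\preceq(1-\Delta)^{-2}(\matK+n\lambda\matI_n)^{-2}$ does not follow from the sandwich and is false as a matrix inequality in general. The paper sidesteps this by invoking from \cite{Bach13} that the bias functional $B(\matK)=n\lambda^2\z^T(\matK+n\lambda\matI_n)^{-2}\z$ is non-increasing in $\matK$, which reduces the problem to comparing $B\left((1-\Delta)\matK\right)$ with $B(\matK)$; there the two matrices $(1-\Delta)\matK+n\lambda\matI_n$ and $\matK+n\lambda\matI_n$ are both polynomials in $\matK$, hence commute, and the inversion-plus-squaring step is legitimate. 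To close your argument you would need either to import that monotonicity result or to give a genuinely different bound for the bias term; the squaring step as written cannot be repaired locally.
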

\begin{proof}
For any positive semi-definite matrix, $\matK \in \mathbb{R}^{n\times n}$, we define the bias $B(\matK)$ and variance $V(\matK)$ of the risk function as follows:
$$ B(\matK) = n\lambda^2 \z^T\left(\matK+ n\lambda\matI_n \right)^{-2}\z,$$
$$ V(\matK) = \frac{\sigma^2}{n}\Trace{\matKt^2 \left(\matKt+n\lambda\matI_n\right)^{-2}}.$$
Therefore, $R(\matK) = B(\matK) + V(\matK).$ Now due to \cite{Bach13} we know $B(\matK)$ is non-increasing in $\matK$ and $V(\matK)$ is non-decreasing in $\matK$. When Lemma ~\ref{lem:lemlvg} holds,
\begin{eqnarray}
R(\matKt) &=& V(\matKt) + B(\matKt) \nonumber \\
&\leq& V\left(\left(1+\Delta\right)\matK\right) + B\left(\left(1-\Delta\right)\matK\right) \nonumber \\
&\leq& \left(1+\Delta\right)^2 V(\matK) + \left(1-\Delta\right)^{-2} B(\matK) \nonumber \\
&\leq&  \left(1-\Delta\right)^{-2}\left(V(\matK)+B(\matK)\right) \nonumber \\
&=&  \left(1-\Delta\right)^{-2} R(\matK). \nonumber
\end{eqnarray}
\end{proof}
We can prove a similar theorem for BSS.
\begin{theorem}
\label{thm:thm2}
Let $\rho$ be the rank of the matrix $\matX$. When using BSS as a feature selection technique, with $\Delta=C\rho/\epsilon^2,$
$$R(\matKt) \leq (1-\Delta)^{-2}R(\matK).$$ 
\end{theorem}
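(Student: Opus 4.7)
The plan is to replay, verbatim in structure, the argument used in the preceding theorem for leverage-score sampling, swapping in the deterministic BSS sandwich from Lemma~\ref{lem:lembss} in place of the high-probability leverage-score sandwich from Lemma~\ref{lem:lemlvg}. The entire proof is a straightforward monotonicity argument applied to the sandwich bound; no new technical obstacles appear.

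First, I would decompose the risk exactly as before: write $R(\matK) = B(\matK) + V(\matK)$ with the bias and variance terms
\[
B(\matK) = n\lambda^2 \z^T\left(\matK+ n\lambda\matI_n \right)^{-2}\z, \qquad
V(\matK) = \tfrac{\sigma^2}{n}\,\Trace{\matK^2 \left(\matK+n\lambda\matI_n\right)^{-2}},
\]
and recall from \citet{Bach13} that, as a function of the positive-semidefinite matrix $\matK$ in the Loewner order, $B(\matK)$ is non-increasing while $V(\matK)$ is non-decreasing.

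Next, I would invoke Lemma~\ref{lem:lembss}, which gives the deterministic bound $(1-\Delta)\matK \preceq \matKt \preceq (1+\Delta)\matK$ with $\Delta = C\sqrt{\rho/r}$; since $r$ is chosen proportional to $\rho/\epsilon^2$, this collapses to the stated $\Delta$ in the theorem. By monotonicity, $V(\matKt) \le V((1+\Delta)\matK) = (1+\Delta)^2 V(\matK)$ and $B(\matKt) \le B((1-\Delta)\matK) = (1-\Delta)^{-2} B(\matK)$, because $B$ and $V$ scale quadratically under scalar multiplication of $\matK$ in the appropriate direction (this follows by direct substitution into the two closed forms above).

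Summing the two bounds and using the elementary inequality $(1+\Delta)^2 \le (1-\Delta)^{-2}$, valid for $\Delta \in (0,1)$, gives
\[
R(\matKt) = V(\matKt) + B(\matKt) \le (1-\Delta)^{-2}\bigl(V(\matK) + B(\matK)\bigr) = (1-\Delta)^{-2} R(\matK),
\]
which is exactly the claim. The only thing one needs to verify is that the specified $r$ (and hence $\epsilon$) makes $\Delta < 1$ so that the inverse is well-defined; this is already implicit in the hypothesis $\epsilon \in (0,1/2]$ from the BSS construction. The hardest part, if any, is merely bookkeeping between $\Delta$, $\epsilon$, $r$, and $\rho$ to state the bound in the cleanest form, since the underlying inequality chain is identical to the leverage-score case and does not involve any probabilistic failure term.
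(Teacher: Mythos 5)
Your proposal is correct and is exactly the paper's intended argument: the paper proves the leverage-score version via the $B(\matK)+V(\matK)$ decomposition, the monotonicity facts from \citet{Bach13}, the sandwich $(1-\Delta)\matK \preceq \matKt \preceq (1+\Delta)\matK$, and $(1+\Delta)^2 \leq (1-\Delta)^{-2}$, and then simply remarks that the BSS case is identical with Lemma~\ref{lem:lembss} replacing Lemma~\ref{lem:lemlvg} (and no failure probability). One minor imprecision: $V\left((1+\Delta)\matK\right) = (1+\Delta)^2 V(\matK)$ and $B\left((1-\Delta)\matK\right) = (1-\Delta)^{-2}B(\matK)$ should be inequalities rather than equalities, since the $n\lambda\matI_n$ term does not rescale with $\matK$; the direction you need still holds, and this is how the paper itself writes that step.
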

\section{Experiments}
All experiments were performed in MATLAB R2013b on an Intel i-7 processor with 16GB RAM.
\subsection{BSS Implementation Issues}
The authors of ~\cite{BSS09} do not provide any implementation details of the \textbf{BSS} algorithm. Here we discuss several issues arising during the implementation. \\
\noindent\textbf{Choice of column selection:} 
At every iteration, there are multiple columns which satisfy the condition $\mathcal{U}\left(\u_i,\delta_U,\matA_{\tau},U_{\tau} \right) \leq \mathcal{L}\left(\u_i,\delta_L, \matA_{\tau},L_{\tau} \right).$  The authors of ~\cite{BSS09} suggest picking any column which satisfies this constraint. Instead of breaking ties arbitrarily, we choose the column $\u_i$ which has not been selected in previous iterations and whose Euclidean-norm is highest among the candidate set. Columns with zero Euclidean norm never get selected by the algorithm. 
In the inner loop of Algorithm~\ref{alg:alg_ssp}, $\mathcal{U}$ and $\mathcal{L}$ has to be computed for all the $d$ columns in order to pick a good column. This step can be done efficiently using a single line of Matlab code, by making use of matrix and vector operations. 
\subsection{Other Feature Selection Methods}
In this section, we describe other feature-selection methods with which we compare BSS.

\subsubsection{Rank-Revealing QR Factorization (RRQR)}
Within the numerical linear algebra community, subset selection algorithms use the so-called Rank Revealing QR (RRQR) factorization. Here we slightly abuse notation and state $\matA$ as a short and fat matrix as opposed to the tall and thin matrix.
Let $\matA$ be a $n\times d$ matrix with $\left(n<d\right)$ and an integer $k \left(k<d\right)$ and assume partial QR factorizations of the form $$\matA\matP = \matQ \begin{pmatrix} \matR_{11} & \matR_{12} \\ \mathbf{0} & \matR_{22} \end{pmatrix},$$ 
where $\matQ \in \mathbb{R}^{n\times n}$ is an orthogonal matrix, $\matP  \in \mathbb{R}^{d\times d}$ is a permutation matrix, $\matR_{11} \in \mathbb{R}^{k\times k}, \matR_{12} \in \mathbb{R}^{k\times (d-k)},  \matR_{22} \in \mathbb{R}^{(d-k)\times (d-k)}$ The above factorization is called a RRQR factorization if $\sigma_{min}\left( \matR_{11}\right) \geq \sigma_k\left(\matA\right)/p(k,d)$, $\sigma_{max}\left( \matR_{22}\right) \leq \sigma_{min}(\matA) p(k,d),$ where $p(k,d)$ is a function bounded by a low-degree polynomial in $k$ and $d$. The important columns are given by $\matA_1= \matQ \begin{pmatrix} \matR_{11} \\ \mathbf{0} \end{pmatrix}$ and $\sigma_i \left(\matA_1\right) = \sigma_i\left(\matR_{11}\right)$ with $1\leq i \leq k.$ We perform feature selection using RRQR by picking the important columns which preserve the rank of the matrix.

\subsubsection{Random Feature Selection}
We select features uniformly at random without replacement which serves as a baseline method. To get around the randomness, we repeat the sampling process five times. 

\subsubsection{Leverage-Score Sampling}
 For leverage-score sampling, we repeat the experiments five times to get around the randomness. We pick the top-$\rho$ left singular vectors of $\matX,$ where $\rho$ is the rank of the matrix $\matX.$

\subsubsection{Information Gain (IG)} The Information Gain feature selection method \citep{yang97} measures the amount of information obtained for binary class prediction by knowing the presence or absence of a feature in a dataset. The method is a supervised strategy, whereas the other methods used here are unsupervised.

\begin{table}[!htbp]
\caption{\small Most frequently selected features using the synthetic dataset.}
\label{tab:synth}
\begin{center}
\begin{small}
\begin{tabular}{|c||c|c|}
\hline
$r=80$   & $k=90$ & $k=100$ \\
\hline
BSS	&89, 88, 87, 86, 85 &100, 99, 98, 97, 95 \\
\hline
RRQR   &90, 80, 79, 78, 77  & 100, 80, 79, 78, 77  \\
\hline
Lvg-Score &73, 85, 84, 81, 87   &93, 87, 95, 97, 96   \\
\hline
IG &80, 79, 78, 77, 76 &80, 79, 78, 77, 76 \\
\hline\hline
$r=90$   & $k=90$ & $k=100$ \\
\hline
BSS	& 90, 88, 87, 86, 85  &100, 99, 98, 97, 96   \\
\hline
RRQR   & 90, 89, 88, 87, 86  &100, 90, 89, 88, 87  \\
\hline
Lvg-Score &67, 88, 83, 87, 85 &100, 97, 92, 48, 58 \\
\hline
IG &90, 89, 88, 87, 86 &90, 89, 88, 87, 86 \\
\hline
\end{tabular}
\end{small}
\end{center}
\end{table}

\begin{table}[!htb]
\caption{\small Running time of various feature selection methods in seconds. For synthetic data, the running time corresponds to the experiment when $r=80$ and $k=90$ and is averaged over ten ten-fold cross-validation experiments. For TechTC-300, the running time corresponds to the experiment when $r=400$ and is averaged over ten ten-fold cross-validation experiments and over 48 TehTC-300 datasets.}
\label{tab:rlsc_trun}
\begin{center}
\begin{small}
\begin{tabular}{|c||c|c|c|c|}
\hline
 &BSS &IG &LVG &RRQR \\ \hline
Synthetic Data &0.1025  &0.0003    &0.0031    &0.0016 \\ \hline
TechTC-300  &75.7624      &0.0242    &0.4054   &0.2631 \\ \hline
\end{tabular}
\end{small}
\end{center}
\end{table}

\subsection{Experiments on RLSC}
The goal of this section is to compare BSS with existing feature selection methods for RLSC and show that BSS is better than the other methods.
\subsubsection{Synthetic Data}\label{subsubsec:synth}
We run our experiments on synthetic data where we control the number of relevant features in the dataset and demonstrate the working of Algorithm~\ref{alg:alg_ssp} on RLSC.
We generate synthetic data in the same manner as given in \cite{Bhat04}. The dataset has $n$ data-points and $d$ features. The class label $y_i$ of each data-point was randomly chosen to be 1 or -1 with equal probability. The first $k$ features of each data-point $\x_i$ are drawn from $y_i \mathcal{N}\left(-j,1\right)$ distribution, where $\mathcal{N}\left(\mu,\sigma^2\right)$ is a random normal distribution with mean $\mu$ and variance $\sigma^2$ and $j$ varies from 1 to k. The remaining $d-k$ features are chosen from a $\mathcal{N}(0,1)$ distribution. Thus the dataset has $k$ relevant features and $(d-k)$ noisy features. By construction, among the first $k$ features, the $kth$ feature has the most discriminatory power, followed by $(k-1)th$ feature and so on. We set $n$ to 30 and $d$ to 1000. We set $k$ to 90 and 100 and ran two sets of experiments.\\
\begin{table}[!htbp]
\caption{\small Out-of-sample error of TechTC-300 datasets averaged over ten ten-fold cross-validation and over 48 datasets for three values of $r$. The first and second entry of each cell represents the mean and standard deviation. Items in bold indicate the best results.}
\label{tab:techtc_alleout}
\begin{center}
\begin{small}
\begin{tabular}{|c|c|c|c|c|}
\hline
$r=300$ & $\lambda=0.1$ & $\lambda=0.3$ & $\lambda=0.5$ & $\lambda=0.7$ \\
\hline
\textbf{BSS}  &\textbf{31.76} $\pm$ \textbf{0.68}     & \textbf{31.46} $\pm$ \textbf{0.67}    & \textbf{31.24} $\pm$ \textbf{0.65}    &\textbf{31.03} $\pm$ \textbf{0.66}  \\
\hline
 \textbf{Lvg-Score}   &38.22 $\pm$ 1.26   &37.63 $\pm$ 1.25   &37.23 $\pm$ 1.24  &36.94 $\pm$ 1.24 \\
\hline
\textbf{RRQR} &37.84 $\pm$ 1.20    &37.07 $\pm$ 1.19  &36.57 $\pm$ 1.18  &36.10 $\pm$ 1.18 \\
\hline
\textbf{Randomfs}  &50.01 $\pm$ 1.2  &49.43 $\pm$ 1.2  &49.18 $\pm$ 1.19   &49.04 $\pm$ 1.19 \\
\hline 
\textbf{IG}  &38.35  $\pm$ 1.21  &36.64  $\pm$ 1.18    &35.81  $\pm$ 1.18   &35.15  $\pm$ 1.17 \\
\hline \hline
$r=400$ & $\lambda=0.1$ & $\lambda=0.3$ & $\lambda=0.5$ & $\lambda=0.7$ \\
\hline
\textbf{BSS}  &\textbf{30.59}  $\pm$ \textbf{0.66}  &\textbf{30.33}  $\pm$ \textbf{0.65}  &\textbf{30.11}  $\pm$ \textbf{0.65}  &\textbf{29.96}  $\pm$ \textbf{0.65}  \\ 
\hline
\textbf{Lvg-Score}    &35.06 $\pm$ 1.21   &34.63 $\pm$ 1.20  &34.32 $\pm$ 1.2  &34.11 $\pm$ 1.19 \\
\hline
\textbf{RRQR}    &36.61 $\pm$ 1.19   &36.04  $\pm$ 1.19  &35.46  $\pm$ 1.18  &35.05  $\pm$ 1.17    \\
\hline
\textbf{Randomfs}  &47.82 $\pm$ 1.2   &47.02 $\pm$ 1.21  &46.59 $\pm$ 1.21   &46.27 $\pm$ 1.2 \\
\hline 
\textbf{IG}  &37.37  $\pm$ 1.21  &35.73  $\pm$ 1.19    &34.88  $\pm$ 1.18   &34.19  $\pm$ 1.18 \\
\hline\hline
$r=500$ & $\lambda=0.1$ & $\lambda=0.3$ & $\lambda=0.5$ & $\lambda=0.7$ \\
\hline
\textbf{BSS}    &\textbf{29.80} $\pm$ \textbf{0.77}  &\textbf{29.53} $\pm$ \textbf{0.77}   &\textbf{29.34} $\pm$ \textbf{0.76}  &\textbf{29.18} $\pm$ \textbf{0.75} \\ 
\hline
\textbf{Lvg-Score}    &33.33 $\pm$ 1.19  &32.98 $\pm$ 1.18  &32.73 $\pm$ 1.18   &32.52 $\pm$ 1.17  \\
\hline
\textbf{RRQR}  &35.77  $\pm$ 1.18  &35.18  $\pm$ 1.16   &34.67  $\pm$ 1.16  &34.25  $\pm$ 1.14    \\
\hline
\textbf{Randomfs} &46.26  $\pm$ 1.21  &45.39  $\pm$ 1.19  &44.96  $\pm$ 1.19  &44.65  $\pm$ 1.18  \\
\hline 
\textbf{IG}  &36.24  $\pm$ 1.20  &34.80  $\pm$ 1.19    &33.94  $\pm$ 1.18   &33.39  $\pm$ 1.17 \\
\hline
\end{tabular}
\end{small}
\end{center}
\end{table}
\noindent
We set the value of $r$, i.e. the number of features selected by BSS to 80 and 90 for all experiments. We performed ten-fold cross-validation and repeated it ten times. The value of $\lambda$ was set to 0, 0.1, 0.3, 0.5, 0.7, and 0.9. We compared BSS with RRQR, IG and leverage-score sampling. The mean out-of-sample error was 0 for all methods for both $k=90$ and $k=100$. Table~\ref{tab:synth} shows the set of five most frequently selected features by the different methods for one such synthetic dataset across 100 training sets. The top features picked up by the different methods are the relevant features by construction and also have good discriminatory power. This shows that BSS 
is as good as any other method in terms of feature selection and often picks more discriminatory features than the other methods. We repeated our experiments on ten different synthetic datasets and each time, the five most frequently selected features were from the set of relevant features.
\begin{figure}[!htb]
\centering
\includegraphics[height = 45mm,width= 0.49\columnwidth]{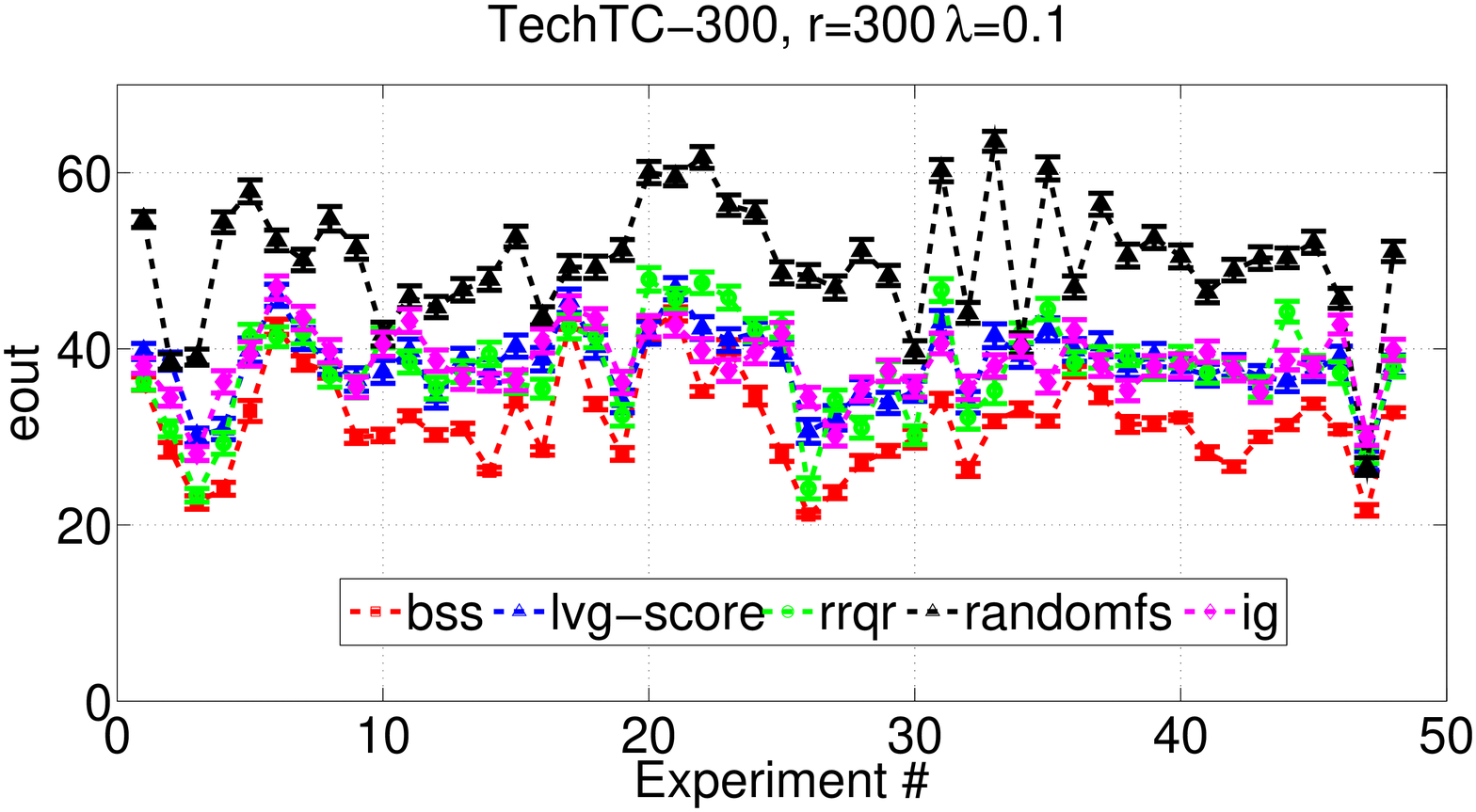}
\includegraphics[height = 45mm,width=0.49\columnwidth]{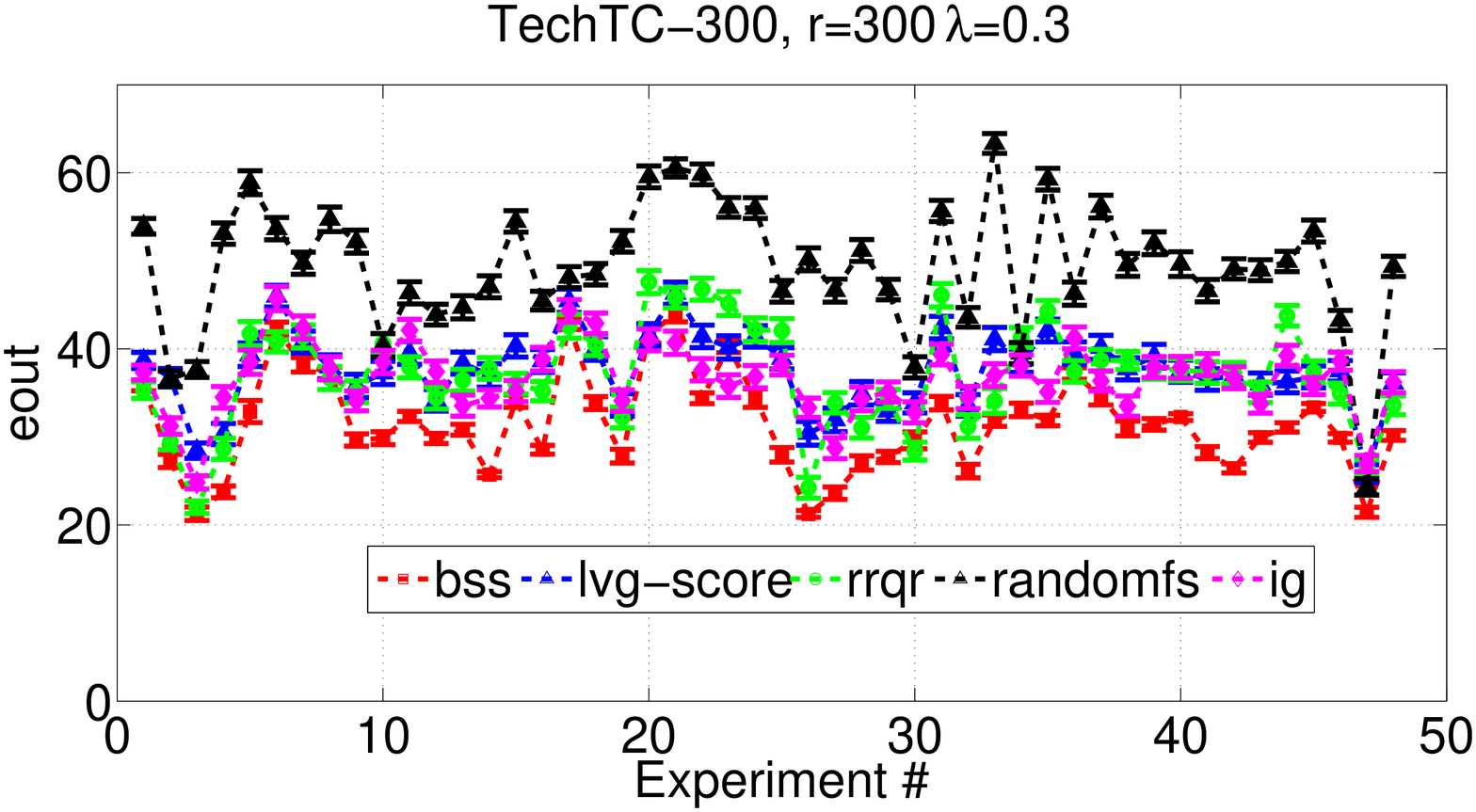}
\includegraphics[height = 45mm,width=0.49\columnwidth]{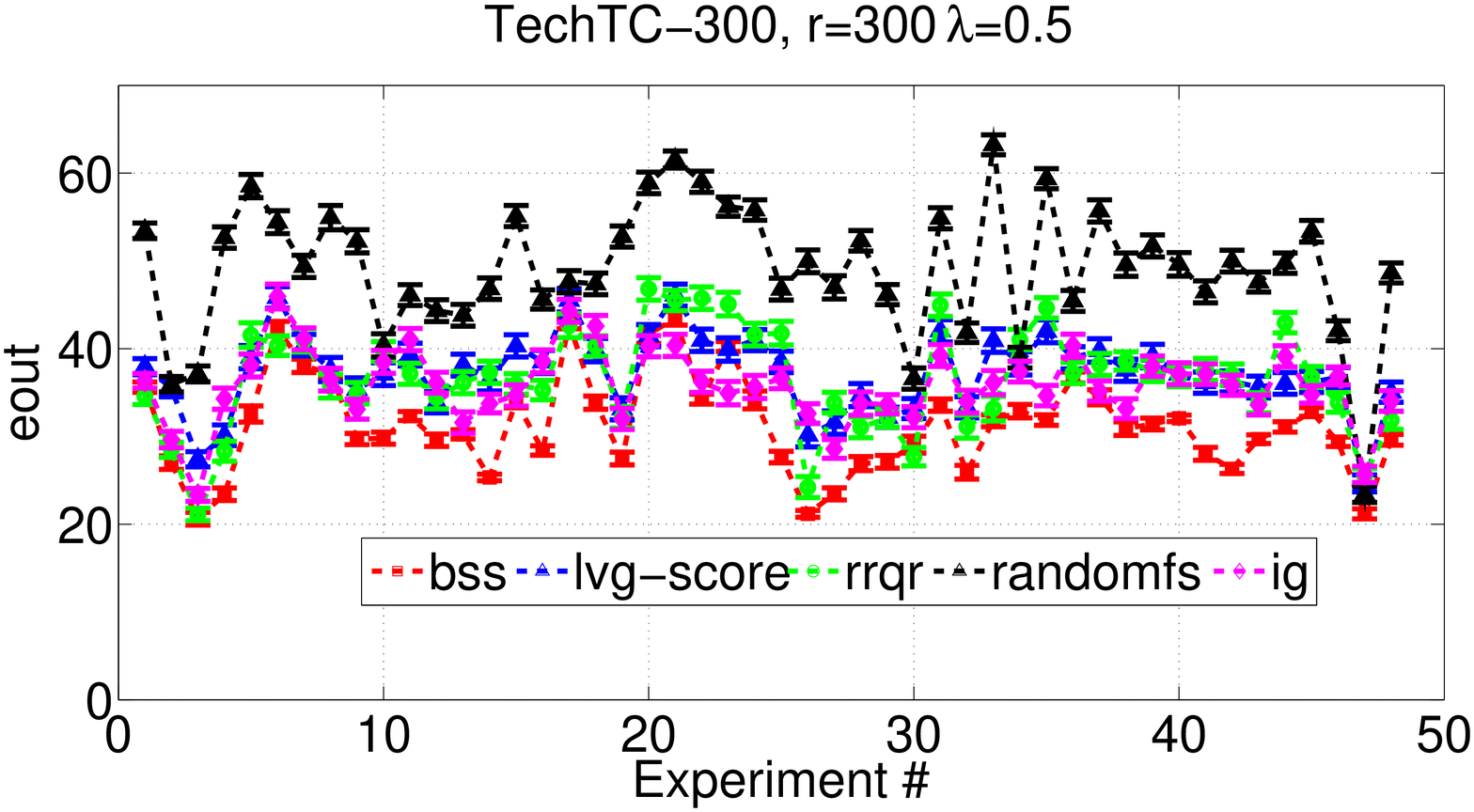}
\includegraphics[height = 45mm,width=0.49\columnwidth]{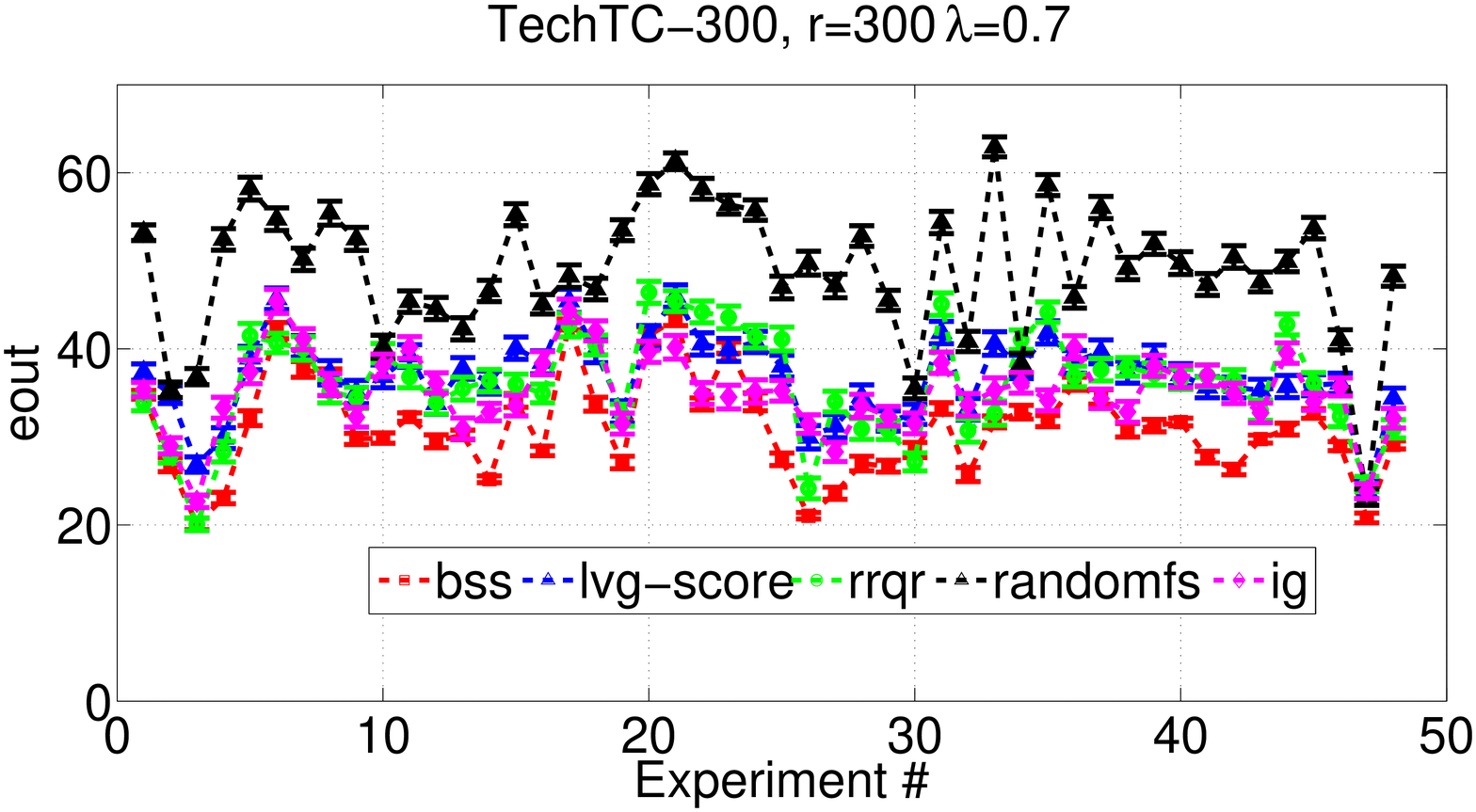} 
\caption{Out-of-sample error of 48 TechTC-300 documents averaged over ten ten-fold cross validation experiments for different values of regularization parameter $\lambda$ and number of features $r=300$. Vertical bars represent standard deviation.}
\label{fig:techtc_eout} 
\end{figure}
Thus, by selecting only 8\%-9\% of all features, we show that we are able to obtain the most discriminatory features along with good out-of-sample error using BSS.

Though running time is not the main subject of this study, we would like to point out that we computed the running time of the different feature selection methods averaged over ten ten-fold cross validation experiments. The time to perform feature selection for each of the methods averaged over ten ten-fold cross-validation experiments was less than a second (See Table~\ref{tab:rlsc_trun}), which shows that the methods can be implemented in practice.

\begin{figure}[!htbp]
\centering
\includegraphics[height = 45mm,width= 0.49\columnwidth]{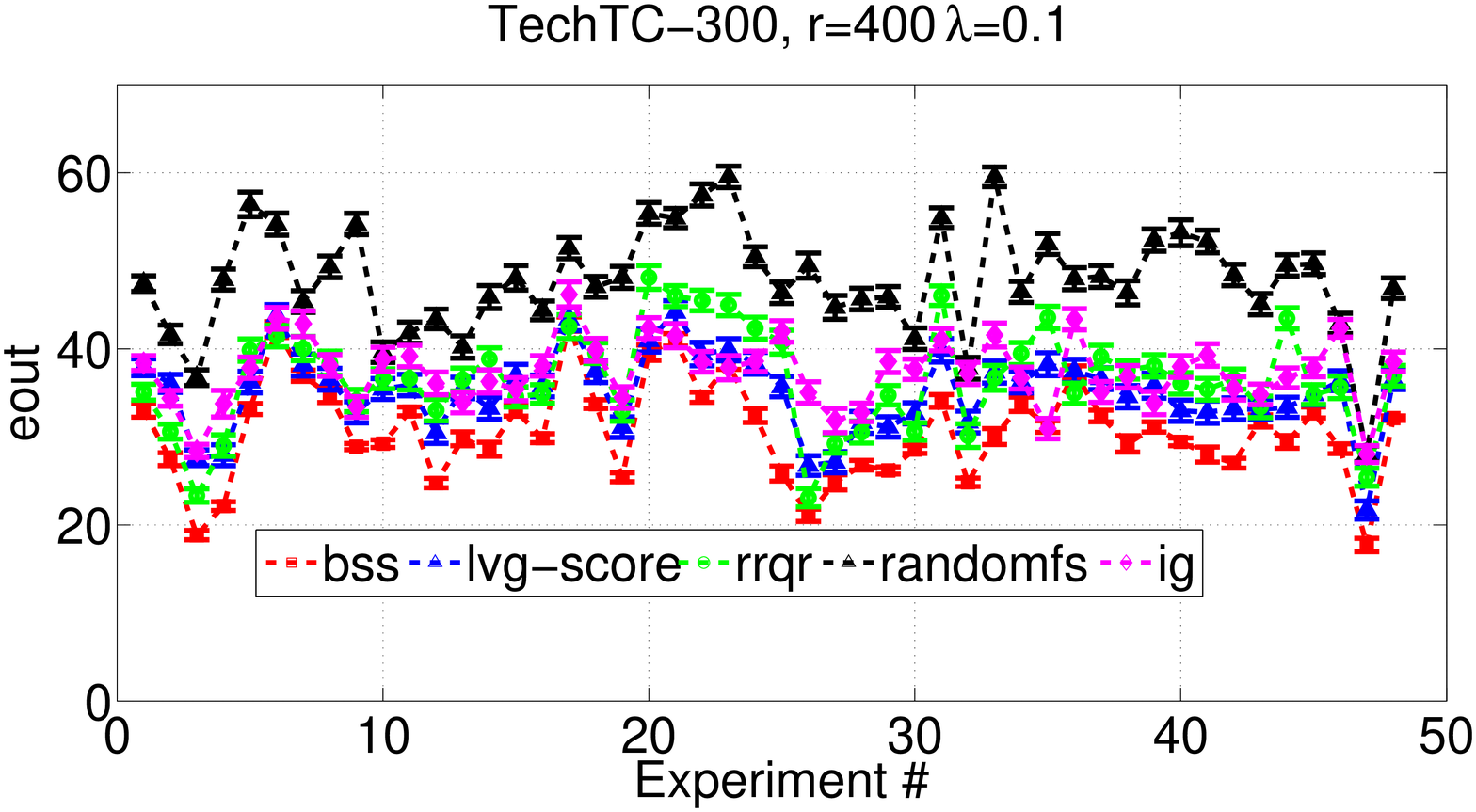}
\includegraphics[height = 45mm,width= 0.49\columnwidth]{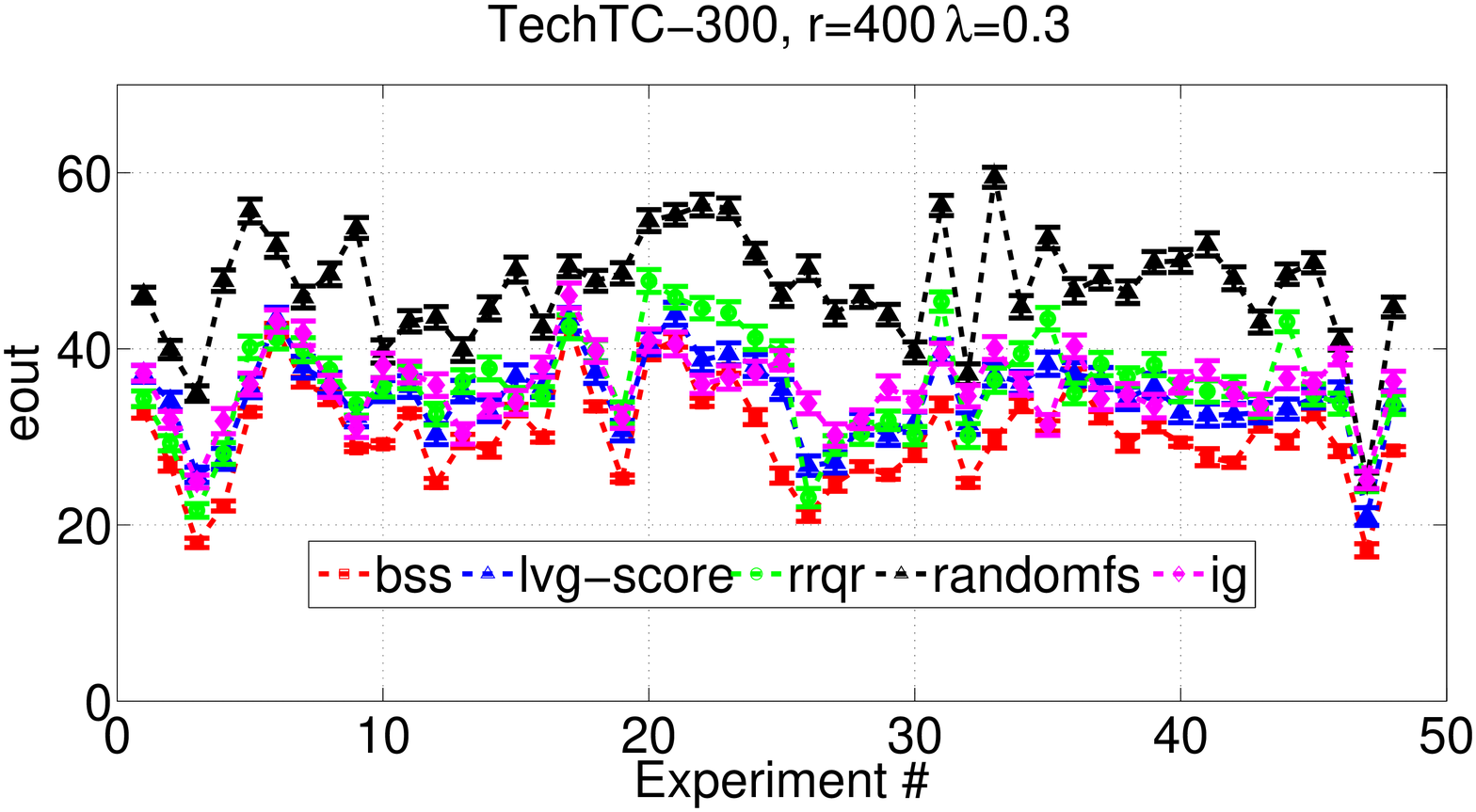}
\includegraphics[height = 45mm,width= 0.49\columnwidth]{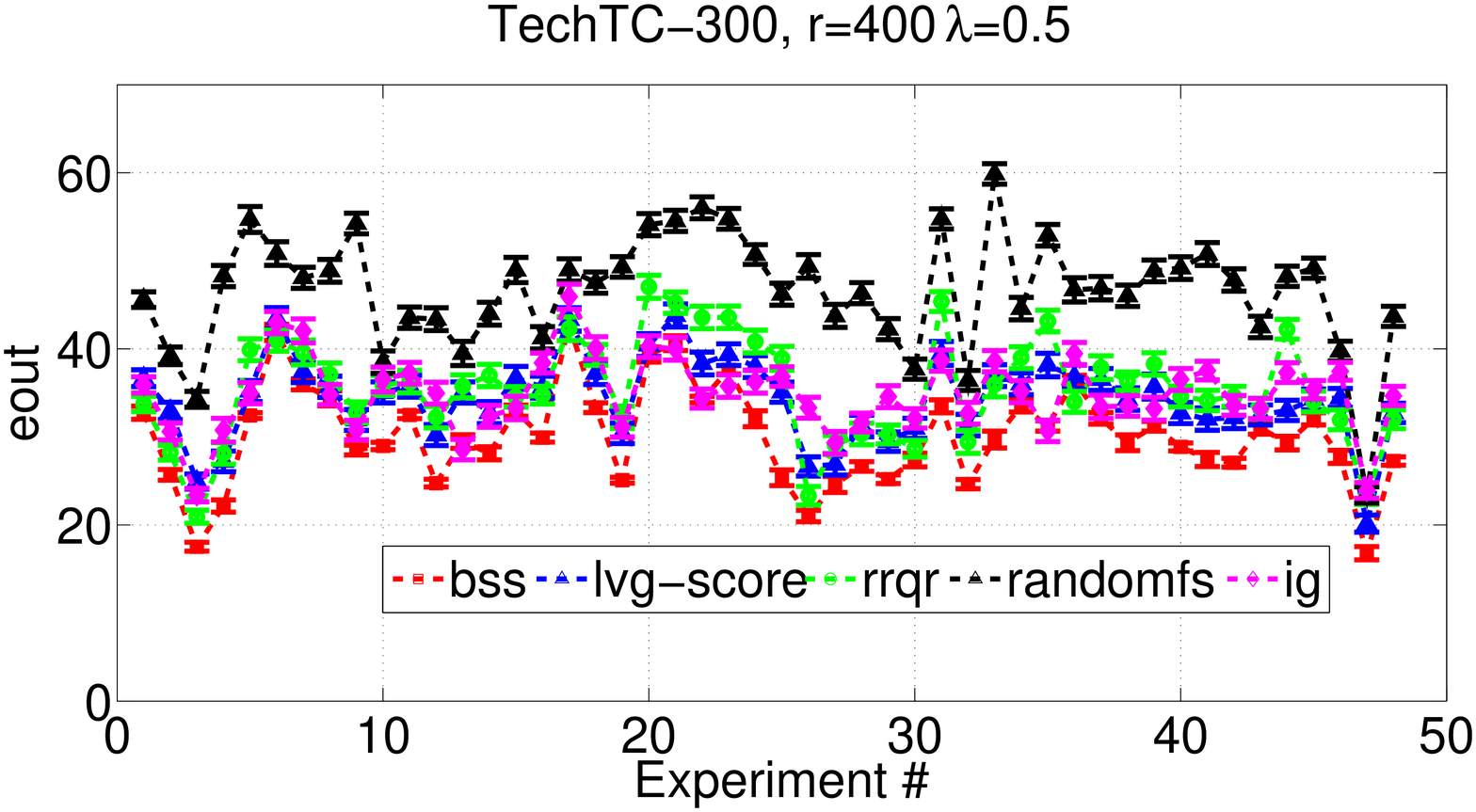}
\includegraphics[height = 45mm,width= 0.49\columnwidth]{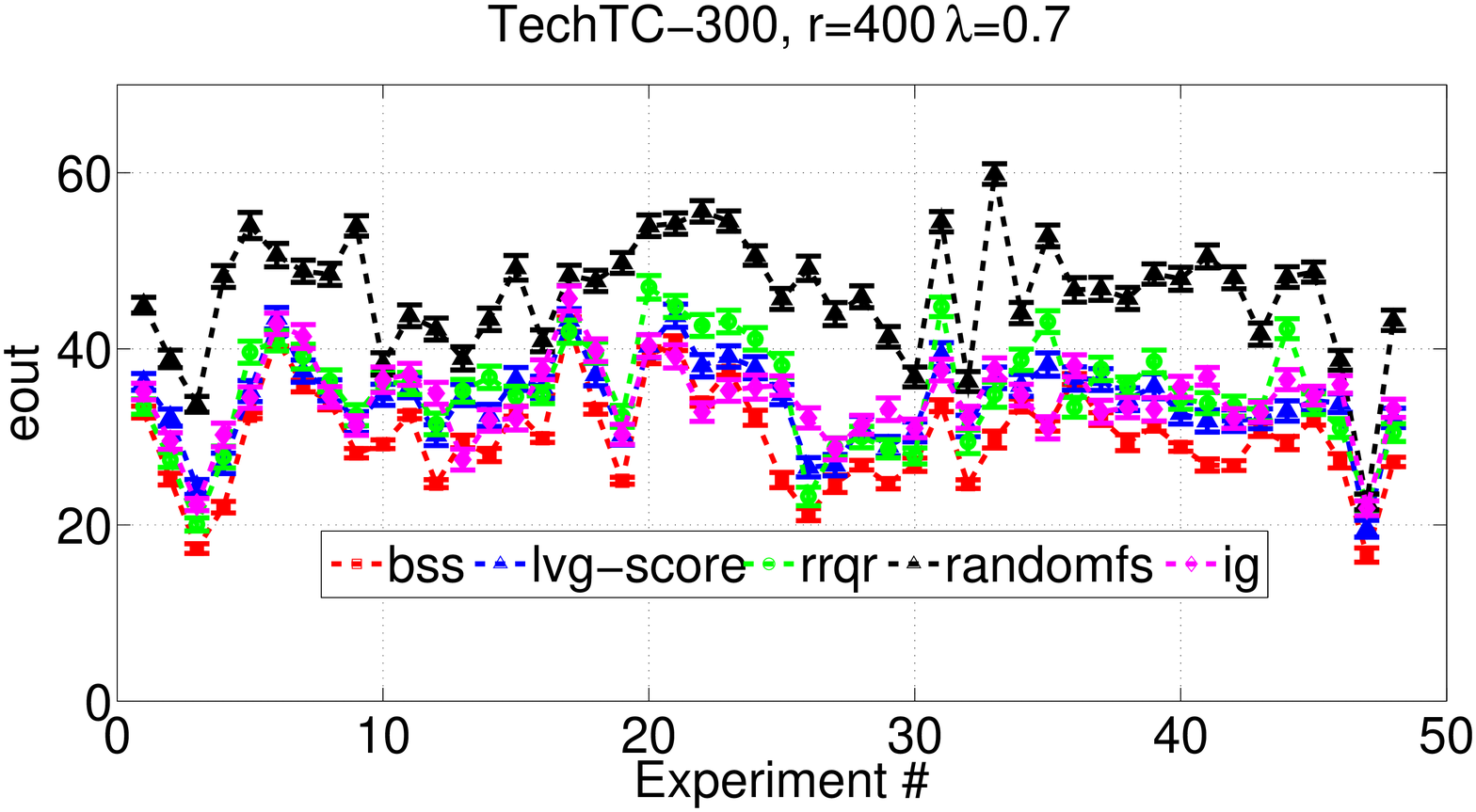}
$r=400$

\includegraphics[height = 45mm,width= 0.49\columnwidth]{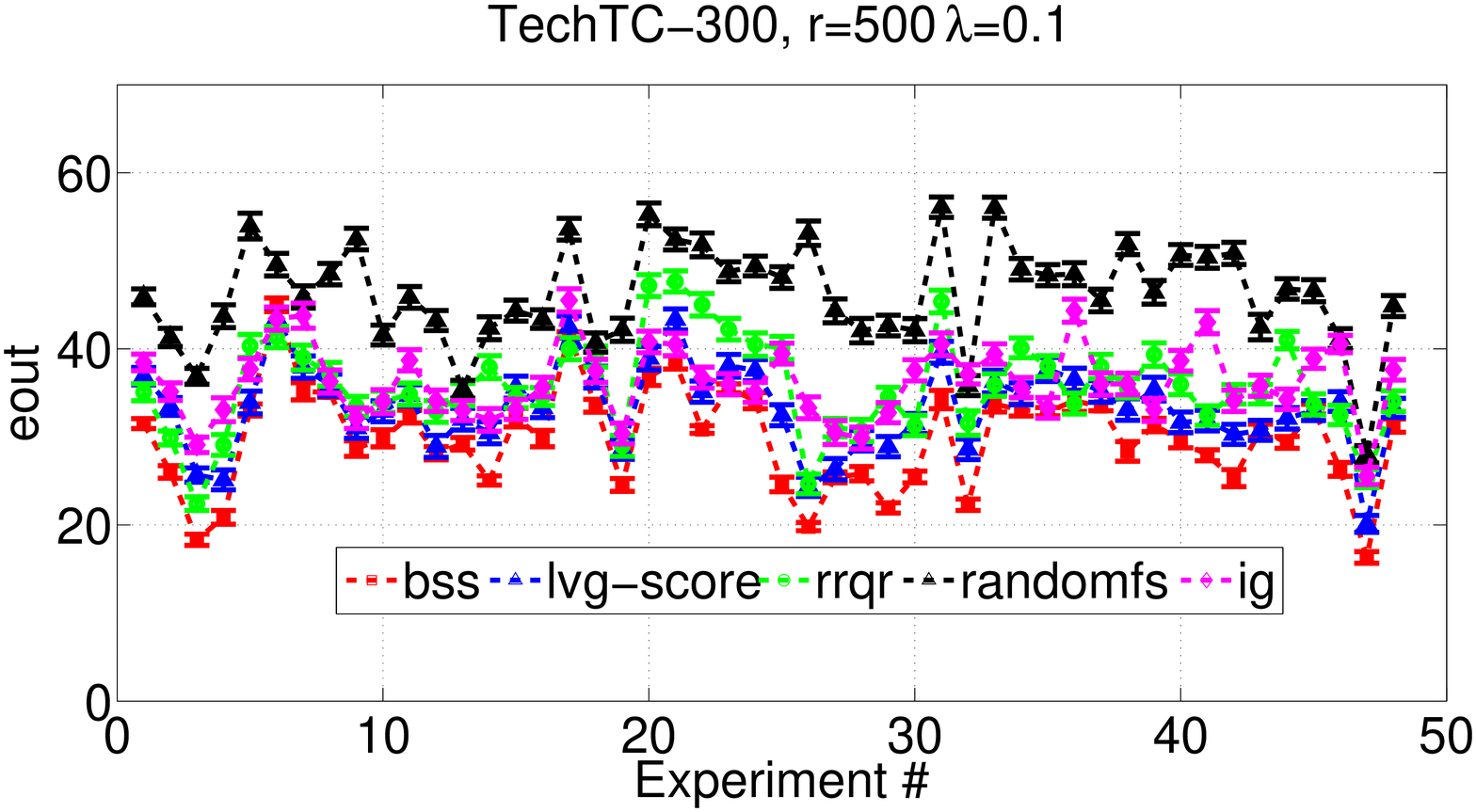}
\includegraphics[height = 45mm,width= 0.49\columnwidth]{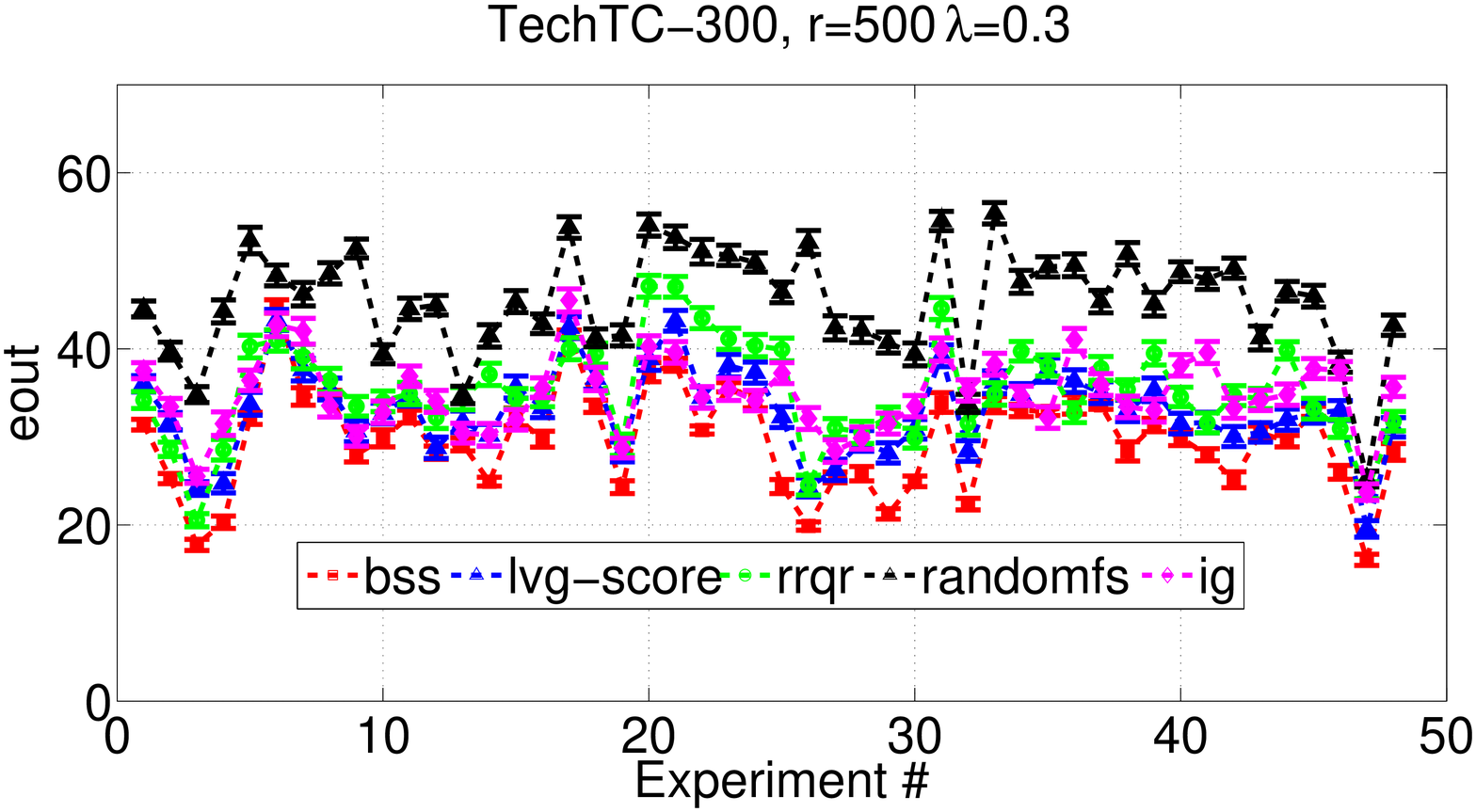}
\includegraphics[height = 45mm,width= 0.49\columnwidth]{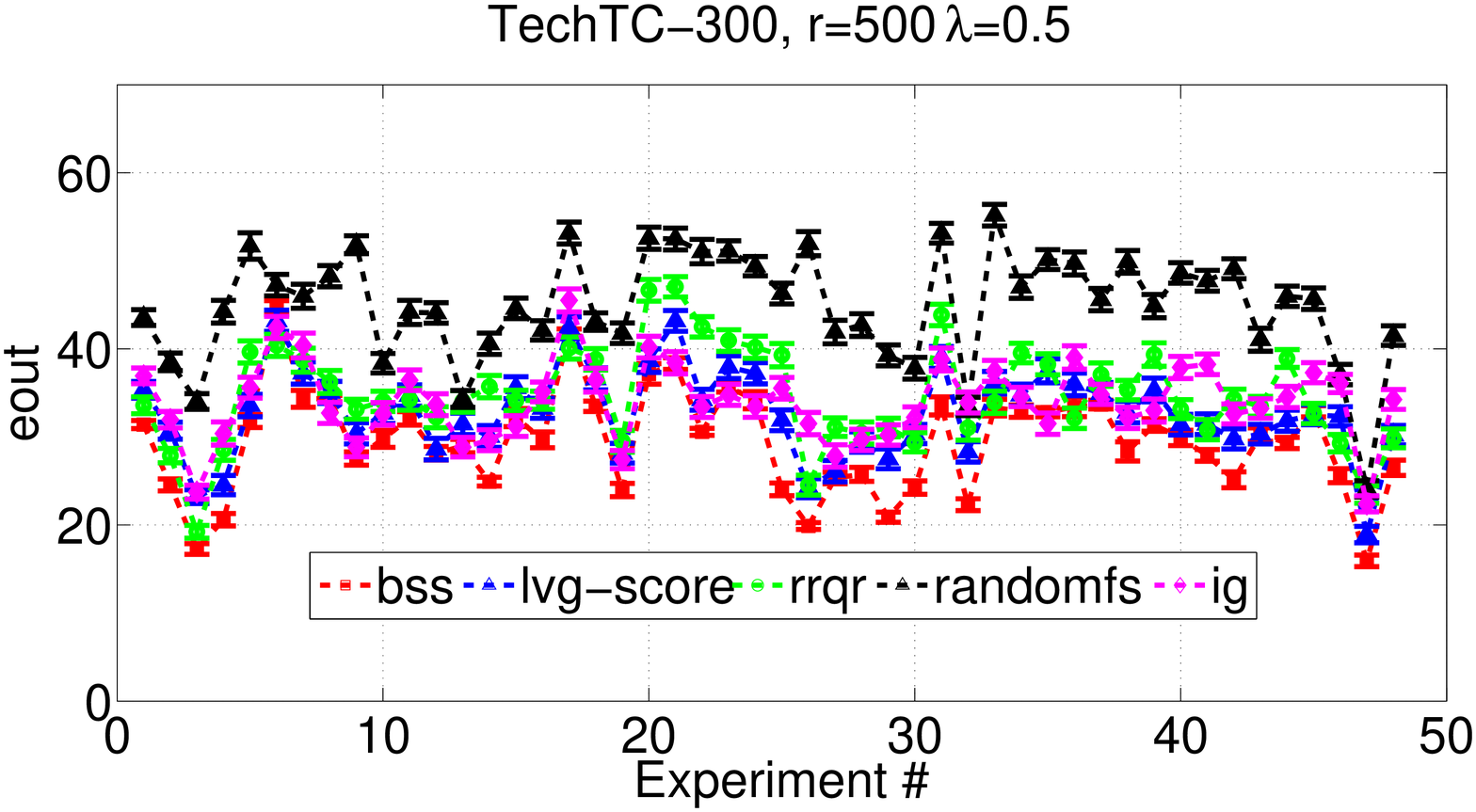}
\includegraphics[height = 45mm,width= 0.49\columnwidth]{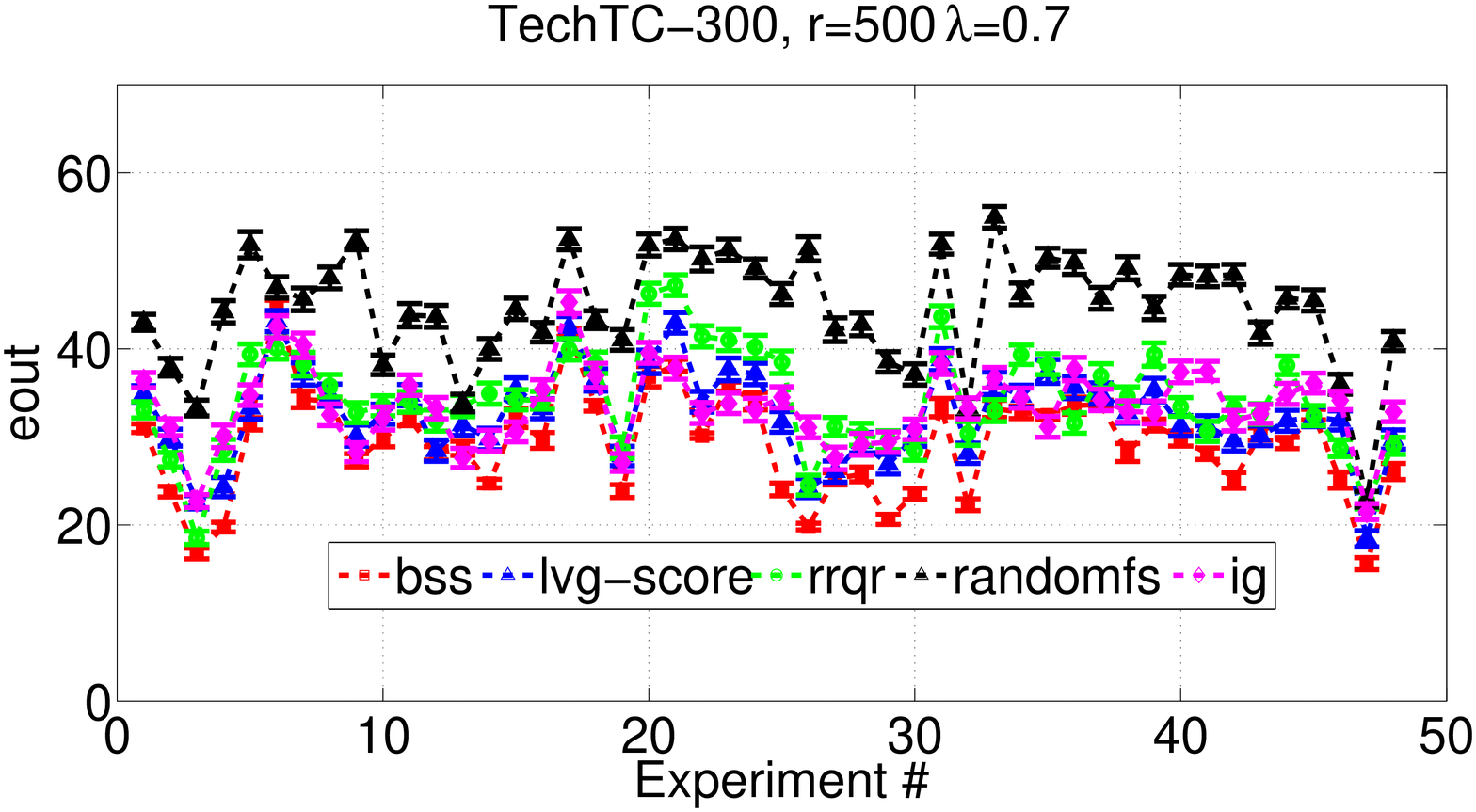}
$r=500$
\caption{Out-of-sample error of 48 TechTC-300 documents averaged over ten ten-fold cross validation experiments for different values of regularization parameter $\lambda$ and number of features $r=400$ and $r=500$. Vertical bars represent standard deviation.}
\label{fig:techtc_eout2}
\end{figure}
\subsubsection{TechTC-300}\label{subsubsec:techttc300}
We use the TechTC-300 data \cite{David04}, consisting of a family of 295 document-term data matrices. The TechTC-300 dataset comes from the Open Directory Project (ODP), which is a large, comprehensive directory of the web, maintained by volunteer editors. Each matrix in the TechTC-300 dataset contains a pair of categories from the ODP. Each category corresponds to a label, and thus the resulting classification task is binary. The documents that are collected from the union of all the subcategories within each category are represented in the bag-of-words model, with the words constituting the features of the data \cite{David04}. Each data matrix consists of 150-280 documents, and each document is described with respect to 10,000-50,000 words. Thus, TechTC-300 provides a diverse collection of data sets for a systematic study of the performance of the RLSC using BSS. We removed all words of length at most four from the datasets. Next we grouped the datasets based on the categories and selected those datasets whose categories appeared at least thrice. There were 147 datasets, and we performed ten-fold cross validation and repeated it ten times on 48 such datasets.  We set the values of the regularization parameter of RLSC to $0.1, 0.3, 0.5$ and $0.7$. 

\begin{table}
\caption{\small A subset of the TechTC matrices of our study.} 
\label{tab:filenames}
\begin{center}
\begin{small}
\begin{tabular}{|c|c|c|}
\hline
 \textbf{id1\_id2} &\textbf{id1}  &\textbf{id2}\\ \hline
1092\_789236  &Arts:Music:Styles:Opera  &US Navy:Decommisioned Submarines \\ \hline
17899\_278949 &US:Michigan:Travel \& Tourism &Recreation:Sailing Clubs:UK \\ \hline
17899\_48446   &US:Michigan:Travel \& Tourism &Chemistry:Analytical:Products \\ \hline
14630\_814096 &US:Colorado:Localities:Boulder &Europe:Ireland:Dublin:Localities \\ \hline
10539\_300332 &US:Indiana:Localities:S  &Canada:Ontario:Localities:E \\ \hline
10567\_11346 &US:Indiana:Evansville  &US:Florida:Metro Areas:Miami \\ \hline
10539\_194915 &US:Indiana:Localities:S &US:Texas:Localities:D \\ \hline
\end{tabular}
\end{small}
\end{center}
\end{table}
\begin{table}
\caption{\small Frequently occurring terms of the TechTC-300 datasets of Table~\ref{tab:filenames} selected by BSS} 
\label{tab:techtc_words}
\begin{center}
\begin{small}
\begin{tabular}{|c|c|}
\hline
\textbf{id1\_id2} &\textbf{words}\\ \hline
1092\_789236 & naval,shipyard,submarine,triton,music,opera,libretto,theatre\\ \hline 
17899\_278949 & sailing,cruising,boat,yacht,racing,michigan,leelanau,casino\\ \hline 
17899\_48446 & vacation,lodging,michigan,asbestos,chemical,analytical,laboratory\\ \hline 
14630\_814096 & ireland,dublin,boulder,colorado,lucan,swords,school,dalkey  \\ \hline
10539\_300332 &ontario,fishing,county,elliot,schererville,shelbyville,indiana,bullet \\ \hline
10567\_11346 &florida,miami,beach,indiana,evansville,music,business,south \\ \hline
10539\_194915 &texas,dallas,plano,denton,indiana,schererville,gallery,north  \\ 
\hline
\end{tabular}
\end{small}
\end{center}
\end{table}
\begin{table}
\caption{\small Frequently occurring terms of the TechTC-300 datasets of Table~\ref{tab:filenames} selected by Leverage-Score Sampling} \label{tab:techtc_words_lvg}
\begin{center}
\begin{small}
\begin{tabular}{|c|c|}
\hline
\textbf{id1\_id2} &\textbf{words}\\ \hline
1092\_789236 & sturgeon, seawolf, skate, triton, frame, opera, finback \\ \hline 
17899\_278949 & sailing, yacht, laser, michigan,breakfast, county, clear\\ \hline 
17899\_48446 & analysis, michigan, water, breakfast, asbestos, environmental, analytical\\ \hline 
14630\_814096 &  ireland, dublin, estate, lucan, dalkey, colorado, boulder \\ \hline
10539\_300332 & library, fishing, service, lodge, ontario, elliot, indiana, shelbyville\\ \hline
10567\_11346 & evansville, services, health, church, south, bullet, florida\\ \hline
10539\_194915 &  dallas, texas, schererville, indiana, shelbyville, plano\\ 
\hline
\end{tabular}
\end{small}
\end{center}
\end{table}
\noindent We set $r$ to 300, 400 and 500. We report the out-of-sample error for all 48 datasets. BSS consistently outperforms Leverage-Score sampling, IG, RRQR and random feature selection on all 48 datasets for all values of the regularization parameter. Table~\ref{tab:techtc_alleout} and Fig~\ref{fig:techtc_eout} shows the results. The out-of-sample error decreases with increase in number of features for all methods. In terms of out-of-sample error, BSS is the best, followed by Leverage-score sampling, IG, RRQR and random feature selection. BSS is at least 3\%-7\% better than the other methods when averaged over 48 document matrices. From Fig~\ref{fig:techtc_eout} and \ref{fig:techtc_eout2}, it is evident that BSS is comparable to the other methods and often better on all 48 datasets. Leverage-score sampling requires greater number of samples to achieve the same out-of-sample error as BSS (See Table~\ref{tab:techtc_alleout}, $r=500$ for Lvg-Score and $r=300$ for BSS). Therefore, for the same number of samples, BSS outperforms leverage-score sampling in terms of out-of-sample error. The out-of-sample error of supervised IG is worse than that of unsupervised BSS, which could be due to the worse generalization of the supervised IG metric. We also observe that the out-of-sample error decreases with increase in $\lambda$ for the different feature selection methods.\\
\noindent
We list the most frequently occurring words selected by BSS and leverage-score sampling for the $r=300$ case for seven TechTC-300 datasets over 100 training sets used in the cross-validation experiments. Table~\ref{tab:filenames} shows the names of the seven TechTC-300 document-term matrices. The words shown in Tables ~\ref{tab:techtc_words} and ~\ref{tab:techtc_words_lvg} were selected in all cross-validation experiments for these seven datasets. The words are closely related to the categories to which the documents belong, which shows that BSS and leverage-score sampling select important features from the training set. For example, for the document-pair $(1092\_789236)$, where $1092$ belongs to the category of ``Arts:Music:Styles:Opera" and $789236$ belongs to the category of ``US:Navy: Decommisioned Submarines", the BSS algorithm selects submarine, shipyard, triton, opera, libretto, theatre which are closely related to the two classes. The top words selected by leverage-score sampling for the same document-pair are seawolf, sturgeon, opera, triton finback, which are closely related to the class. Another example is the document-pair $10539\_300332$, where $10539$ belongs to ``US:Indiana:Localities:S" and $300332$ belongs to the category of ``Canada: Ontario: Localities:E". The top words selected for this document-pair are ontario, elliot, shelbyville, indiana, schererville which are closely related to the class values. Thus, we see that using only 2\%-4\% of all features we are able to select relevant features and obtain good out-of-sample error. The top words selected by leverage-score sampling are library, fishing, elliot, indiana, shelbyville, ontario which are closely related to the class.\\
Though feature selection is an offline task, we give a discussion of the running times of the different methods to highlight that BSS can be implemented in practice. We computed the running time of the different feature selection methods averaged over ten ten-fold cross validation experiments and over 48 datasets (See Table~\ref{tab:rlsc_trun}). The average time for feature selection by BSS is approximately over a minute, while the rest of the methods take less than a second.
This shows that BSS can be implemented in practice and can scale up to reasonably large datasets with 20,000-50,000 features. For BSS and leverage-score sampling, the running time includes the compute to compute SVD of the matrix. BSS takes approximately a minute to select features, but is at least 3\%-7\% better in terms of out-of-sample error than the other methods. IG takes less than a second to select features, but is 4\%-7\% worse than BSS in terms of out-of-sample error. 

\subsection{Experiments on Ridge Regression in the fixed design setting}
In this section, we describe experiments on feature selection on ridge regression in the fixed design setting using synthetic and real data.
\begin{figure}[!htb]
\centering
\includegraphics[height = 45mm,width= 0.49\columnwidth]{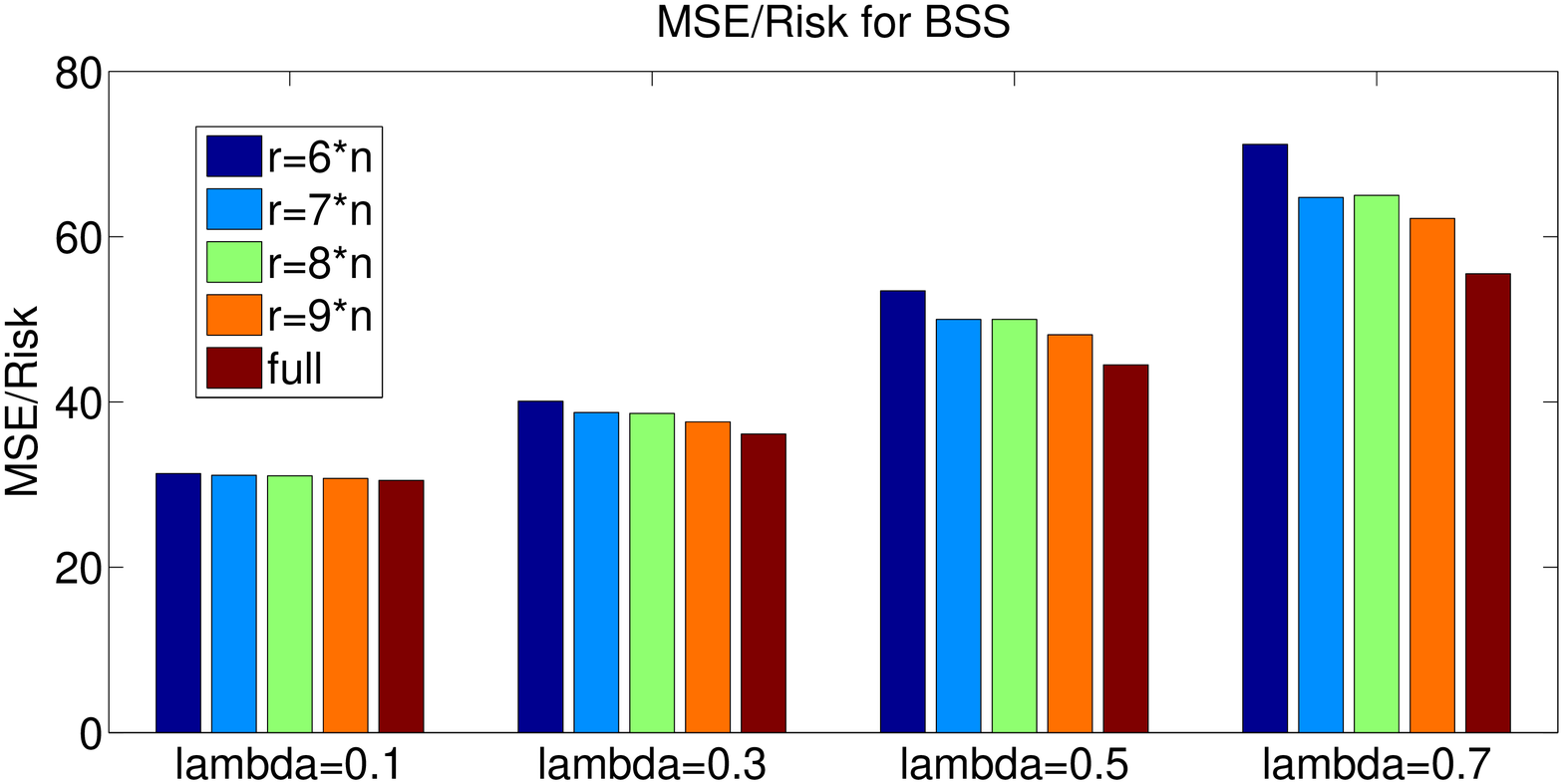}
\includegraphics[height = 45mm,width= 0.49\columnwidth]{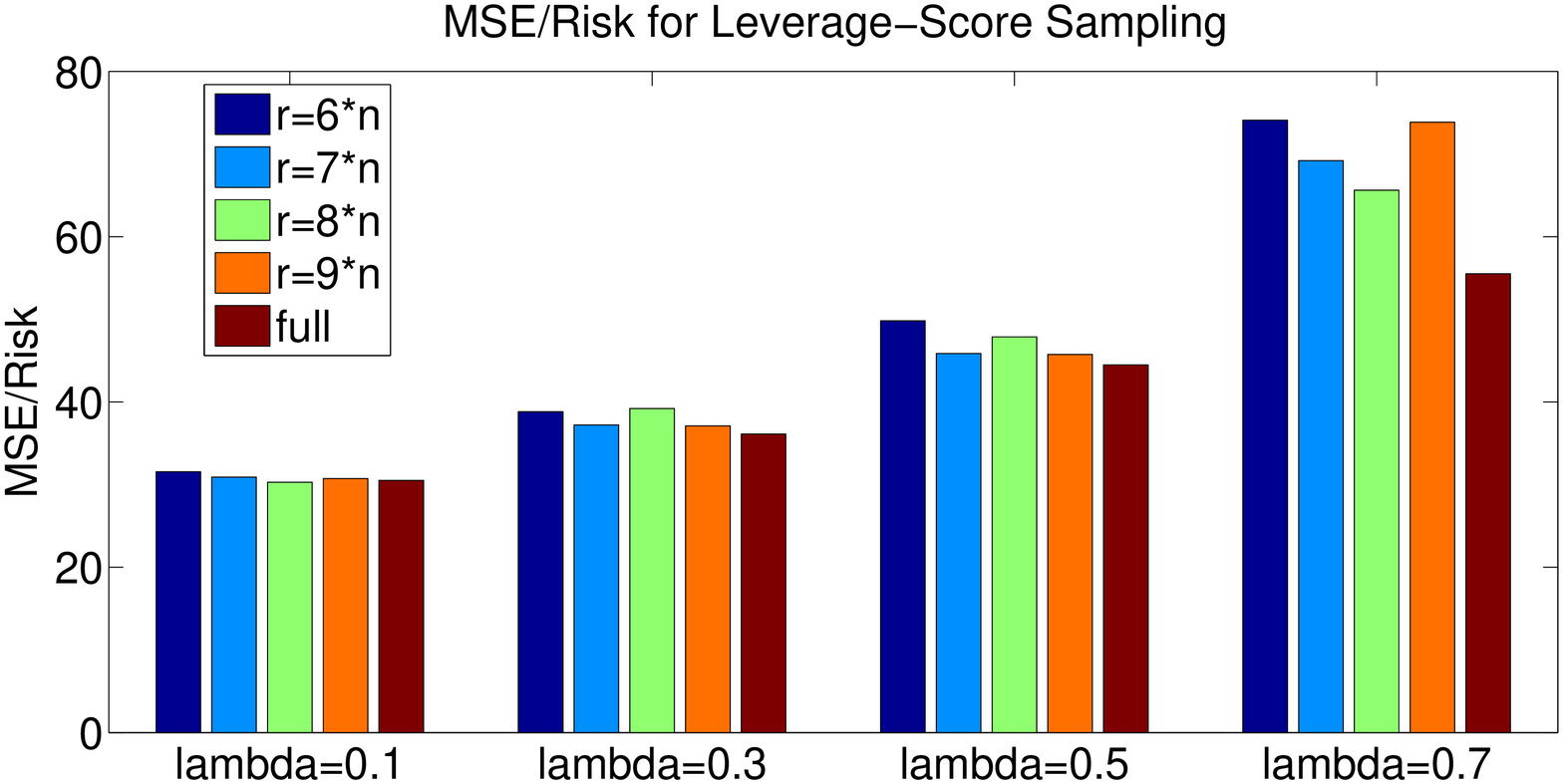}
$k=90$\\
\includegraphics[height = 45mm,width= 0.49\columnwidth]{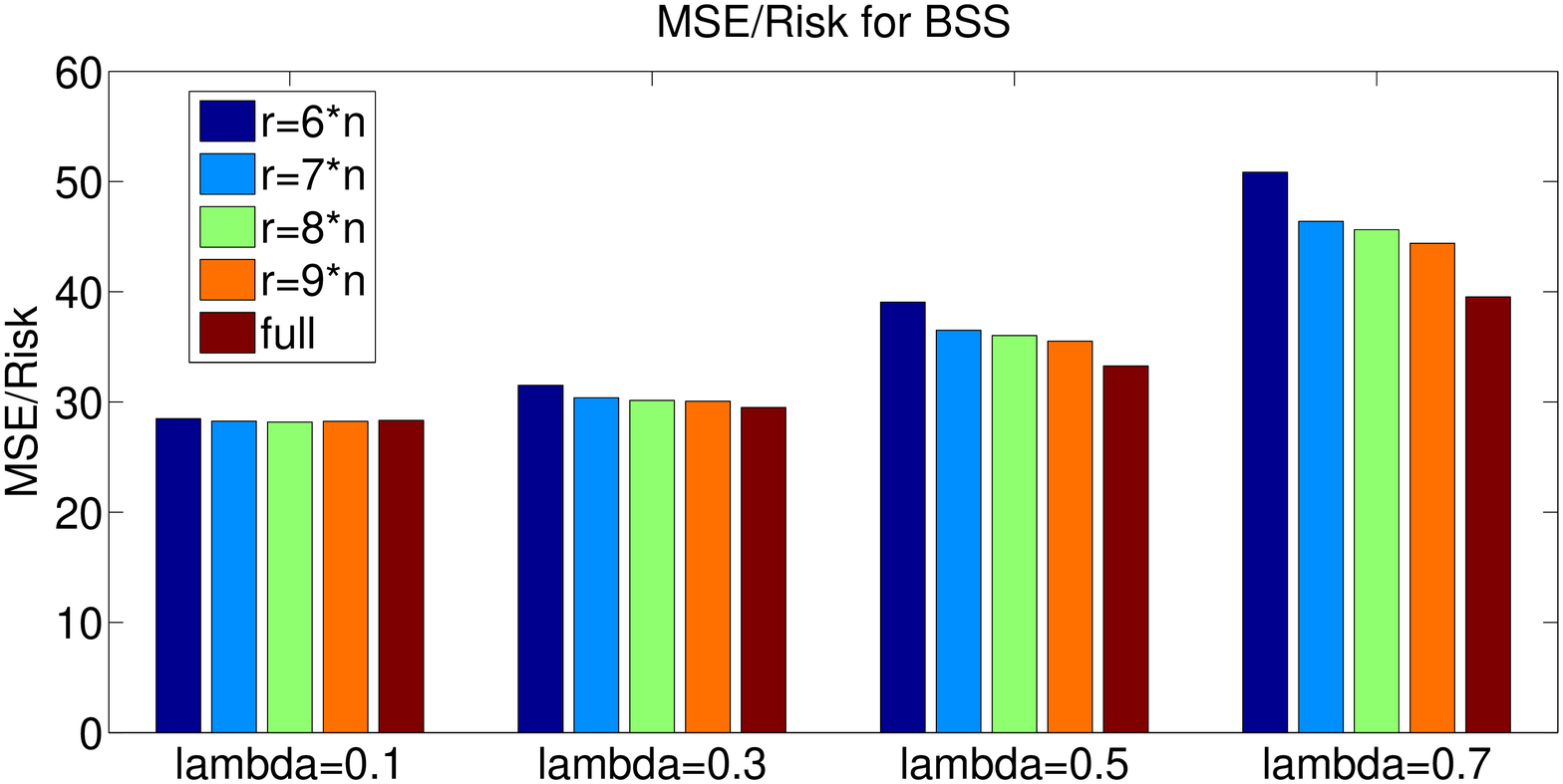}
\includegraphics[height = 45mm,width= 0.49\columnwidth]{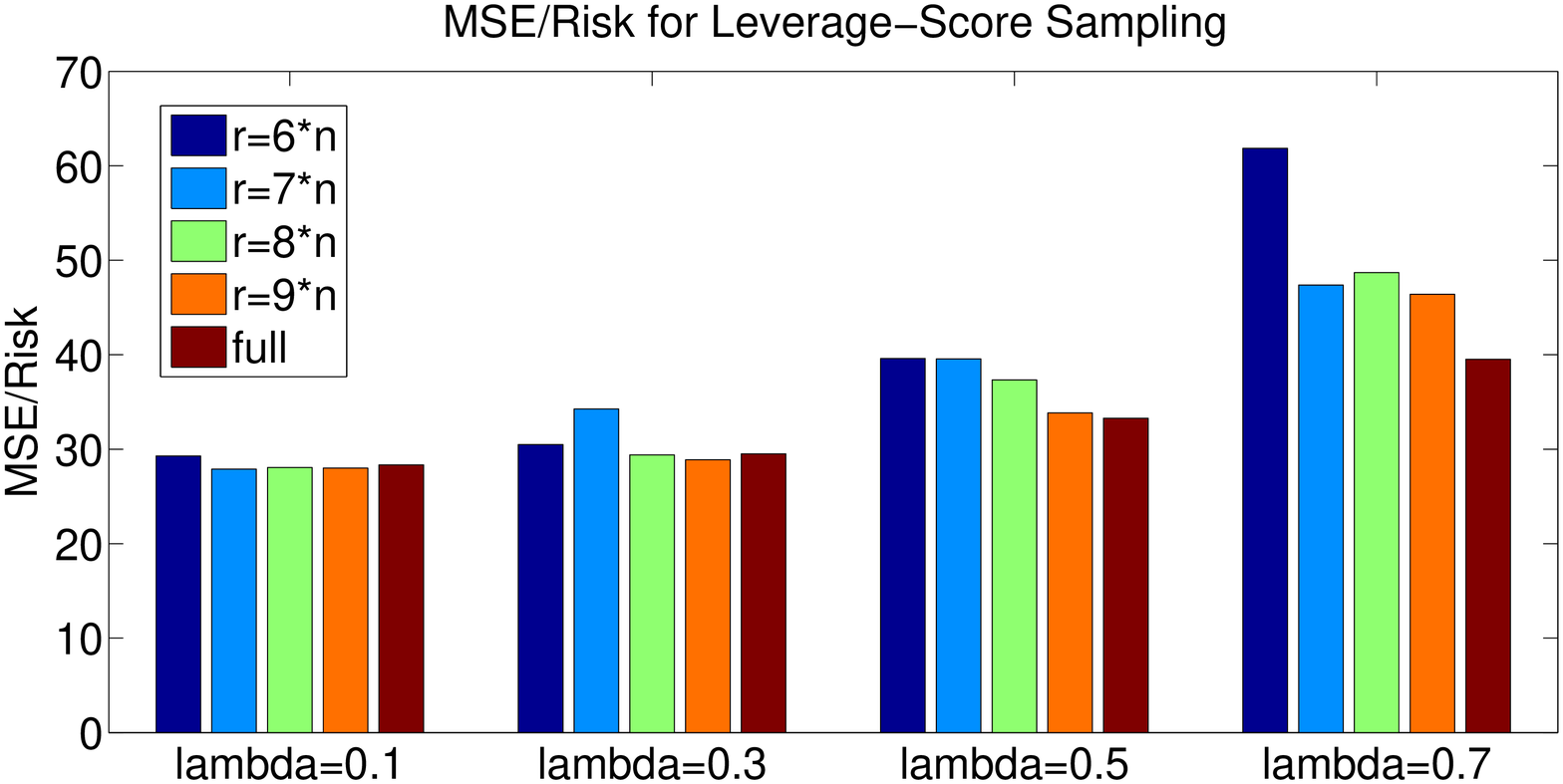}
$k=100$
\caption{MSE/Risk for synthetic data for $k=90$ and $k=100$ using different feature selection methods as a function of $\lambda$. The risk after feature selection is comparable to the risk of full-data.}
\label{fig:risk_bss}
\end{figure}
\subsubsection{Synthetic Data}
We generate the features of the synthetic data $\matX$ in the same manner as described in Section~\ref{subsubsec:synth}. We generate $\betavec \sim \mathcal{N}(0,1)$ and $\y =\matX^T \betavec + \omegavec$, where $\omegavec \in \mathbb{R}^n$ and $\betavec \in \mathbb{R}^d.$
We set $n$ to 30 and $d$ to 1000. We set the number of relevant features, $k$ to 90 and 100 and ran two sets of experiments. We set the value of $r$, i.e. the number of features selected by BSS and leverage-score sampling to $t*n$, where $t=6, 7, 8, 9$ for both experiments. The value of $\lambda$ was set to 0.1, 0.3, 0.5 and 0.7. We compared the risk of ridge regression using BSS and leverage-score sampling with the risk of full-feature selection and report the MSE/Risk in the fixed design setting as a measure of accuracy. Fig~\ref{fig:risk_bss} shows the risk of synthetic data for both BSS and leverage-score sampling as a function of $\lambda$. The risk of the sampled data is comparable to the risk of the full-data in most cases, which follows from our theory. We observe that for higher values of $\lambda$, the risk of sampled space becomes worse than that of full-data for both BSS and leverage-score sampling. The risk in the sampled space is almost the same for both BSS and Leverage-score sampling. The time to compute feature selection is less than a second for both methods (See Table~\ref{tab:rr_trun}).
\begin{figure}[!htb]
\centering
\includegraphics[height = 45mm,width= 0.49\columnwidth]{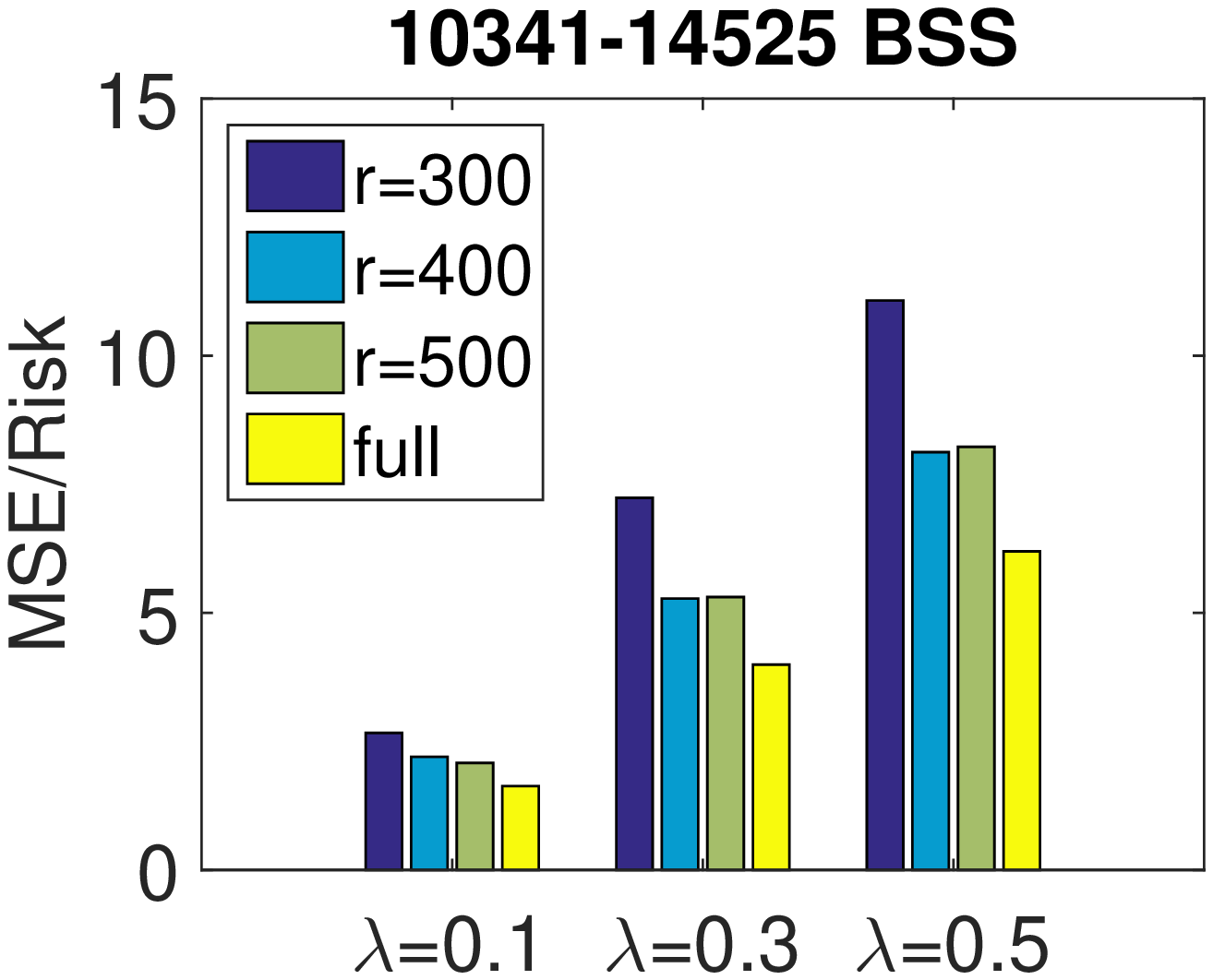}
\includegraphics[height = 45mm,width= 0.49\columnwidth]{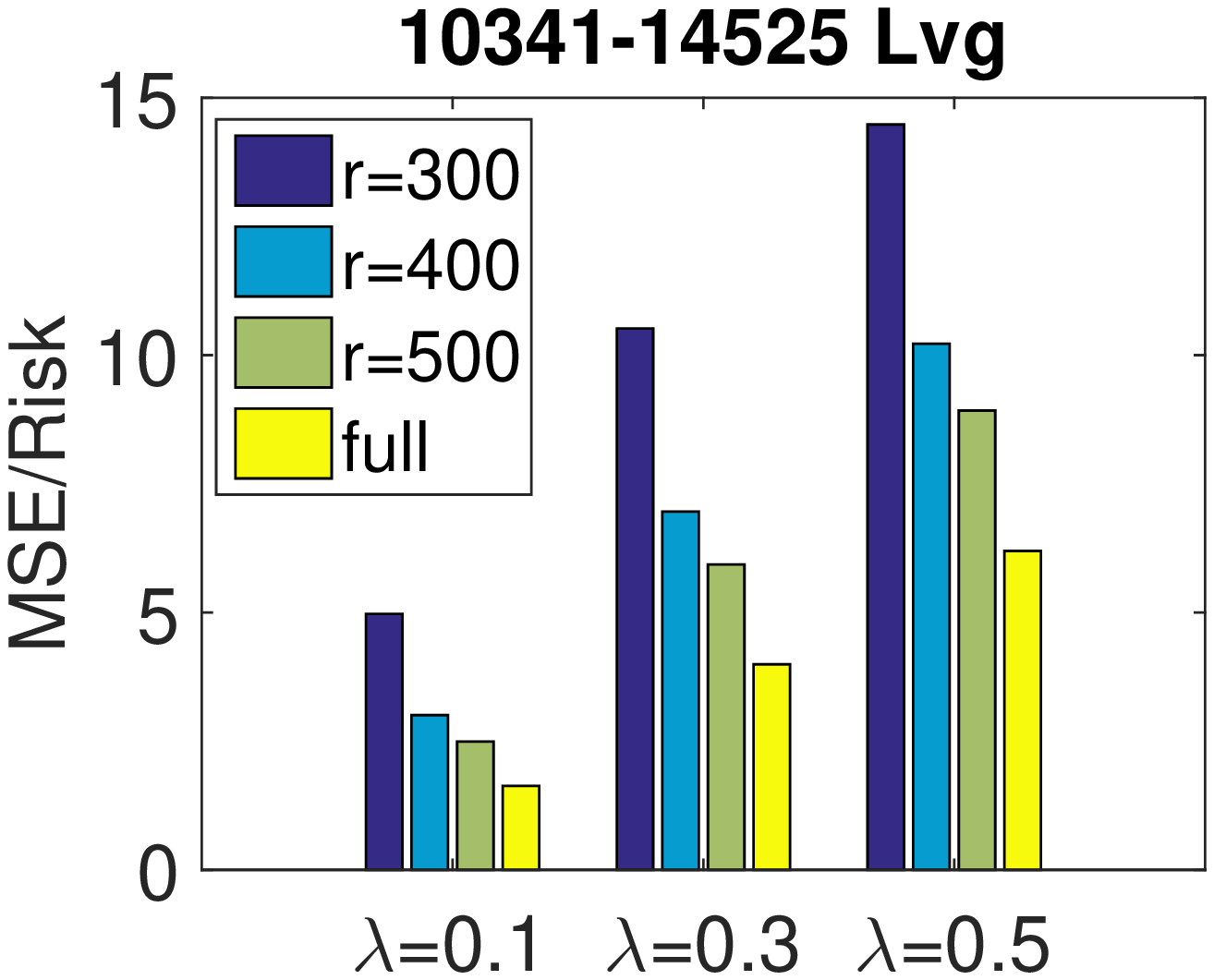}
\includegraphics[height = 45mm,width= 0.49\columnwidth]{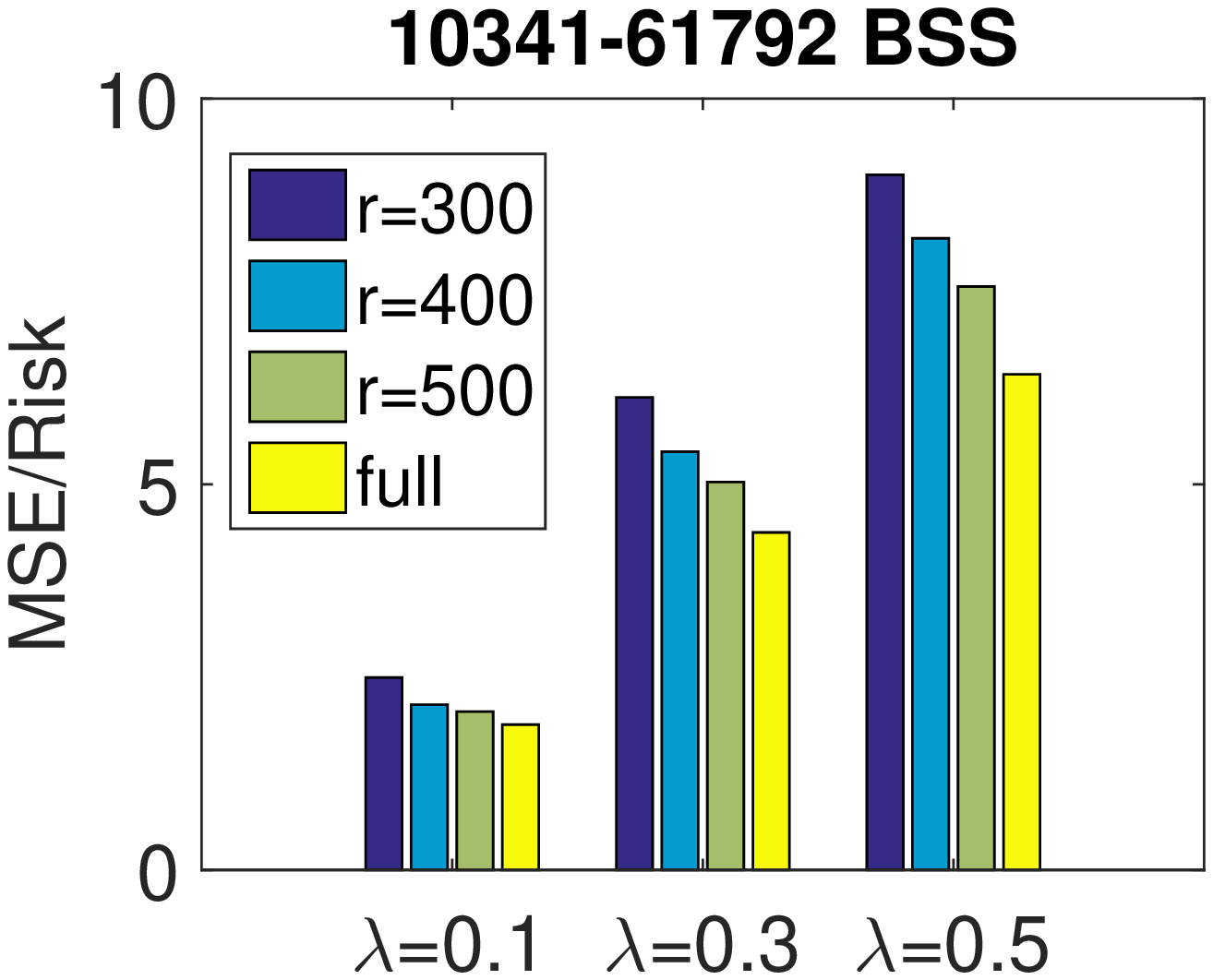}
\includegraphics[height = 45mm,width= 0.49\columnwidth]{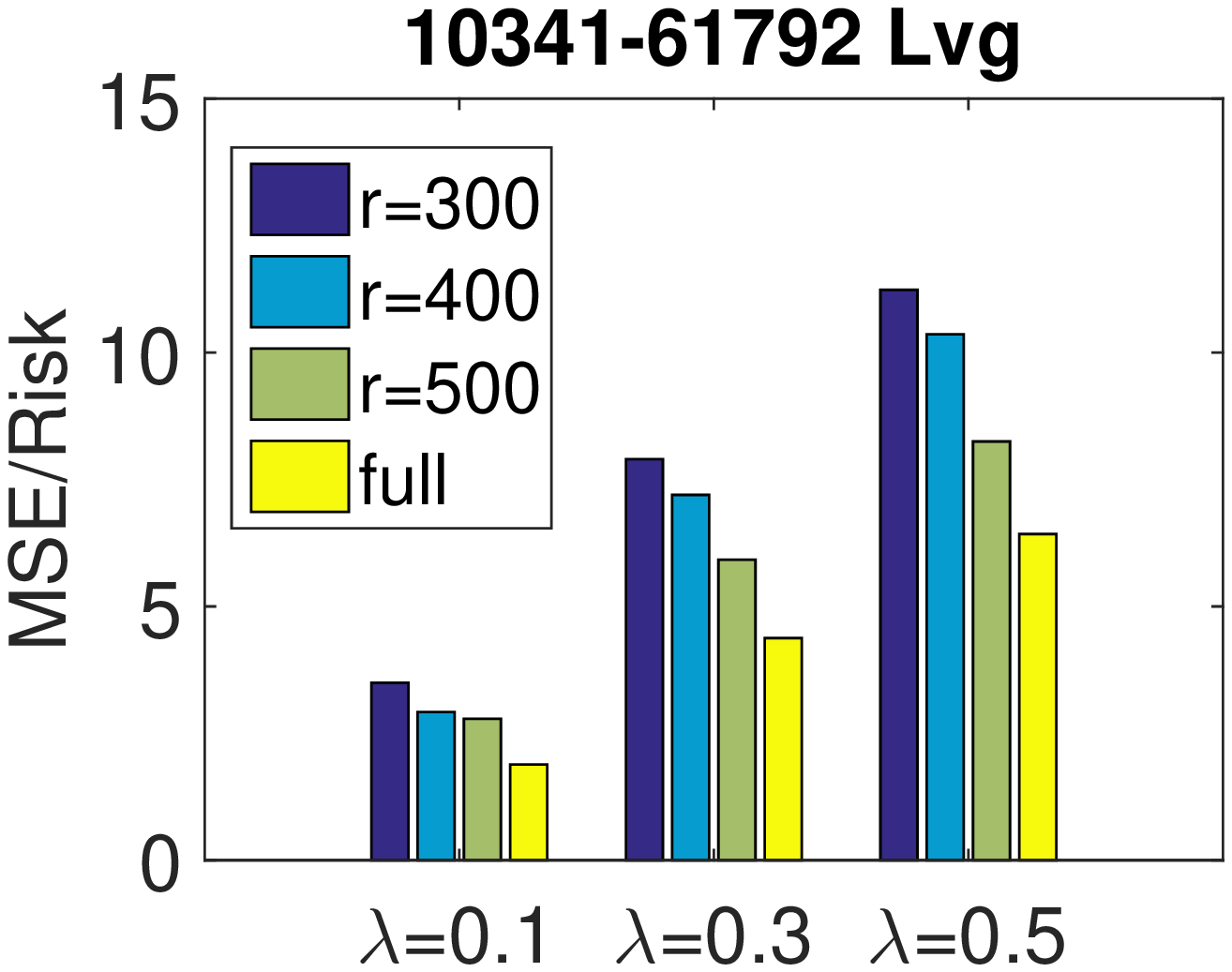}
\caption{MSE/Risk for TechTC-300 data using different feature selection methods as a function of $\lambda$. The risk after feature selection is comparable to the risk of full-data.}
\label{fig:risk_techtc}
\end{figure}

\begin{table}[!htb]
\caption{\small Running time of various feature selection methods in seconds. For synthetic data, the running time corresponds to the experiment when $r=8n.$ For TechTC-300, the running time corresponds to the experiment when $r=400.$}
\label{tab:rr_trun}
\begin{center}
\begin{small}
\begin{tabular}{|c||c|c|c|c|}
\hline
 &Synthetic Data  &TechTC (10341-14525) &TechTC (10341-61792)  \\ \hline
BSS &0.3368 &68.8474 &67.013  \\ \hline
LVG &0.0045 &0.3994 &0.3909 \\ \hline
\end{tabular}
\end{small}
\end{center}
\end{table}

\subsubsection{TechTC-300}
We use two TechTC-300 datasets, namely ``10341-14525" and ``10341-61792" to illustrate our theory. 
We add gaussian noise to the labels. We set the value of $r,$ the number of features to be selected to 300, 400 and 500. The value of $\lambda$ was set to 0.1, 0.3 and 0.5. We compared the risk of ridge regression using BSS and leverage-score sampling with the risk of full-feature selection and report the MSE/Risk in the fixed design setting as a measure of accuracy. Fig~\ref{fig:risk_techtc} shows the risk of real data for both BSS and leverage-score sampling as a function of $\lambda$. The risk of the sampled data is comparable to the risk of the full-data in most cases, which follows from our theory. The risk of the sampled data decreases with increase in $r.$ The time to perform feature selection is approximately a minute for BSS and less than a second for leverage-score sampling (See Table~\ref{tab:rr_trun}).

\section{Conclusion}
We present a provably accurate feature selection method for RLSC which works well empirically and also gives better generalization peformance than prior existing methods. The number of features required by BSS is of the order $O\left(n/\epsilon^2\right)$, which makes the result tighter than that obtained by leverage-score sampling. BSS has been recently used as a feature selection technique for k-means clustering \citep{Bouts13}, linear SVMs \citep{Paul14} and our work on RLSC helps to expand research in this direction. The risk of ridge regression in the sampled space is comparable to the risk of ridge regression in the full feature space in the fixed design setting and we observe this in both theory and experiments. An interesting future work in this direction would be to include feature selection for non-linear kernels with provable guarantees.

\noindent \textbf{Acknowledgements.}
Most of the work was done when SP was a graduate student at RPI.
This work is supported by NSF CCF 1016501 and NSF IIS 1319280.

\bibliographystyle{apalike}
\bibliography{references}

\end{document}